\newcommand{\norm}[1]{\left\lVert#1\right\rVert}
\newcommand{\Lcal}[0]{\mathcal{L}}
\newcommand{\Ocal}[0]{\mathcal{O}}
\newcommand{\Zcal}[0]{\mathcal{Z}}
\newcommand{\expect}{\operatorname{\mathbb{E}}}
\newcommand{\Var}{\operatorname{Var}}
\newcommand{\VAR}[1]{\Var\left[#1\right]}
\newcommand{\E}[1]{\expect\left[#1\right]}
\newcommand{\EE}[2]{\expect_{#1}\left[#2\right]}
\newcommand{\inner}[2]{\langle #1, #2 \rangle}
\newtheorem{lemma}{Lemma}
\newtheorem{theorem}{Theorem}
\newtheorem{assumption}{Assumption}
\newtheorem{corollary}{Corollary}
\newtheorem{example}{Example}
\def\eqref#1{equation~\ref{#1}}
\def\ceil#1{\left \lceil #1 \right \rceil}
\def\floor#1{\left \lfloor #1 \right \rfloor}
\def\1{\bm{1}}
\def\vDelta{{\bm{\Delta}}}
\def\vtheta{{\bm{\theta}}}
\def\vxi{{\bm{\xi}}}
\def\vx{{\bm{x}}}
\def\vy{{\bm{y}}}
\def\vz{{\bm{z}}}
\DeclareMathAlphabet{\mathsfit}{\encodingdefault}{\sfdefault}{m}{sl}
\SetMathAlphabet{\mathsfit}{bold}{\encodingdefault}{\sfdefault}{bx}{n}
\begin{document}

\title{A General Theory for Federated Optimization with Asynchronous and Heterogeneous Clients Updates}

\author{%
	Yann Fraboni\\
	Universit\'e C\^{o}te d'Azur, Inria Sophia Antipolis,\\
	Epione Research Group, France\\
	Accenture Labs, Sophia Antipolis, France\\
	% 	\texttt{yann.fraboni@inria.fr}
	\And
	Richard Vidal \\
	Accenture Labs, Sophia Antipolis, France\\ 
	\And
	Laetitia Kameni \\
	Accenture Labs, Sophia Antipolis, France\\
	\And
	Marco Lorenzi \\
	Universit\'e C\^{o}te d'Azur, Inria Sophia Antipolis,\\
	Epione Research Group, France\\
}

% \editor{}

\maketitle

\begin{abstract}%   <- trailing '%' for backward compatibility of .sty file
	We propose a novel framework to study asynchronous federated learning optimization with delays in gradient updates. Our theoretical framework extends the standard \textsc{FedAvg} aggregation scheme by introducing stochastic aggregation weights to represent the variability of the clients update time, due for example to heterogeneous hardware capabilities. Our formalism applies to the general federated setting where clients have heterogeneous datasets and perform at least one step of stochastic gradient descent (SGD). We demonstrate convergence for such a scheme and provide sufficient conditions for the related minimum to be the optimum of the federated problem. We show that our general framework applies to existing optimization schemes including centralized learning, \textsc{FedAvg}, asynchronous \textsc{FedAvg}, and \textsc{FedBuff}. The theory here provided allows drawing meaningful guidelines for designing a federated learning experiment in heterogeneous conditions. In particular, we develop in this work \textsc{FedFix}, a novel extension of \textsc{FedAvg} enabling efficient asynchronous federated training while preserving the convergence stability of synchronous aggregation. We empirically demonstrate our theory on a series of experiments showing that asynchronous \textsc{FedAvg} leads to fast convergence at the expense of stability, and we finally demonstrate  the improvements of \textsc{FedFix} over synchronous and asynchronous \textsc{FedAvg}.
\end{abstract}

% \begin{keywords}
% 	federated optimization, asynchronicity, heterogeneous data.
% \end{keywords}
\section{Introduction}

Federated learning (FL) is a training paradigm enabling different clients to jointly learn a global model without sharing their respective data. Federated learning is a generalization of distributed learning (DL), which was first introduced to optimize a given model in star-shaped networks composed of a server communicating with computing machines \citep{Bertsekas1989, Nedi2001, SlowLearnersAreFast}. 
In DL, the server owns the dataset and distributes it across machines. 
At every optimization round, the machines return the estimated gradients, and the server aggregates them to perform an SGD step. DL was later extended to account for SGD, and FL extends DL to enable optimization without sharing data between clients. Typical federated training schemes are based on the averaging of clients model parameters optimized locally by each client, such as in \textsc{FedAvg} \citep{FedAvg}, where at every optimization round clients perform a fixed amount of stochastic gradient descent (SGD) steps initialized with the current global model parameters, and subsequently return the optimized parameters to the server. The server computes the new global model as the average of the clients updates weighted by their respective data ratio.

A key methodological difference between the optimization problem solved in FL and the one of DL lies in the assumption of potentially non independent and identically distributed (iid) data instances \citep{OpenProblems, YangFML}.
Proving convergence in the non-iid setup is more challenging, and in some settings, \textsc{FedAvg} has been shown to converge to a sub-optimum, e.g. when each client performs a different amount of local work \citep{FedNova}, or when clients are not sampled in expectation according to their importance \citep{PowerOfChoice}. 

A major drawback of \textsc{FedAvg} concerns the time needed to complete an optimization round, as the server must wait for all the clients to perform their local work to \textit{synchronize} their update and create a new global model.
As a consequence, due to the potential heterogeneity of the hardware across clients, the time for an optimization round is conditioned to the one of the slowest update, while the fastest clients stay idle once they have sent their updates to the server. 
To address these limitations, asynchronous FL has been proposed to take full advantage of the clients computation capabilities \citep{AFLsurvey, Hogwild2, KoloskovaGossip, TamingTheWild}. 
In the asynchronous setting, whenever the server receives a client's contribution, it creates a new global model and sends it back to the client.
In this way, clients are never idle and always perform local work on a different version of the global model.
While asynchronous FL has been investigated in the iid case \citep{ErrorFeedbackFramework}, a unified theoretical and practical investigation in the non-iid scenario is currently missing.

This work introduces a novel theoretical framework for asynchronous FL based on the generalization of the aggregation scheme of \textsc{FedAvg}, where asynchronicity is modeled as a stochastic process affecting clients' contribution at a given federated aggregation step.
More specifically, our framework is based on a stochastic formulation of FL, where clients are given stochastic aggregation weights dependent on their effectiveness in returning an update. 
%While with stochastic \textsc{FedAvg} clients contribute to the new global model with probability 1, our framework associates to each client's weight a probability for sending an update at a given optimization round. 
Based on this formulation, we provide sufficient conditions for asynchronous FL to converge, and we subsequently give sufficient conditions for convergence to the FL optimum of the associated synchronous FL problem. 
Our conditions depend on the clients computation time (which can be eventually estimated by the server), and are independent from the clients data heterogeneity, which is usually unknown to the server.

With asynchronous FL, the server only waits for one client contribution to create the new global. As a result, optimization rounds are potentially faster even though the new global improves only for the participating client at the detriment of the other ones. This aspect may affect the stability of asynchronous \textsc{FedAvg} as compared to synchronous \textsc{FedAvg} and, as we demonstrate in this work, even diverge in some cases. To tackle this issue, we propose \textsc{FedFix}, a robust asynchronous FL scheme, where new global models are created with all the clients contributions received after a fixed amount of time.
We prove the convergence of \textsc{FedFix} and verify experimentally that it outperforms standard asynchronous \textsc{FedAvg} in the considered experimental scenarios.

The paper is structured as follows. 
In Section \ref{sec:background}, we introduce our aggregation scheme and the close-form of its aggregation weights in function of the clients computation capabilities and the considered FL optimization routine.
Based on our aggregation scheme, in Section \ref{sec:convergence}, we provide convergence guarantees, and we give sufficient conditions for the learning procedure to converge to the optimum of the FL optimization problem. 
In Section \ref{sec:applications}, we apply our theoretical framework to synchronous and asynchronous \textsc{FedAvg}, and show that our work extends current state-of-the-art approaches to asynchronous optimization in FL. Finally, in Section \ref{sec:experiments}, we demonstrate experimentally our theoretical results.

\section{Background}\label{sec:background}

We define here the formalism  required by the theory that will be introduced in the following sections.
We first introduce in Section \ref{subsec:background_FedAvg} the FL optimization problem, and we adapt it in section \ref{subsec:formalize_delays} to account for delays in client contributions. We then generalize in Section \ref{subsec:agg_schem} the \textsc{FedAvg} aggregation scheme to account for contributions delays. 
In Section \ref{subsec:FL_one_SGD}, we introduce the notion of virtual global models as a direct generalization of gradient descent, and introduce in Section \ref{subsec:sequence_opt_problems} the final asynchronous FL optimization problem. Finally, we introduce in Section \ref{subsec:heterogeneity_across_clients} a formalization of the concept of data heterogeneity across clients.

\subsection{Federated Optimization Problem}\label{subsec:background_FedAvg}

We have $M$ participants owning $n_i$ data points $\{\vz_{k, i}\}_{k=1}^{n_i}$ independently sampled from a fixed 
unknown distribution over a sample space $\{\mathcal{Z}_i\}_{i=1}^M$.
We have $\vz_{k, i} = (\vx_{k, i}, \vy_{k, i})$ for supervised learning, where $\vx_{k, i}$ is the input of the statistical model,  and $\vy_{k, i}$ its desired target, while we denote $\vz_{k, i} = \vx_{k, i}$ for unsupervised learning.
Each client optimizes the model's parameters $\vtheta$ based on the estimated local loss $l(\vtheta, \vz_{k, i})$. 
The aim of FL is solving a distributed optimization problem associated with the averaged loss across clients
\begin{equation}
\Lcal(\vtheta) 
\coloneqq \EE{z \sim \hat{\mathcal{Z}}}{l(\vtheta, \vz)}
= \frac{1}{\sum_{i=1}^M n_i}\sum_{i=1}^M \sum_{k=1}^{n_i} l(\vtheta, \vz_{k, i}) \label{eq:original_problem}
,
\end{equation}
where the expectation is taken with respect to the sample distribution $\hat{\mathcal{Z}}$ across the $M$ participating clients.
We consider a general form of the federated loss of equation (\refeq{eq:original_problem}) where 
clients local losses are weighted by an associated parameter $p_i$ such that $\sum_{i=1}^{n} p_i = 1$, i.e.
\begin{equation}
\label{eq:original_problem_general}
\Lcal(\vtheta) = \sum_{i=1}^Mp_i \Lcal_i(\vtheta)
\text{ s.t. }
\Lcal_i(\vtheta) = \frac{1}{n_i}\sum_{k=1}^{n_i} l(\vtheta, \vz_{k, i})
.
\end{equation}
%where $\vtheta$ represents the model parameters. 
The weight $p_i$ can be interpreted as the importance given by the server to client $i$ in the federated optimization problem. While any combination of $\{p_i\}$ is possible, we note that in typical FL formulations, either (a) every client has equal importance, i.e. $p_i = 1 / M$, or (b) every data point is equally important, i.e. $p_i = n_i / \sum_{i=1}^Mn_i$. 
%Unless stated otherwise, in the rest of this work, we consider $\exists i, \ p_i \neq 1/n$.

%In this setting, to estimate a global model across clients, \textsc{FedAvg} \citep{FedAvg} is an iterative training strategy based on the aggregation of local model parameters. At each iteration step $n$, the server sends the current global model parameters $\vtheta^n$ to the clients. Each client updates the respective model by minimizing the local cost function $\Lcal_i(\vtheta)$ through a fixed amount $K$ of SGD initialized with $\vtheta^n$. Subsequently each client returns the updated local parameters ${\vtheta}_i^{n+1}$ to the server. The global model parameters $\vtheta^{n+1}$ at the iteration step $n+1$ are then estimated as a weighted average
%\begin{equation}
%\label{eq:FedAvg_server_aggregation}
%\vtheta^{n+1} = \sum_{i=1}^M p_i \vtheta_i^{(n , K)}
%.
%\end{equation}

%To estimate a global model across clients, \textsc{FedAvg} \citep{FedAvg} is an iterative training strategy based on the aggregation of local model parameters. 
%We define by $\vtheta^n$ the global model after $n$ server aggregations and $\vtheta_i^{(n, k)}$ the local model of client $i$ after $k$ SGD steps initialized with $\vtheta^n$. We set $\vtheta_i^{(n, 0)} = \vtheta^n$.

\subsection{Asynchronicity in Clients Updates}\label{subsec:formalize_delays}

An optimization round starts at time $t^n$ with global model $\vtheta^n$, finishes at time $t^{n+1}$ with the new global model $\vtheta^{n+1}$, and takes $\Delta t^{n} = t^{n+1} - t^n$ time to complete. No assumptions are made on $\Delta t^{n}$, which can be a random variable, and we set for convenience $t^0 = 0$. In this section, we introduce the random variables needed to develop in Section \ref{subsec:agg_schem} the server aggregation scheme connecting two consecutive global models $\vtheta^n$ and $\vtheta^{n+1}$.

We define the random variable $T_i$ representing the update time needed for client $i$ to perform its local work and send it to the server for aggregation. $T_i$ depends on the client computation and communication hardware, and is assumed to be independent from the current optimization round $n$. 
If the server sets the FL round time to $\Delta t^{n} = \max_i T_i$, the aggregation is performed by waiting for the contribution of every client, and we retrieve the standard client-server communication scheme of synchronous \textsc{FedAvg}.

With asynchronous \textsc{FedAvg}, we need to relate $T_i$ to the server aggregation time $\Delta t^n$. 
We introduce $\rho_i(n)$ the index of the most recent global model received by client $i$ at optimization round $n$ and, by construction, we have $0 \le \rho_i(n) \le n$.
We define by 
\begin{equation}
	T_i^n 
	\coloneqq T_i - (t^n - t^{\rho_i(n)})
\end{equation}the remaining time at optimization round $n$ needed by client $i$ to complete its local work. 
%Similarly to $\Delta t^n$, no assumptions are made for $T_i$ or $T_i^n$, which can thus both be random variables. 

Comparing $T_i^n$ with $\Delta t^n$ indicates whether a client is participating to the optimization round or not, through the stochastic event $\mathbb{I}( T_i^n \le \Delta t^{n})$. When $\mathbb{I}( T_i^n \le \Delta t^n) = 1$, the local work of client $i$ is used to create the new global model $\vtheta^{n+1}$, while client $i$ does not contribute when $\mathbb{I}( T_i^n \le \Delta t^n) = 0$. With synchronous \textsc{FedAvg}, we retrieve $\mathbb{I}( T_i^n \le \Delta t^n) = \mathbb{I}( T_i \le \max_i T_i) = 1$ for every client.

\begin{figure}
	\includegraphics[width=\linewidth]{./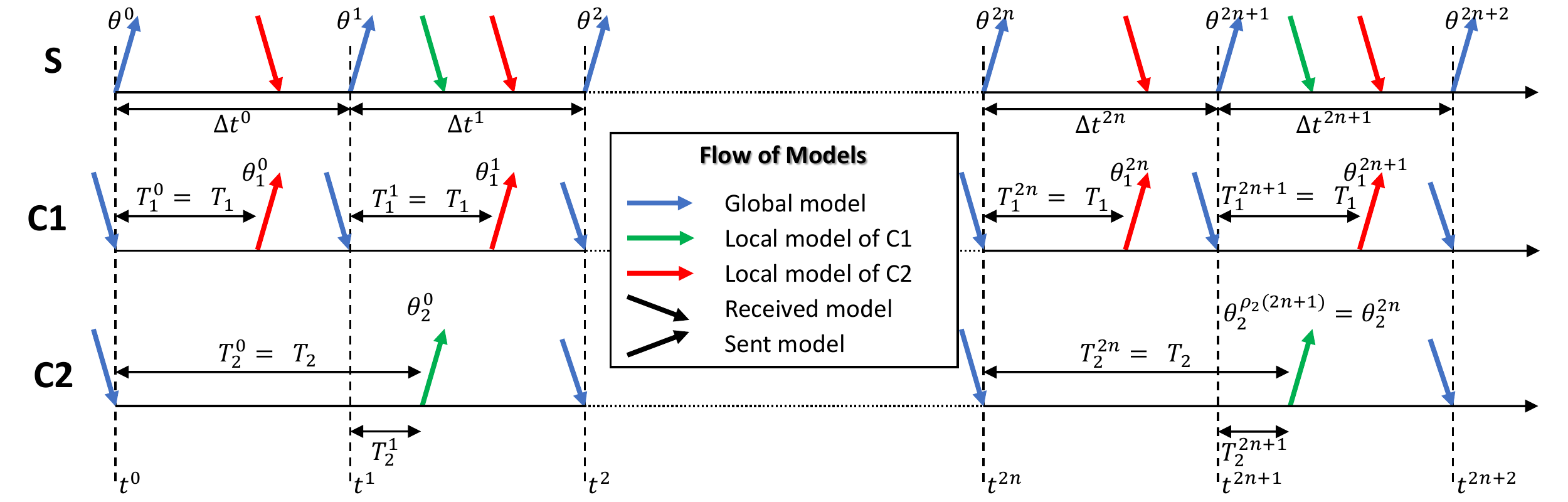}
	\caption{Illustration of the time notations introduced in Section \ref{subsec:formalize_delays} with $M=2$ clients. The frequency of the updates of Client 1 (C1) is twice the one of Client 2 (C2). If the server (S) creates the new global model after every fixed waiting time ($\Delta t^n = \Delta t$), C1 contributes at every optimization round, while C2 contributes once every two rounds. This aggregation policy define the federated learning strategy \textsc{FedFix} (Section \ref{subsec:FedFix})}
	\label{fig:illustration}
\end{figure}

Figure \ref{fig:illustration} illustrates the notations described in this section in a FL process with $M=2$ clients.

\subsection{Server Aggregation Scheme}\label{subsec:agg_schem}

We consider $\vDelta_i(n)$ the contribution of client $i$ received by the server at optimization round $n$.
In the rest of this work, we consider that clients perform $K$ steps of SGD on the model they receive from the server. By calling their trained model $\vtheta_i^{(n, k)}$ after $k$ SGD, we can rewrite clients contribution for \textsc{FedAvg} as $\vDelta_i(n) \coloneqq \vtheta_i^{(n, K)} - \vtheta^n$, and the \textsc{FedAvg} aggregation scheme as
\begin{equation}
\label{eq:FedAvg_server_aggregation}
\vtheta^{n+1} 
\coloneqq \vtheta^n + \sum_{i=1}^M p_i \vDelta_i(n)
.
\end{equation}
With \textsc{FedAvg}, the server waits for every client to send its contribution $\vDelta_i(n)$ to create the new global model. To allow for partial computation within the server aggregation scheme, we introduce the aggregation weight $d_i(n)$ corresponding to the weight given by the server to client $i$ at optimization round $n$. We can then define the stochastic aggregation weight $\omega_i(n)$ given to client $i$ at optimization step $n$ as
\begin{equation}
\label{eq:agg_weights}
\omega_i(n) 
\coloneqq
\mathbb{I}( T_i^n \le \Delta t^{n}) d_i(n) ,
\end{equation}
with $\omega_i(n)= d_i(n)$ if client $i$ updated its work at optimization round $n$ and $\omega_i(n)= 0$ otherwise. 
In the general setting, client $i$ receives $\vtheta^{\rho_i(n) }$ and its contribution is $\vDelta_i(\rho_i(n)) = \vtheta_i^{(\rho_i(n), K) }- \vtheta^{\rho_i(n)}$. 
By weighting each delayed contribution $\vDelta_i(\rho_i(n))$ with its stochastic aggregation weight $\omega_i(n)$, we propose the following aggregation scheme  
\begin{equation}
\label{eq:aggreg_SCAFFOLD_general}
\vtheta^{n+1}
\coloneqq \vtheta^n + \eta_g \sum_{i =1}^M \omega_i(n)\vDelta_i(\rho_i(n)),
\end{equation}
where $\eta_g$ is a global learning rate that the server can use to mitigate the disparity in clients contributions \citep{reddi2021adaptive, SCAFFOLD, Wang2020SlowMo}. 
%This disparity can be caused either by the clients data heterogeneity, consideration of a subset of clients, or delayed clients local work. 
Equation (\ref{eq:aggreg_SCAFFOLD_general}) generalizes FedAvg aggregation scheme (\refeq{eq:FedAvg_server_aggregation}) ($\eta_g =1$ and $\Delta t^n = \max_i T_i$), and the one of \cite{OnTheImpact} based on client sampling.

%In this work, we use aggregation scheme (\refeq{eq:aggreg_SCAFFOLD_general}) to provide a unique theoretical framework under which we can determine the convergence guarantees of a wide range of asynchronous federated optimization schemes (Section \ref{sec:applications}). In particular, under the conditions $\Delta t^n = \max_i T_i$, when $\vDelta_i(n) = \vtheta_i^{(n , K)} - \vtheta^n$, $\omega_i(n) = p_i$ with $\eta_g =1$, equation (\refeq{eq:aggreg_SCAFFOLD_general}) reduces to the standard aggregation scheme of \textsc{FedAvg}.

%To account for delays in local work manifesting in $\vDelta_i(n)$. 
%The local work of client $i$ is thus performed with initialization $\vtheta^{(\rho_i(n), 0) }$ and with resulting contribution $\vDelta_i(n) = \vtheta_i^{(\rho_i(n), K) }- \vtheta^{\rho_i(n)}$. 
%With \textsc{FedAvg}, the server waits for every client so $\rho_i(n) =n$ and we retrieve $\vDelta_i(n) = \vtheta_i^{ (n, K) }- \vtheta^{n}$. 

We introduce with Algorithm \ref{alg:FL_with_delayed_gradients} the implementation of the optimization schemes satisfying aggregation scheme (\refeq{eq:aggreg_SCAFFOLD_general}) with stochastic aggregation weights satisfying equation (\refeq{eq:agg_weights}).

% \label{alg:FL_with_delayed_gradients}
\begin{algorithm}
	\caption{Asynchronous Federated Learning based on equation (\ref{eq:aggreg_SCAFFOLD_general})
	}\label{alg:FL_with_delayed_gradients}
	\begin{algorithmic}[1]
		\Require server learning rate $\eta_g$, aggregation weights $\{d_i(n)\}$, number of SGD $K$, learning rate $\eta_l$, batch size $B$, aggregation time policy $\Delta t^n$.
		\State The server sends to the $M$ clients the learning parameters ($K$, $\eta_l$, $B$) and the initial global model $\vtheta^0$.
		
		\For{$n \in \{0, ..., N-1\}$}
		
		\State Clients in $S_n = \{i: T_i^n \le \Delta t^n\}$ send their contribution $\vDelta_i(\rho_i(n)) = \vtheta_i^{\rho_i(n)+1} -\vtheta^{\rho_i(n)}$ to the server.
		
		\State The server creates the new global model $\vtheta^{n+1} = \vtheta^n + \eta_g \sum_{ i \in S_n } d_i(n) \vDelta_i(\rho_i(n))$, equation (\refeq{eq:aggreg_SCAFFOLD_general}).
		
		\State The global model  $\vtheta^{n+1}$ is sent back to the clients in $S_n$.
		
		\EndFor
	\end{algorithmic}
\end{algorithm}

\subsection{Expressing FL as cumulative GD steps}\label{subsec:FL_one_SGD}
To obtain the tightest possible convergence bound, we consider a convergence framework similar to the one of \cite{OnTheConvergence} and \cite{Khaled2020}. We introduced the aggregation rule for the server global models $\{\vtheta^n\}$ in Section \ref{subsec:agg_schem}, and we generalize it in this section by introducing the virtual sequence of global models $\vtheta^{n, k}$. This sequence corresponds to the \textit{virtual} global model that would be obtained with the clients contribution at optimization round $n$ computed on $k\le K$ SGD, i.e.
\begin{equation}
\vtheta^{n, k}
\coloneqq \vtheta^n  + \eta_g \sum_{ i = 1 }^M \omega_i(n) \left[\vtheta_i^{(\rho_i(n), k)} - \vtheta^{\rho_i(n)}\right]
.
\end{equation}
We retrieve $\vtheta^{n, 0} = \vtheta^n $ and $\vtheta^{n, K} = \vtheta^{n+1, 0} = \vtheta^{n+1}$. The server has not access to  $\vtheta^{n, k}$ when $k\neq 0$ or $k\neq K$. Hence the name virtual for the model $\vtheta^{n , k}$.

The difference between two consecutive global models in our virtual sequence depends on the sum of the differences between local models $\vtheta_i^{\rho_i(n), k+1} - \vtheta_i^{\rho_i(n), k} = - \eta_l \nabla \Lcal_i(\vtheta_i^{\rho_i(n), k}, \vxi_i)$, where $\vxi_i$ is a random batch of data samples of client $i$. Hence, we can rewrite the aggregation process as a GD step with
\begin{equation}
\vtheta^{n, k+1} =
\vtheta^{n, k} - \eta_g \eta_l \sum_{ i = 1 }^M \omega_i(n) \nabla \Lcal_i(\vtheta_i^{\rho_i(n), k}, \vxi_i)
.
\end{equation}

\subsection{Asynchronous FL as a Sequence of Optimization Problems}\label{subsec:sequence_opt_problems}

For the rest of this work, we define $q_i(n) \coloneqq \E{\omega_i(n)}$, the expected aggregation weight of client $i$ at optimization round $n$. 
No assumption is made on $q_i(n)$ which can vary across optimization rounds. The expected clients contribution $\sum_{ i = 1 }^M q_i(n)\vDelta_i(n)$ help minimizing the optimization problem $\Lcal^n$ defined as
\begin{equation}
\label{eq:surrogate_loss_fucntion}
\Lcal^n(\vtheta) 
\coloneqq
%\frac{1}{\bar{q}^n}
\sum _{i=1}^M q_i(n) \Lcal_i(\vtheta)
.
\end{equation}
We denote by $\bar{\vtheta}^n$ the optimum of $\Lcal^n$ and by $\vtheta^*$ the optimum of the optimization problem $\Lcal$ defined in equation (\refeq{eq:original_problem_general}). Finally, we define by $q_i = \frac{1}{N}\sum_{ n = 0 }^{N-1}q_i(n)$ the expected importance given to client $i$ over the $N$ server aggregations during the FL process,
and by $\tilde{q}_i(n)$ the normalized expected importance $\tilde{q}_i(n) = q_i(n) / (\sum_{ i = 1 }^M q_i(n))$. 
We define by $\bar{\Lcal}$ the associated optimization problem 
\begin{equation}
\label{eq:surrogate_problem}
\bar{\Lcal}(\vtheta) 
\coloneqq
%\frac{1}{\bar{q}^n}
\sum _{i=1}^M q_i \Lcal_i(\vtheta)
=
\frac{1}{N}\sum _{ n = 0}^{N-1} \Lcal^n(\vtheta)
,
\end{equation}
and we denote by $\bar{\vtheta}$ the associated optimum.

Finally, we introduce the following expected convergence residual, which quantifies the variance at the optimum in function of the relative clients importance $q_i(n)$ 
\begin{equation}
\Sigma
\coloneqq \sum_{ i = 1 }^M q_i \EE{\vxi_i}{\norm{\nabla \Lcal_i(\bar{\vtheta}, \vxi_i)}^2}
.
\end{equation}
% \begin{equation}
% \Sigma_1 
% \coloneqq \sum_{ i = 1 }^M q_i \EE{\vxi_i}{\norm{\nabla \Lcal_i(\bar{\vtheta}, \vxi_i)}^2}
% \text{ and }
% \Sigma_2 
% \coloneqq \sum_{ i = 1 }^M \left[ \frac{1}{N}\sum_{ n = 0 }^{N-1} q_i^2(n) \right] 
% \EE{\vxi_i}{\norm{\nabla \Lcal_i(\bar{\vtheta}, \vxi_i)}^2}
% .
% \end{equation}
The convergence guarantees provided in this work (Section \ref{sec:convergence}) are proportional to the expected convergence residual.
$\Sigma$ is positive and null only when clients have the same loss function and perform GD steps for local optimization.

\subsection{Formalizing Heterogeneity across Clients}\label{subsec:heterogeneity_across_clients}

We assume the existence of $J \le M$ different clients feature spaces $\Zcal_i$ and, without loss of generality, assume that the first $J$ clients feature spaces are different. 
This formalism allows us to represent the heterogeneity of data distribution across clients. 
In DL problems, we have $J <M$ when the same dataset split is accessible to many clients. 
When clients share the same distribution, we assume that their optimization problem is equivalent. In this case, we call $F_j(\vtheta)$ their loss function with optimum $\vtheta_j^*$.
The federated problem of equation (\refeq{eq:original_problem_general}) can thus be formalized with respect to the discrepancy between the clients feature spaces $\Zcal_i$. To this end, we define $Q_j$ the set of clients with the same feature space of client $j$, i.e. $Q_j \coloneqq \{i: \Zcal_i = \Zcal_j\}$. Each feature space as thus importance $r_j = \sum_{ i \in Q_j } p_i$, and expected importance $s_j(n) = \sum_{ i \in Q_j } q_i(n)$ such that
\begin{equation}
\Lcal (\vtheta) 
= \sum_{j=1}^J r_j F_j (\vtheta)
\text{ and }
\Lcal^n (\vtheta)
= \sum_{j=1}^J s_j(n) F_j(\vtheta)
.
\end{equation}
As for $\tilde{q}_i(n)$, we define $\tilde{s}_j(n) = s_j(n)/ \sum_{ i = 1 }^M s_j(n)$. 

In Table \ref{tab:aggregation_weights}, we summarize the different weights used to adapt the federated optimization problem to account respectively for heterogeneity in clients importance and data distributions across rounds. 

%, e.g. in distributed learning where every client has access to the same dataset. 
%In the FL setting, considering that two clients have the same loss function and associated minimum is also a reasonable approximation if we assume that their samples are drawn from the same data distribution, e.g. when the server splits a dataset among clients.

\begin{table}
	\begin{center}
		\begin{tabular}{ | c || c | c |  }
			\hline
			& Client $i$ & Sample distribution $j$ \\
			\hline
			Importance   
			& $p_i$ 
			& $r_j$
			\\
			Stochastic aggregation weight 
			&   $\omega_i(n)  $
			& -
			\\
			Aggregation weight 
			& $d_i(n)$ 
			& -
			\\
			Expected agg. weight 
			& $q_i(n)$ 
			& $s_j(n)$
			\\
			Normalized expected agg. weight 
			& $\tilde{q}_i(n)$ 
			& $\tilde{s}_j(n)$
			\\
			Expected agg. weight over $N$ rounds 
			& $q_i$ 
			& $s_j$
			\\
			\hline
		\end{tabular}
	\end{center}
\caption{The different weights used to account for the importance of clients or data distributions at every optimization round and during the full FL process.}
\label{tab:aggregation_weights}
\end{table}

\section{Convergence of Federated Problem (\ref{eq:original_problem_general})}\label{sec:convergence}

In this section, we prove the convergence of the optimization based on the stochastic aggregation scheme defined in equation (\ref{eq:aggreg_SCAFFOLD_general}), with implementation given in Algorithm \ref{alg:FL_with_delayed_gradients}. 
We first introduce in Section \ref{subsec:assumptions} the necessary assumptions and then prove with Theorem \ref{theo:convergence_convex} the convergence of the sequence of optimized models (Section \ref{subsec:convergence_convex}).
We show in Section \ref{subsec:sufficient_conditions} the implications of Theorem \ref{theo:convergence_convex} on the convergence of the federated problem (\ref{eq:original_problem_general}), and propose sufficient conditions for the learnt model to be the associated optimum. Finally, with two additional assumptions, we propose in Section \ref{subsec:relaxed_sufficient} simpler and practical sufficient conditions for FL convergence to the optimum of the federated problem (\ref{eq:original_problem_general}).

\subsection{Assumptions}\label{subsec:assumptions}

We make the following assumptions regarding the Lipschitz smoothness and convexity of the clients local loss functions (Assumption \ref{ass:smoothness} and  \ref{ass:strong_convexity}), 
unbiased gradients estimators (Assumption \ref{ass:unbiased}), finite answering time for the clients (Assumption \ref{ass:answering_time}), and the clients aggregation weights (Assumption \ref{ass:clients_covariance}). 
Assumption \ref{ass:unbiased} \citep{Khaled2020} considers unbiased gradient estimators without assuming bounded variance, giving in turn more interpretable convergence bounds.
Assumption \ref{ass:clients_covariance} states that the covariance between two aggregation weights can be expressed as the product of their expected aggregation weight up to a positive multiplicative factor $\alpha$. We show in Section \ref{sec:applications} that Assumption \ref{ass:clients_covariance} is not limiting as it is satisfied by all the standard FL optimization schemes considered in this work.

\begin{assumption}[Smoothness]\label{ass:smoothness}
    Clients local objective functions are $L$-Lipschitz smooth, that is, $\forall i \in \{1, ..., n\},\ \norm{\nabla \Lcal_i(\vx) - \nabla \Lcal_i(\vy)} \le L \norm{\vx - \vy}$.
\end{assumption}

% \begin{assumption}[Strong convexity]\label{ass:strong_convexity}
% 	The clients local objective function is $\mu$-strongly convex, that is, $\forall i \in \{1, ..., n\},\ \inner{\nabla \Lcal_i(\vx)}{\vx - \vy}
% 	\ge \Lcal_i(\vx) - \Lcal_i(\vy) + \frac{\mu}{2}\norm{\vx - \vy}^2$.
% \end{assumption}
\begin{assumption}[Convexity]\label{ass:strong_convexity}
	Clients local objective functions are convex.
\end{assumption}

\begin{assumption}[Unbiased Gradient]\label{ass:unbiased}
	Every client stochastic gradient $g_i(\vx) = \nabla \Lcal_i(\vx, \vz_i)$ of a model with parameters $\vx$ evaluated on batch $\vz_i$ is an unbiased estimator of the local gradient, i.e. $\EE{\vz_i}{g_i(\vx)} = \nabla \Lcal_i(\vx)$.
\end{assumption}

\begin{assumption}[Finite Answering Time]\label{ass:answering_time}
	The server receives a client local work in at most $\tau \coloneqq \max_{i, n} (n - \rho_i(n))$ optimization steps, which satisfy $\mathbb{P}( \tau < \infty)  = 1$.
\end{assumption}

%We also consider the aggregation weights covariance satisfies Assumption \ref{ass:clients_covariance} \citep{OnTheImpact}. We provide in Section \ref{sec:applications} $\alpha$ close-form for all the considered optimization schemes. 
\begin{assumption}\label{ass:clients_covariance}
	There exists $\alpha \in (0, 1)$ such that $\E{\omega_i(n)\omega_j(n)} = \alpha q_i(n) q_j(n) $.
\end{assumption}

\subsection{Convergence of Algorithm \ref{alg:FL_with_delayed_gradients}}\label{subsec:convergence_convex}

We first prove with Theorem \ref{theo:convergence_convex} the convergence of Algorithm \ref{alg:FL_with_delayed_gradients}. 

\begin{theorem}\label{theo:convergence_convex}
	Under Assumptions \ref{ass:smoothness} to \ref{ass:clients_covariance}, 
	with $\eta_l \le 1/48KL
	\min \left(1, 1/ {3 \rho^{2} \eta_g (\tau+1)}\right)$,
	we obtain the following convergence bound:
	\begin{align}
	\frac{1}{N}\sum_{ n = 0 }^{N-1}\frac{1}{K}\sum_{ k = 0 }^{K-1} \left[ \E{\Lcal^n(\vtheta^{n, k})} - \Lcal^n(\bar{\vtheta}^n) \right]
	& \le R(\{\Lcal^n\})
	+ \epsilon_F
	+ \epsilon_K
	+ \epsilon_\alpha
	+ \epsilon_\beta
	,
	\end{align}
	where 
	\begin{align}
	R(\{\Lcal^n\})
	= \frac{1}{N}\sum_{ n = 0 }^{N-1} \left[ \Lcal^n(\bar{\vtheta}) - \Lcal^n(\bar{\vtheta}^n)\right]
	,
	&&
	\epsilon_{F}
    = \frac{1}{\tilde{\eta} K N}\norm{\vtheta^{0} - \bar{\vtheta}}^2
    ,
	\end{align}
	\begin{align}
	    \epsilon_K
		= \Ocal \left(\eta_l^2 (K-1)^2 \left[ R(\{\Lcal^n\}) + \Sigma_1 \right] \right)
		,
		&&
		\epsilon_\alpha
	    = \Ocal \left( \alpha \left[\tilde{\eta} + \tilde{\eta}^2 K^2 \tau^2 \right] \left[ R(\{\Lcal^n\}) + 
	    \max q_i(n) \Sigma \right] \right)
	    ,
	\end{align}
	\begin{align}
	    \epsilon_\beta = \Ocal\left(\beta \left[\tilde{\eta} + \tilde{\eta}^2 K^2 \tau^2 \right] \left[ R(\{\Lcal^n\}) + \Sigma\right]\right)
	    ,
	    &&
    	\tilde{\eta} 
    	= \eta_g \eta_l
    	,
    	&&
    	\beta 
    	\coloneqq \max\{d_i(n) - \alpha q_i(n)\}
    	,
	\end{align}
    and $\Ocal$ accounts for numerical constants and the loss function Lipschitz smoothness $L$.

\end{theorem}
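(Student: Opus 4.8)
The plan is to follow the standard ``virtual sequence'' analysis for \textsc{FedAvg}-type algorithms (as in \cite{OnTheConvergence} and \cite{Khaled2020}), but carefully tracking the three sources of slack that distinguish our setting: the delays $\rho_i(n)$ (controlled by $\tau$ via Assumption \ref{ass:answering_time}), the multiple local steps $K$, and the stochastic aggregation weights $\omega_i(n)$ (controlled by $\alpha$ and $\beta$ via Assumption \ref{ass:clients_covariance}). First I would establish a one-round descent inequality: using the GD reformulation $\vtheta^{n,k+1} = \vtheta^{n,k} - \tilde\eta \sum_i \omega_i(n)\nabla\Lcal_i(\vtheta_i^{(\rho_i(n),k)},\vxi_i)$, expand $\norm{\vtheta^{n,k+1} - \bar\vtheta}^2$, take expectations, and use unbiasedness (Assumption \ref{ass:unbiased}) to split into an inner-product ``progress'' term and a squared-norm ``noise'' term. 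The progress term is handled by convexity (Assumption \ref{ass:strong_convexity}) to produce $\Lcal^n(\vtheta^{n,k}) - \Lcal^n(\bar\vtheta)$ up to correction terms, and here is where $\E{\omega_i(n)} = q_i(n)$ enters to turn the weighted sum into $\Lcal^n$; the cross terms $\E{\omega_i(n)\omega_j(n)} = \alpha q_i(n)q_j(n)$ and the diagonal defect $d_i(n) - \alpha q_i(n) \le \beta$ generate the $\epsilon_\alpha$ and $\epsilon_\beta$ contributions.

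Second, I would bound the ``client drift'' — the discrepancy between the local iterate $\vtheta_i^{(\rho_i(n),k)}$ and the global model it should be compared against — which is the source of the $\epsilon_K$ term: a standard recursion on $\E{\norm{\vtheta_i^{(\rho_i(n),k)} - \vtheta^{\rho_i(n)}}^2}$ using $L$-smoothness and the step-size restriction $\eta_l \le 1/(48KL\cdot\ldots)$ gives a bound of order $\eta_l^2(K-1)^2$ times gradient norms at the relevant points, which are then related to $R(\{\Lcal^n\})$ and $\Sigma_1$ via smoothness and the optimality of $\bar\vtheta^n$. Separately, the delay itself must be absorbed: because client $i$'s contribution at round $n$ was computed at round $\rho_i(n)$, comparing $\vtheta^{\rho_i(n)}$ to $\vtheta^n$ costs a sum of at most $\tau$ consecutive global increments, and bounding $\sum_n \norm{\vtheta^{n} - \vtheta^{\rho_i(n)}}^2$ by $\tau$ times a telescoping-type sum of per-round increments is what produces the $\tilde\eta^2 K^2\tau^2$ factors inside $\epsilon_\alpha$ and $\epsilon_\beta$; this is the step where Assumption \ref{ass:answering_time} is essential and where the $3\rho^2\eta_g(\tau+1)$ factor in the step-size condition gets used to keep these terms subdominant.

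Third, I would telescope the one-round inequality over $k = 0,\ldots,K-1$ and $n = 0,\ldots,N-1$. The ``progress'' terms accumulate to $\sum_n\sum_k [\Lcal^n(\vtheta^{n,k}) - \Lcal^n(\bar\vtheta^n)]$ on the left (after adding and subtracting $\Lcal^n(\bar\vtheta)$, which is exactly what splits off the irreducible residual $R(\{\Lcal^n\})$), while the $\norm{\vtheta^{n,k}-\bar\vtheta}^2$ terms telescope across rounds using $\vtheta^{n,K} = \vtheta^{n+1,0}$, collapsing to the initial-distance term $\epsilon_F = \norm{\vtheta^0 - \bar\vtheta}^2/(\tilde\eta KN)$. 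Dividing by $KN$ and collecting the leftover error terms into $\epsilon_K$, $\epsilon_\alpha$, $\epsilon_\beta$ yields the claimed bound.

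The main obstacle I expect is the bookkeeping around the delayed terms: unlike the synchronous case, the ``noise'' term in round $n$ involves gradients evaluated at $\vtheta_i^{(\rho_i(n),k)}$, so the descent recursion does not close on a single index — one must either introduce a potential function that sums $\norm{\vtheta^m - \bar\vtheta}^2$ over a sliding window of $\tau+1$ rounds, or bound the delay-induced displacement by summing squared per-round updates and re-injecting them, and in doing so one repeatedly invokes $L$-smoothness to convert squared gradient norms back into function-value gaps, each conversion leaving a small multiple of $R(\{\Lcal^n\})$ or $\Sigma$ that must be shown to be absorbable under the stated step-size constraint. Getting the constants to line up so that all the $\tilde\eta$, $K$, $\tau$, $\alpha$, $\beta$ dependencies land in exactly the stated $\Ocal(\cdot)$ buckets — rather than contaminating the leading $R(\{\Lcal^n\}) + \epsilon_F$ terms — is the delicate part, and is precisely why the step size carries the $\min(1, 1/(3\rho^2\eta_g(\tau+1)))$ factor.
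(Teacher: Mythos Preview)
Your proposal is correct and matches the paper's proof essentially step for step: the paper establishes a one-round descent (its Lemma \ref{lemma:one_SGD}) via exactly the covariance decomposition you describe (its Lemma \ref{lem:decompo_Xi}, yielding the $\alpha$ and $\beta$ terms), bounds the client drift and the delayed-displacement via the sum of consecutive global increments $Q(n-s)$ (Lemmas \ref{lem:phi_nk}--\ref{lem:bounding_R}), and then closes the delay recursion on $\bar Q(N)$ (Lemma \ref{lem:bound_Q}) under the step-size constraint before telescoping and splitting off $R(\{\Lcal^n\})$ exactly as you anticipate. Your identification of the ``re-injection'' of squared per-round updates as the delicate bookkeeping step is precisely the content of Lemma \ref{lem:bound_Q}.
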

Theorem \ref{theo:convergence_convex} is proven in Appendix \ref{app:sec:convergence_convex}. The convergence guarantee provided in Theorem \ref{theo:convergence_convex} is composed of 5 terms: $R(\{\Lcal^n\})$, 
$\epsilon_F$, 
$\epsilon_K$,
$\epsilon_\alpha$,
$\epsilon_\beta$.
In the following, we describe these terms and explain their origin in a given optimization scheme.

\textbf{Optimized expected residual $R(\{\Lcal^n\})$}. 
The residual $R(\{\Lcal^n\})$ 
quantifies the sensitivity of $\Lcal^n$ between its optimum $\bar{\vtheta}^n$ and the optimum $\bar{\vtheta}$ of the overall expected minimized problem across optimization rounds $\tilde{\Lcal}$. 
As such, the residual accounts for the heterogeneity in the history of optimized problems, and is minimized to 0 when the same optimization problem is minimized at every round $n$, i.e. $\Lcal^n = \tilde{\Lcal}$. This condition is always satisfied when clients have identical data distributions, but requires for the server to set properly every client aggregation weight $d_i(n)$ in function of the server waiting time policy $\Delta t^n$ and the clients hardware capabilities $T_i^n$ in the general case (Section \ref{subsec:sufficient_conditions} and \ref{subsec:relaxed_sufficient}). 

\textbf{Initialization quality $\epsilon_{F}$}. 
$\epsilon_{F}$ only depends of the quality of the initial model $\vtheta^0$ through its distance with respect to the optimum $\bar{\vtheta}$ of the overall expected minimized problem across optimization rounds $\tilde{\Lcal}$. 
This convergence term can only be minimized by performing as many serial SGD steps $K N$.

\textbf{Clients data heterogeneity $\epsilon_K$}. 
This term accounts for the disparity in the clients updated models, and is proportional to the clients amount of local work $K$ (quadratically) and to the heterogeneity of their data distributions $\Zcal_i$ through $\Sigma_1$.  
When $K=1$, every client perform its SGD on the same model, which reduces the server aggregation to a traditional centralized SGD. We retrieve $\epsilon_K = 0$. 

\textbf{Gradient delay $\tau$ through $\epsilon_\alpha$ and $\epsilon_\beta$}.
Decreasing the server time policy $\Delta t^n$ allows faster optimization rounds but decreases a client's participation probability $\mathbb{P}( T_i^n \le \Delta t^n )$ resulting in an increased maximum answering time $\tau$. 
In turn, we note that $\epsilon_\alpha$ and $\epsilon_\beta$ are quadratically proportional to the maximum amount of serial SGD $K \tau$.
This latter terms quantifies the maximum amount of SGD integrated in the global model $\vtheta^n$.

% \textbf{Aggregation policy $\epsilon_\omega$ and delayed updates $\epsilon_\tau$}.
% Decreasing the server time policy $\Delta t^n$ allows faster optimization rounds but decreases a client's participation probability $\mathbb{P}( T_i^n \le \Delta t^n )$ resulting in an increased maximum answering time $\tau$. 
% In turn, we note that $\epsilon_\tau$ is quadratically proportional to the amount of serial SGD $K \tau$, the maximum amount of SGD integrated in the global model $\vtheta^n$ since a client received its global model $(\rho_i(n) \le \tau)$, which makes the selection of $\Delta t^n$ detrimental for the convergence residual after $T$ time. Finally, when every client has the same data distribution, we have $\Sigma_1 = \Sigma_2 = 0$ which gives $\epsilon_\omega = \epsilon_\tau = 0$.

\subsection{Sufficient Conditions for Minimizing the Federated Problem (\ref{eq:original_problem_general})}\label{subsec:sufficient_conditions}

Theorem \ref{theo:convergence_convex} provides convergence guarantees for the history of optimized models $\{\Lcal^n\}$.
Under the same assumptions of Theorem \ref{theo:convergence_convex}, we can provide convergence guarantees for the original FL problem $\Lcal(\vtheta)$ (proof in Appendix \ref{app:sec:theo_bias}).

\begin{theorem}\label{theo:bias_convex}
	Under the same conditions of Theorem \ref{theo:convergence_convex}, we have
	\begin{multline}
	\frac{1}{N}\sum_{ n = 0 }^{N-1}\frac{1}{K}\sum_{ k = 0 }^{K-1} \E{\norm{\nabla \Lcal (\vtheta^{n, k})}^2}
	\\
	\le 
	\Ocal \left( R(\{\Lcal^n\}) \right) 
	+ P(\{\Lcal^n\})
	+ U(\{\Lcal^n\})
	+ \Ocal \left( \epsilon_F \right)
	+ \epsilon_K
	+ \epsilon_\alpha
	+ \epsilon_\beta
	,
	\end{multline}	
	where 
	\begin{equation}
	P(\{\Lcal^n\})
	= \Ocal \left( \frac{1}{N}\sum_{n=0}^{N-1}\chi_n^2 \sum_{j \in W_n} \tilde{s}_j(n) \left[ F_j(\bar{\vtheta}^n) - F_j(\vtheta_j^*) \right] \right)
	,
	\end{equation}
	\begin{equation}
	U(\{\Lcal^n\})
	= \Ocal \left( \frac{1}{N}\sum_{n=0}^{N-1}  \frac{1}{K} \sum_{k=0}^{K-1} \sum_{j \notin W_n} r_j\left[ \E{F_j(\vtheta^{n , k})} - F_j(\vtheta_j^*) \right] \right)
	,
	\end{equation}
	$\chi_n^2 = \sum_{j\in W_n} (r_j - \tilde{s}_j(n))^2/\tilde{s}_j(n)$, 
%	$\tilde{s}_j(n) =s_j(n)/ \sum_{ k = 1 }^J s_k(n)$, 
	and $W_n = \{j : s_j(n) > 0\}$.
\end{theorem}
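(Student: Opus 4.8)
The plan is to bound $\norm{\nabla\Lcal(\vtheta^{n, k})}^2$ by a weighted sum of per-feature-space suboptimality gaps, and then invoke Theorem~\ref{theo:convergence_convex}. Since every $F_j$ is $L$-smooth (Assumption~\ref{ass:smoothness}) and $\Lcal=\sum_{j=1}^J r_j F_j$ with $\sum_j r_j=1$, $\Lcal$ is itself $L$-smooth with global minimiser $\vtheta^*$, so the descent-lemma inequality $\norm{\nabla h(\vtheta)}^2\le 2L\,(h(\vtheta)-\min h)$ gives $\norm{\nabla\Lcal(\vtheta^{n, k})}^2\le 2L\,[\Lcal(\vtheta^{n, k})-\Lcal(\vtheta^*)]$. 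Using $\Lcal(\vtheta^*)=\sum_j r_j F_j(\vtheta^*)\ge\sum_j r_j F_j(\vtheta_j^*)$ I would then upper bound the right-hand side by $2L\sum_{j=1}^J r_j\,[F_j(\vtheta^{n, k})-F_j(\vtheta_j^*)]$, a sum of nonnegative terms. Splitting this sum at each round $n$ over active feature spaces $j\in W_n=\{j:s_j(n)>0\}$ versus inactive ones, the inactive part $\sum_{j\notin W_n}r_j[F_j(\vtheta^{n, k})-F_j(\vtheta_j^*)]$, after taking expectations and averaging over $(n,k)$, is exactly $U(\{\Lcal^n\})$ --- the unavoidable error from feature spaces receiving no expected aggregation weight at round $n$.

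For the active part I would reweight from the target weights $r_j$ to the realised normalised weights $\tilde s_j(n)$, which form a probability vector on $W_n$, via $r_j=\tilde s_j(n)+(r_j-\tilde s_j(n))$. The diagonal term $\sum_{j\in W_n}\tilde s_j(n)[F_j(\vtheta^{n, k})-F_j(\vtheta_j^*)]$ equals, by the definition of $\Lcal^n$ and $F_j(\vtheta_j^*)\le F_j(\bar{\vtheta}^n)$, the quantity $\big(\sum_{j}s_j(n)\big)^{-1}[\Lcal^n(\vtheta^{n, k})-\Lcal^n(\bar{\vtheta}^n)]$ plus the round-$n$ heterogeneity residual $\sum_{j\in W_n}\tilde s_j(n)[F_j(\bar{\vtheta}^n)-F_j(\vtheta_j^*)]$. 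The mismatch term $\sum_{j\in W_n}(r_j-\tilde s_j(n))[F_j(\vtheta^{n, k})-F_j(\vtheta_j^*)]$ I would control by Cauchy--Schwarz against $\chi_n^2=\sum_{j\in W_n}(r_j-\tilde s_j(n))^2/\tilde s_j(n)$, first decomposing $F_j(\vtheta^{n, k})-F_j(\vtheta_j^*)=[F_j(\vtheta^{n, k})-F_j(\bar{\vtheta}^n)]+[F_j(\bar{\vtheta}^n)-F_j(\vtheta_j^*)]$ so that the first bracket folds back onto the $\Lcal^n$ gap while the second, weighted by $\chi_n^2$, produces $P(\{\Lcal^n\})$. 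Taking expectations, averaging over $n$ and $k$, using that $\sum_j s_j(n)=\sum_i q_i(n)$ stays bounded away from $0$ and $\infty$ (as for all schemes of Section~\ref{sec:applications}), and applying Theorem~\ref{theo:convergence_convex} to each occurrence of $\frac1{KN}\sum_{n,k}[\E{\Lcal^n(\vtheta^{n, k})}-\Lcal^n(\bar{\vtheta}^n)]$ converts those into $\Ocal(R(\{\Lcal^n\}))+\Ocal(\epsilon_F)+\epsilon_K+\epsilon_\alpha+\epsilon_\beta$; collecting terms yields the stated bound.

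The step I expect to be the main obstacle is the bookkeeping in this reweighting. The mismatch weights $r_j-\tilde s_j(n)$ are not of a single sign, so the Cauchy--Schwarz split must be arranged so that the only non-vanishing leftovers are precisely the heterogeneity residuals gathered into $P(\{\Lcal^n\})$ and $U(\{\Lcal^n\})$, every other piece being either tied back to $\frac1{KN}\sum_{n,k}[\E{\Lcal^n(\vtheta^{n, k})}-\Lcal^n(\bar{\vtheta}^n)]$ --- hence controlled by Theorem~\ref{theo:convergence_convex} --- or absorbed, up to multiplicative constants depending only on $L$ and on bounds for $\sum_i q_i(n)$, into $\Ocal(R(\{\Lcal^n\}))$. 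In particular one must check that the diagonal heterogeneity residual $\frac1N\sum_n\sum_{j\in W_n}\tilde s_j(n)[F_j(\bar{\vtheta}^n)-F_j(\vtheta_j^*)]$ does not escape as an extra term outside $R(\{\Lcal^n\})$ and $P(\{\Lcal^n\})$; everything else is the descent lemma, Jensen/Cauchy--Schwarz, and substitution.
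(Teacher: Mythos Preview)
Your instinct at the end is correct: the diagonal heterogeneity residual does escape, and this is the genuine gap in your plan. Once you apply the descent lemma to $\Lcal$ and pass to $2L\sum_j r_j[F_j(\vtheta^{n,k})-F_j(\vtheta_j^*)]$, the diagonal piece $\sum_{j\in W_n}\tilde s_j(n)[F_j(\vtheta^{n,k})-F_j(\vtheta_j^*)]$ decomposes into the $\Lcal^n$-gap plus $\sum_{j\in W_n}\tilde s_j(n)[F_j(\bar\vtheta^n)-F_j(\vtheta_j^*)]$, and this last term carries no $\chi_n^2$ factor. It is not $R(\{\Lcal^n\})$ (which compares $\bar\vtheta$ to $\bar\vtheta^n$, not $\bar\vtheta^n$ to the per-type optima $\vtheta_j^*$), and it is not $P(\{\Lcal^n\})$ (which is multiplied by $\chi_n^2$). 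To see that it cannot be absorbed, take the ideal case $\tilde s_j(n)=r_j$ for every $j$ and every $n$: then $\chi_n^2=0$, $W_n$ covers all $j$, $R=P=U=0$, and the theorem claims the bound $\Ocal(\epsilon_F)+\epsilon_K+\epsilon_\alpha+\epsilon_\beta$; yet your residual equals $\sum_j r_j[F_j(\vtheta^*)-F_j(\vtheta_j^*)]$, a fixed positive constant whenever clients have distinct optima. A related obstruction hits the mismatch term: Cauchy--Schwarz against $\chi_n^2$ applied at the function-value level yields $\chi_n\sqrt{\sum_j\tilde s_j(n)[F_j(\vtheta^{n,k})-F_j(\vtheta_j^*)]^2}$, a square-root of \emph{squared} suboptimality gaps, and no further inequality turns this into $\chi_n^2$ times a \emph{linear} combination---which is what $P(\{\Lcal^n\})$ requires.

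The paper avoids both problems by working at the gradient level rather than the function-value level. It first splits $\norm{\nabla\Lcal(\vtheta)}^2\le 2\norm{\nabla\Lcal(\vtheta)-\nabla\tilde\Lcal^n(\vtheta)}^2+2\norm{\nabla\tilde\Lcal^n(\vtheta)}^2$, and then (Lemma~\ref{lem:async_FL_bias}) writes
\[
\nabla\Lcal-\nabla\tilde\Lcal^n=\sum_{j\in W_n}\frac{r_j-\tilde s_j(n)}{\sqrt{\tilde s_j(n)}}\,\sqrt{\tilde s_j(n)}\,\nabla F_j+\sum_{j\notin W_n}r_j\nabla F_j,
\]
applying Cauchy--Schwarz to the first sum. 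This produces $\chi_n^2\sum_{j\in W_n}\tilde s_j(n)\norm{\nabla F_j(\vtheta)}^2$, and \emph{now} Lipschitz smoothness converts each $\norm{\nabla F_j(\vtheta)}^2$ to $2L[F_j(\vtheta)-F_j(\vtheta_j^*)]$ linearly, giving exactly the $\chi_n^2$-weighted structure of $P(\{\Lcal^n\})$. Meanwhile $\norm{\nabla\tilde\Lcal^n(\vtheta)}^2\le 2L[\tilde\Lcal^n(\vtheta)-\tilde\Lcal^n(\bar\vtheta^n)]$ goes straight into Theorem~\ref{theo:convergence_convex} with no leftover residual at $\bar\vtheta^n$. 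The order of operations---Cauchy--Schwarz on gradients first, then smoothness to get linear function values---is what makes the bookkeeping close; starting from the descent lemma on $\Lcal$ commits you to function values too early.
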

Theorem \ref{theo:bias_convex} provides convergence guarantees for the optimization problem (\refeq{eq:original_problem_general}). We retrieve the components of the convergence bound of Theorem \ref{theo:convergence_convex}.
The terms $\epsilon_F$ to $\epsilon_\tau$ can be mitigated by choosing an appropriate local learning rate $\eta_l$, but the same cannot be said for $R(\{\Lcal^n\})$, $P(\{\Lcal^n\})$, $U(\{\Lcal^n\})$. 
Behind these three quantities, Theorem \ref{theo:bias_convex} shows that proper expected representation of every dataset type is needed, i.e. $s_j (n) = r_j$. Indeed, if a client is poorly represented, i.e. $s_j (n) \neq r_j$, then $ R(\{\Lcal^n\})>0$ and $P(\{\Lcal^n\})>0$, while if a client is not represented at all, i.e. $s_j(n) =0$, then $U(\{\Lcal^n\})>0$. Therefore, we propose, with Corollary \ref{cor:bias_distributions}, sufficient conditions for any FL optimization scheme satisfying Algorithm \ref{alg:FL_with_delayed_gradients} to converge to the optimum of the federated problem (\refeq{eq:original_problem_general}).

\begin{corollary}\label{cor:bias_distributions} 
	Under the conditions of Theorem \ref{theo:convergence_convex}, if every client data distribution satisfies $\tilde{s}_j(n) = r_j$, the following convergence bound for optimization problem (\refeq{eq:original_problem_general}) can be obtained 
	\begin{align}
	\frac{1}{N}\sum_{ n = 0 }^{N-1}\frac{1}{K}\sum_{ k = 0 }^{K-1} \left[ \E{\Lcal(\vtheta^{n, k})} - \Lcal(\vtheta^*) \right]
	& \le 
	\epsilon_F 
	+ \epsilon_K
	+ \epsilon_\alpha
	+ \epsilon_\beta
	.
	\end{align}	
\end{corollary}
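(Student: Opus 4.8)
The plan is to read the corollary off Theorem~\ref{theo:convergence_convex} once we observe that the hypothesis $\tilde{s}_j(n) = r_j$ collapses every per-round problem onto the target problem $\Lcal$. First I would use the feature-space decomposition $\Lcal^n(\vtheta) = \sum_{j=1}^J s_j(n) F_j(\vtheta)$ together with the identity $s_j(n) = \tilde{s}_j(n)\sum_{j'} s_{j'}(n)$ to write $\Lcal^n = \left(\sum_{j'} s_{j'}(n)\right)\Lcal$; with the normalized per-round aggregation weights this scalar equals $1$, and in any case it is a strictly positive constant that can be divided out. In particular $\Lcal^n$ and $\Lcal$ have the same minimizer, so $\bar{\vtheta}^n = \vtheta^*$ for every $n$, and likewise $\bar{\Lcal} = \frac{1}{N}\sum_n \Lcal^n$ is a positive multiple of $\Lcal$, so $\bar{\vtheta} = \vtheta^*$ as well.

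With this, the term that controlled the heterogeneity of the history of optimized problems vanishes: $R(\{\Lcal^n\}) = \frac{1}{N}\sum_{n=0}^{N-1}\left[\Lcal^n(\bar{\vtheta}) - \Lcal^n(\bar{\vtheta}^n)\right] = \frac{1}{N}\sum_{n=0}^{N-1}\left[\Lcal^n(\vtheta^*) - \Lcal^n(\vtheta^*)\right] = 0$. If one instead routes the argument through Theorem~\ref{theo:bias_convex}, the same hypothesis kills the other two problem-dependent terms as well: $\chi_n^2 = \sum_{j\in W_n}(r_j - \tilde{s}_j(n))^2/\tilde{s}_j(n) = 0$ gives $P(\{\Lcal^n\}) = 0$, and $\tilde{s}_j(n) = r_j > 0$ forces $W_n = \{1,\dots,J\}$, so the sum over $j \notin W_n$ in $U(\{\Lcal^n\})$ is empty, i.e. $U(\{\Lcal^n\}) = 0$.

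Then I would substitute back into Theorem~\ref{theo:convergence_convex}: since $\Lcal^n(\vtheta^{n,k}) - \Lcal^n(\bar{\vtheta}^n) = \Lcal(\vtheta^{n,k}) - \Lcal(\vtheta^*)$ (using $\bar{\vtheta}^n = \vtheta^*$ and $\sum_{j'} s_{j'}(n) = 1$), the left-hand side becomes exactly $\frac{1}{N}\sum_n\frac{1}{K}\sum_k\left[\E{\Lcal(\vtheta^{n,k})} - \Lcal(\vtheta^*)\right]$, while on the right-hand side $R(\{\Lcal^n\}) = 0$ and the remaining $\epsilon_F, \epsilon_K, \epsilon_\alpha, \epsilon_\beta$ are unchanged — note $\bar{\vtheta} = \vtheta^*$ is already consistent with their definitions (for instance $\epsilon_F$ now measures $\norm{\vtheta^0 - \vtheta^*}^2$, and $\Sigma, \Sigma_1$ become residuals at $\vtheta^*$). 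This is precisely the stated bound.

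The only genuinely non-routine step is the first one: verifying that $\tilde{s}_j(n) = r_j$ really does force $\bar{\vtheta}^n = \bar{\vtheta} = \vtheta^*$ and hence $R = P = U = 0$, and being careful about the proportionality constant $\sum_{j'} s_{j'}(n)$ relating $\Lcal^n$ to $\Lcal$ (which is exactly where the normalization of the aggregation weights enters). Everything after that is a direct specialization of Theorem~\ref{theo:convergence_convex}.
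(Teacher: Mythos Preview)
Your proposal is correct and follows essentially the same route as the paper: the paper's one-line proof records that $\tilde{s}_j(n)=r_j$ gives $\Lcal^n = q(n)\Lcal$, $\bar{\vtheta}^n=\vtheta^*$, $\chi_n^2=0$, and then reads the bound off Theorem~\ref{theo:convergence_convex}. You are in fact more careful than the paper on two points: you correctly state that $W_n=\{1,\dots,J\}$ (the paper writes $W_n=\emptyset$, which is a slip---it is the complement that is empty), and you flag the proportionality constant $\sum_{j'} s_{j'}(n)=q(n)$ between $\Lcal^n$ and $\Lcal$, which the paper leaves implicit.
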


\begin{proof}
	Follows directly. $\tilde{s}_j(n) = r_j$ implies $\chi_n^2 = 0$, $W_n = \emptyset$, $\Lcal^n = q(n) \Lcal$, and $\bar{\vtheta}^n = \vtheta^*$.
\end{proof}

These theoretical results provide relevant insights for different FL scenarios. 

\textbf{iid data distributions, $\mathcal{Z}_i = \mathcal{Z}$}. Consistently with the extensive literature on synchronous and asynchronous distributed learning, when clients have data points sampled from the same data distribution, FL always converges to its optimum (Corollary \ref{cor:bias_distributions}). Indeed, $\tilde{s}_j(n) = r_j = 1$ regardless of which clients are participating, and what importance $p_i$ or aggregation weight $d_i(n)$ a client is given.

\textbf{non-iid data distributions}. The convergence of FL to the optimum requires to optimize by considering every data distribution type fairly at every optimization round, i.e. $\tilde{s}_j(n) = r_j$  (Corollary \ref{cor:bias_distributions}). This condition is weaker than requiring to treat fairly every client at every optimization round, i.e. $q_i(n) = p_i$.
%which requires every client to participate at every optimization round ($\mathbb{P}( T_i^n \le \Delta t^{n+1})>0$). 
Ideally, only one client per data type needs to have a non-zero participating probability, i.e. $\mathbb{P}( T_i^n \le \Delta t^n )>0$, and an appropriate $d_i(n)$ such that $\tilde{s}_j(n) = r_j$ is satisfied. In practice, knowing the clients data distribution is not possible. Therefore, ensuring FL convergence to its optimum requires at every optimization round $\tilde{q}_i(n) = p_i$ \citep{FedNova}.

We provide in Example \ref{exa:bias} an illustration on these results based on quadratic loss functions to show that considering fairly data distributions is sufficient for an optimization scheme satisfying Algorithm \ref{alg:FL_with_delayed_gradients} to converge to the optimum of the optimization problem (\refeq{eq:original_problem_general}), since $\tilde{s}_j(n) = r_j$ is satisfied at every optimization round, while $\tilde{q}_i(n) \neq p_i$ may not be satisfied. 

\begin{example}\label{exa:bias}
	Let us consider four clients with data distributions such that their loss can be expressed as $\Lcal_i(\vtheta) = \frac{1}{2}\norm{\vtheta - \vtheta_i^*}^2$ with $\vtheta_1^* = \vtheta_2^*$ ($\Zcal_1$), $\vtheta_3^* = \vtheta_4^*$ ($\Zcal_2$), and identical client importance, i.e. $p_i = 1/4$. 
	Therefore, each data type has identical importance, i.e. $r_j = 1/2$, and the optimum satisfies $\vtheta^* = \frac{1}{2}[\vtheta_1^* + \vtheta_3^*]$.
	We consider that clients with odd index participate at odd optimization rounds while the ones with even index at even optimization rounds, i.e. $q_1^{2n+1} = q_3^{2n+1} = q_2^{2n} = q_4^{2n} = 1/2$ and $q_1^{2n} = q_2^{2n} = q_3^{2n+1} = q_4^{2n+1} =  0$ which gives $\tilde{s}_1(n) = \tilde{s}_2(n) = 1 /2$ and $\tilde{q}_i(n)=0$ or $\tilde{q}_i(n)=1/2$ but not $\tilde{q}_i(n)=1/4$.
	With $\eta_g = 1$, equation (\refeq{eq:aggreg_SCAFFOLD_general}) can be rewritten as
	\begin{equation}
		\vtheta^{n+2} = \vtheta^{n+1} + \frac{1}{2}\left[ ( \vtheta_1^{n+1} - \vtheta^n) + ( \vtheta_3^{n+1} - \vtheta^n)\right].
	\label{exa:eq:A1}
	\end{equation}
	Clients update can be rewritten as 
	$
	\vtheta_i^{n+1} - \vtheta^n
	= \phi (\vtheta_i^* - \vtheta^n),
	$
	where $\phi = 1 - ( 1- \eta_l)^K$. Equation (\refeq{exa:eq:A1}) can thus be rewritten as
	\begin{equation}
	\vtheta^{n+2} 
	- \vtheta^{n+1}  
	+ \phi \vtheta^n 
	= \phi \vtheta^*.
	\label{exa:eq:A2}
	\end{equation}
	Solving equation (\refeq{exa:eq:A2}) proves FL asymptotic convergence to the optimum $\vtheta^*$.
	
\end{example}

\subsection{Relaxed Sufficient Conditions for Minimizing the Federated Problem (\ref{eq:original_problem_general})}\label{subsec:relaxed_sufficient}

Theorem \ref{theo:bias_convex} holds for any client's update time $T_i$ and optimization scheme satisfying Algorithm \ref{alg:FL_with_delayed_gradients}, and provides finite convergence guarantees for the optimization  problem (\refeq{eq:original_problem_general}). Corollary \ref{cor:bias_distributions} shows that for the asymptotic convergence of FL, data distribution types should be treated fairly in expectation, i.e. $\tilde{s}_j(n) = r_j$.
This sufficient condition is not necessarily realistic, since the server cannot know the clients data distributions and participation time, and thus needs to give to every client an aggregation weight $d_i(n)$ such that $\tilde{q}_i(n) = p_i$ without knowing $T_i$.

In Example \ref{exa:bias}, we note that we have $\frac{1}{2}\left[q_i^{2n} + q_i^{2n+1}\right] = p_i$. Therefore, every client is given proper consideration every two optimization rounds.
%To provide weaker sufficient conditions than the ones of Corollary \ref{cor:bias_distributions}, 
Based on Example \ref{exa:bias}, in Theorem \ref{theo:relaxed_sufficient_conditions} we provide weaker sufficient conditions than the ones of Corollary \ref{cor:bias_distributions}. To this end, we assume that clients are considered with identical importance across $W$ optimization rounds (Assumption \ref{ass:window}) and that clients gradients are bounded (Assumption \ref{ass:bounded}).

\begin{assumption}[Window]\label{ass:window}	
	$\exists W \ge 1$ such that $\forall s,\ \frac{1}{W}\sum_{n=sW}^{(s+1)W-1} q_i(n) = q_i$.
\end{assumption}
With Assumption \ref{ass:window}, we assume that over a cycle of $W$ aggregations, the sum of the clients expected aggregation weights $q_i(n)$ are constant. 
By definition of $q_i$, Assumption \ref{ass:window} is always satisfied with $W = N$. Also, by construction, we have $ W \ge \tau$.
We note  that Assumption \ref{ass:window} is made on a series of windows of size $W$ and not for any window of size $W$.

\begin{assumption}[Bounded Gradients]\label{ass:bounded} $\exists B>0$ such that	
	$ \E{\norm{\nabla \Lcal_i(\vx)}} \le B$
	.
\end{assumption}
Gradient clipping is a typical operation performed during the optimization of deep learning models to prevent exploding gradients. A pre-determined gradient threshold $B$ is introduced, and gradients with norm exceeding this threshold are clipped to the norm $B$. Therefore, using Assumption \ref{ass:bounded} and the subadditivity of the norm, the distance between two consecutive global models can be bounded by
\begin{equation}
\E{\norm{\vtheta^{n+1} - \vtheta^{n}}}
\le \eta_g \sum_{ i = 1 }^M q_i(n) \E{\norm{\vtheta_i^{\rho_i(n)+1} - \vtheta^{\rho_i(n)}} }
\le \eta_g \eta_l q(n) K B
,
\end{equation}
which, thanks to the convexity of the clients loss function and to the Cauchy Schwartz inequality, gives 
\begin{equation}
\E{\Lcal_i(\vtheta^{n+1})} - \E{\Lcal_i(\vtheta^{n})}
\le \E{\inner{\nabla  \Lcal_i(\vtheta^{n+1})}{\vtheta^{n+1} - \vtheta^n}}
\le \eta_g \eta_l q(n) B^2 K
\label{eq:ass_bounded}
.
\end{equation}
Finally, using equation (\refeq{eq:ass_bounded}) and Assumption \ref{ass:window}, the performance history on the original optimized problem can be bounded as follows
\begin{multline}
\sum_{n= s W}^{(s+1)W - 1}\sum_{ k = 0 }^{K-1} q_i \E{\Lcal_i(\vtheta^{(n, k)})} 
\le 
\sum_{n= s W}^{(s+1)W - 1}\sum_{ k = 0 }^{K-1} q_i(n) \left[\E{\bar{\Lcal}(\vtheta^{(n, k)})} 
+  \eta_g \eta_l K (W-1) B^2 \right]
.
\label{eq:window_bound}
\end{multline}

%\begin{assumption}[Window]\label{ass:window}	
%	$\exists W \ge 1$ such that $\forall s,\ \frac{1}{W}\sum_{n=sW}^{(s+1)W-1} q_i(n) = q_i$.
%\end{assumption}
%With Assumption \ref{ass:window}, we assume that over a cycle of $W$ aggregations, the sum of the clients expected aggregation weights $q_i(n)$ are identical. 
%By definition of $q_i$, Assumption \ref{ass:window} is always satisfied with $W = N$. Also, by construction, we have $ W \ge \tau$.
%We note Assumption \ref{ass:window} is made on a succession of windows of size $W$ and not for any window of size $W$. 
%Using equation (\refeq{eq:ass_bounded}), the performance history on the original optimized problem can be bounded as follow
%\begin{multline}
%	\sum_{n= s W}^{(s+1)W - 1}\sum_{ k = 0 }^{K-1} q_i \E{\Lcal_i(\vtheta^{(n, k)})} 
%	\le 
%	\sum_{n= s W}^{(s+1)W - 1}\sum_{ k = 0 }^{K-1} q_i(n) \left[\E{\bar{\Lcal}(\vtheta^{(n, k)})} 
%	+  \eta_g \eta_l K (W-1) B^2 \right]
%	.
%	\label{eq:window_bound}
%\end{multline}

\begin{theorem}\label{theo:relaxed_sufficient_conditions}
	Under the conditions of Theorem \ref{theo:convergence_convex}, Assumptions \ref{ass:window} and \ref{ass:bounded}, and considering that $W$ is a divider of  $N$, we get the following convergence bound for the optimization problem (\refeq{eq:surrogate_problem}):
	\begin{equation}
	\frac{1}{N}\sum_{ n = 0 }^{N-1}\frac{1}{K}\sum_{ k = 0 }^{K-1} \left[ \E{\bar{\Lcal}(\vtheta^{n, k})} - \bar{\Lcal}(\bar{\vtheta}) \right]
	\le \epsilon
	\coloneqq 
    \epsilon_F 
	+ \epsilon_K
	+ \epsilon_\alpha
	+ \epsilon_\beta
	+ \epsilon_W
		,
	\end{equation}
	where $\epsilon_W = \Ocal( \eta_g \eta_l (W-1) K)$. 
	Furthermore, we obtain the following convergence guarantees for the federated problem (\refeq{eq:original_problem_general}):
	\begin{align}
	\frac{1}{N}\sum_{ n = 0 }^{N-1}\frac{1}{K}\sum_{ k = 0 }^{K-1} \E{\norm{\nabla \Lcal (\vtheta^{n, k})}^2}
	& \le 
	\epsilon
	+ \Ocal (\chi^2 [\bar{\Lcal}(\bar{\vtheta}) -\sum_{j =1}^J s_j F_j(\vtheta_j^*)])
	,
	\end{align}	
	where $\chi^2 = \sum_{ j = 1 }^J \frac{\left(r_j - \tilde{s}_j\right)^2}{\tilde{s}_j}$.	
\end{theorem}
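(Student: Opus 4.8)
The plan is to derive both bounds from Theorem~\ref{theo:convergence_convex} by carefully book-keeping the difference between the "per-round" optimized problems $\{\Lcal^n\}$ and the fixed surrogate $\bar{\Lcal}$, exploiting Assumption~\ref{ass:window} to control this discrepancy over each window of length $W$. First I would observe that the left-hand side of Theorem~\ref{theo:convergence_convex} is a running average of $\E{\Lcal^n(\vtheta^{n,k})} - \Lcal^n(\bar{\vtheta}^n)$, whereas we now want the analogous quantity with $\bar{\Lcal}$ in place of $\Lcal^n$ and $\bar{\vtheta}$ in place of $\bar{\vtheta}^n$. Writing $\E{\bar{\Lcal}(\vtheta^{n,k})} - \bar{\Lcal}(\bar{\vtheta})$, I would split it as $\left(\E{\bar{\Lcal}(\vtheta^{n,k})} - \E{\Lcal^n(\vtheta^{n,k})}\right) + \left(\E{\Lcal^n(\vtheta^{n,k})} - \Lcal^n(\bar{\vtheta}^n)\right) + \left(\Lcal^n(\bar{\vtheta}^n) - \bar{\Lcal}(\bar{\vtheta})\right)$. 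The middle term is exactly what Theorem~\ref{theo:convergence_convex} bounds (once averaged); the third term averages, over $n$, to $-R(\{\Lcal^n\})$ after adding and subtracting $\Lcal^n(\bar{\vtheta})$, since $\frac1N\sum_n \Lcal^n(\bar{\vtheta}) = \bar{\Lcal}(\bar{\vtheta})$ by definition~(\ref{eq:surrogate_problem}) and $\Lcal^n(\bar{\vtheta}^n)\le \Lcal^n(\bar{\vtheta})$. So $-R(\{\Lcal^n\})$ cancels the $R(\{\Lcal^n\})$ already present on the right-hand side of Theorem~\ref{theo:convergence_convex}, and the only genuinely new work is to control the first term $\E{\bar{\Lcal}(\vtheta^{n,k})} - \E{\Lcal^n(\vtheta^{n,k})} = \sum_j (s_j - s_j(n)) \E{F_j(\vtheta^{n,k})}$ — equivalently $\sum_i (q_i - q_i(n))\E{\Lcal_i(\vtheta^{n,k})}$.

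The heart of the argument is bounding $\frac1N\sum_n\frac1K\sum_k \sum_i (q_i - q_i(n))\E{\Lcal_i(\vtheta^{n,k})}$, and this is where Assumption~\ref{ass:window} does its job: since $W \mid N$, I would group the rounds into $N/W$ consecutive blocks $[sW, (s+1)W)$, and within each block $\sum_{n=sW}^{(s+1)W-1}(q_i - q_i(n)) = 0$. Therefore inside each block the sum $\sum_{n\in\text{block}}(q_i - q_i(n))\E{\Lcal_i(\vtheta^{n,k})}$ equals $\sum_{n\in\text{block}}(q_i - q_i(n))\left(\E{\Lcal_i(\vtheta^{n,k})} - \E{\Lcal_i(\vtheta^{n_0,k})}\right)$ for any reference index $n_0$ in the block, and each increment $|\E{\Lcal_i(\vtheta^{n,k})} - \E{\Lcal_i(\vtheta^{n',k})}|$ telescopes through at most $W-1$ consecutive-model gaps, each bounded via the bounded-gradient estimate~(\ref{eq:ass_bounded}) by $\Ocal(\eta_g\eta_l q(n) B^2 K)$. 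This is precisely the content of the pre-computed inequality~(\ref{eq:window_bound}): it already packages "replace $\Lcal^n$ by $\bar{\Lcal}$ at a cost of $\Ocal(\eta_g\eta_l K(W-1)B^2)$ per round". Summing over blocks and the inner $k$-average gives the new error term $\epsilon_W = \Ocal(\eta_g\eta_l(W-1)K)$, where $B$ and numerical constants are absorbed into $\Ocal$. Combining with Theorem~\ref{theo:convergence_convex} (and the cancellation of $R(\{\Lcal^n\})$ noted above) yields the first claimed bound, $\frac1N\sum_n\frac1K\sum_k[\E{\bar{\Lcal}(\vtheta^{n,k})} - \bar{\Lcal}(\bar{\vtheta})] \le \epsilon_F + \epsilon_K + \epsilon_\alpha + \epsilon_\beta + \epsilon_W = \epsilon$.

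For the second bound — the gradient norm guarantee for the original problem $\Lcal$ — I would follow the same route used to pass from Theorem~\ref{theo:convergence_convex} to Theorem~\ref{theo:bias_convex}, but now with the \emph{fixed} surrogate $\bar{\Lcal}$ and its fixed minimizer $\bar{\vtheta}$ rather than the round-dependent $\Lcal^n, \bar{\vtheta}^n$. Concretely, by $L$-smoothness $\norm{\nabla\Lcal(\vtheta^{n,k})}^2 \le 2L(\Lcal(\vtheta^{n,k}) - \Lcal(\vtheta^*))$, and I would bound $\Lcal(\vtheta^{n,k}) - \Lcal(\vtheta^*)$ by comparing it to $\bar{\Lcal}(\vtheta^{n,k}) - \bar{\Lcal}(\bar{\vtheta})$ plus a bias term measuring the mismatch between the importances $r_j$ and the normalized expected weights $\tilde{s}_j$. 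That mismatch is the $\chi^2$-divergence-type quantity $\chi^2 = \sum_j (r_j - \tilde{s}_j)^2/\tilde{s}_j$ (the window-averaged analogue of the per-round $\chi_n^2$ in Theorem~\ref{theo:bias_convex}), and a Cauchy--Schwarz / weighted-AM argument — exactly the one behind the $P(\{\Lcal^n\})$ term — produces the extra additive term $\Ocal(\chi^2[\bar{\Lcal}(\bar{\vtheta}) - \sum_j s_j F_j(\vtheta_j^*)])$. Feeding in the already-established bound on $\frac1N\sum_n\frac1K\sum_k[\E{\bar{\Lcal}(\vtheta^{n,k})} - \bar{\Lcal}(\bar{\vtheta})] \le \epsilon$ then gives the stated inequality.

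I expect the main obstacle to be the first step's telescoping/window argument done \emph{uniformly in $k$}: the virtual models $\vtheta^{n,k}$ for $0 < k < K$ are not genuine iterates the server ever holds, so bounding $\E{\norm{\vtheta^{n,k} - \vtheta^{n',k}}}$ across a window requires chaining through the true aggregated models $\vtheta^m$ at the block endpoints and then adding the intra-round SGD drift, each controlled by Assumption~\ref{ass:bounded}; getting the constants to collapse cleanly into $\epsilon_W = \Ocal(\eta_g\eta_l(W-1)K)$ (rather than something with an extra factor of $K$ or $W$) is the delicate part, and it is essentially why inequality~(\ref{eq:window_bound}) was stated in advance. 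The second, $\chi^2$-bias step is more routine given the template of Theorem~\ref{theo:bias_convex}. A secondary subtlety is ensuring the cancellation of $R(\{\Lcal^n\})$ is legitimate, i.e. that the sign works out ($\Lcal^n(\bar{\vtheta}^n) \le \Lcal^n(\bar{\vtheta})$ makes the leftover $\frac1N\sum_n(\Lcal^n(\bar{\vtheta}^n) - \Lcal^n(\bar{\vtheta})) = -R(\{\Lcal^n\}) \le 0$, so it can simply be dropped), which is immediate from the definition of $\bar{\vtheta}^n$ as the minimizer of $\Lcal^n$.
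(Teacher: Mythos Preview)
Your proposal is correct and matches the paper's proof: use inequality~(\ref{eq:window_bound}) to replace $\bar{\Lcal}$ by $\Lcal^n$ at cost $\epsilon_W$, apply Theorem~\ref{theo:convergence_convex}, and exploit the identity $R(\{\Lcal^n\}) + \frac{1}{N}\sum_n \Lcal^n(\bar{\vtheta}^n) - \bar{\Lcal}(\bar{\vtheta}) = 0$ (this cancellation is \emph{exact}, since $\frac{1}{N}\sum_n \Lcal^n(\bar{\vtheta}) = \bar{\Lcal}(\bar{\vtheta})$ by definition --- you need not merely ``drop'' a nonpositive term as you suggest at the end). For the second claim the paper likewise specializes Theorem~\ref{theo:bias_convex} to the fixed surrogate $\bar{\Lcal}$ in place of the round-dependent $\Lcal^n$, which collapses $\chi_n^2$ to the constant $\chi^2$ and kills the $U$-term; your sketch via smoothness plus a Cauchy--Schwarz $\chi^2$-bias argument is the same mechanism.
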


\begin{proof}
%	We define $q = \sum_{ i = 1 }^M q_i$. 
	\begin{align}
	&\frac{1}{N}\sum_{ n = 0 }^{N-1}\frac{1}{K}\sum_{ k = 0 }^{K-1} \left[ \E{\bar{\Lcal}(\vtheta^{n, k})} - \bar{\Lcal}(\bar{\vtheta}) \right]
	\nonumber\\
	& \le \frac{1}{N}\sum_{ n = 0 }^{N-1}\frac{1}{K}\sum_{ k = 0 }^{K-1}  q_i(n)\left[\E{\Lcal_i(\vtheta^{n, k})} + \tilde{\eta}  K (W - 1) B^2  \right] 
	- \bar{\Lcal}({\bar{\vtheta}})
	\\
	& \le R(\{\Lcal^n\})
	+ \epsilon 
	+ \frac{1}{N} \sum_{n=0}^{N-1} \Lcal^n(\bar{\vtheta}^n) 
	- \bar{\Lcal}(\bar{\vtheta}) 
	= \epsilon 
	,
	\end{align}
	where we use equation (\refeq{eq:window_bound}) for the first inequality and Theorem \ref{theo:convergence_convex} for the second inequality.
	
	Finally, we can obtain convergence guarantees on the optimization problem (\refeq{eq:original_problem_general}) with Theorem \ref{theo:bias_convex} by considering the minimization of the optimization problem $\bar{\Lcal}$. Therefore, the bound of Theorem \ref{theo:bias_convex} can be simplified noting that $\Lcal^n = \bar{\Lcal}$, $\bar{\vtheta}^n = \bar{\vtheta}$, $W_n = \emptyset$, $\chi_n^2 = \chi^2$, and by adding $\epsilon_W$, which completes the proof.
\end{proof}

Theorem \ref{theo:relaxed_sufficient_conditions} shows that the condition $\tilde{s}_j = r_j$ is sufficient to minimize the optimization problem (\refeq{eq:original_problem_general}). In practice, for privacy concerns, clients may not want to share their data distribution with the server, and thus the relaxed sufficient condition becomes $\tilde{q}_i = p_i$. This condition is weaker than the one obtained with Corollary \ref{cor:bias_distributions}, at the detriment of a looser convergence bound including an additional asymptotic term $\epsilon_W$ linearly proportional to the window size $W$. Therefore, for a given learning application, the maximum local work delay $\tau$ and the window size $W$ need to be considered when selecting an FL optimization scheme satisfying Algorithm \ref{alg:FL_with_delayed_gradients}.
%$q_i = p_i$ is a sufficient condition for federated problem (\refeq{eq:original_problem}) to be minimized. 
Also, the server needs to properly allocate clients aggregation weight $d_i(n)$ such that Assumption \ref{ass:window} is satisfied while keeping at a minimum the window size $W$. We note that $W$ depends of the considered FL optimization scheme and clients hardware capabilities.
Based on the results of Theorem \ref{theo:relaxed_sufficient_conditions}, in the following section, we introduce \textsc{FedFix}, a novel asynchronous FL setting based on a waiting policy over fixed time windows $\Delta t^n$. 

Finally, the following example illustrates a practical application of the condition $\tilde{q}_i = p_i$. 
\begin{example}\label{ex:relaxed_sufficient}
	We consider two clients, $i=1, 2$, with $\Lcal_i(\vtheta) = \frac{1}{2}\norm{\vtheta - \vtheta_i^*}^2$ where clients have identical importance, i.e. $p_1 = p_2 = 1/2$. Client 1 contributes at even optimization rounds and Client 2 at odd ones, i.e. $q_1^{2n} =  q_1$, $q_2^{2n+1} = q_2$,  and $q_1^{2n+1} = q_2^{2n} = 0$. 	
	Hence, we have
	\begin{align}
	\vtheta^n 
	&\xrightarrow{n \to \infty } \frac{q_1 \vtheta_1^* + q_2 \vtheta_2^*}{q_1 + q_2}
	,
	\end{align}
	which converges to the optimum of problem (\refeq{eq:original_problem_general}) if and only if $ \frac{1}{2}\left[ \tilde{q}_i^{2n} + \tilde{q}_i^{2n+1} \right] = p_i $ (Theorem \ref{theo:relaxed_sufficient_conditions}).
	
%	\begin{equation}
%	\vtheta_i^{n+1} = ( 1 - \eta_l a_i)^K \vtheta^n - ( 1- ( 1 - \eta_l a_i)^K)\frac{b_i}{a_i}
%	\end{equation}
%	We consider a learning rate such that $\eta_l< a_1$ and $\eta_l< a_2$. We define $\phi_i = ( 1- ( 1 - \eta_l a_i)^K)$.	Hence, we can rewrite the global model aggregation scheme, equation (\refeq{eq:original_problem}), as
%	\begin{equation}
%	\begin{cases}
%	\vtheta^{2n+1} = \vtheta^{2n} + \phi q_1[\vtheta_1^* - \vtheta^{2n}]\\
%	\vtheta^{2n+2} = \vtheta^{2n+1} + \phi q_2[ \vtheta_2^* - \vtheta^{2n+1}]
%	\end{cases}.
%	\end{equation}
%	As a result, we have 
%	\begin{align}
%		\vtheta^{2n+2} 
%		&= (1 - \phi_{2n} q_1)(1 - \phi_{2n+1} q_2)\vtheta^{2n}
%		+ \phi_{2n} q_1(1 - \phi_{2n+1} q_2) \vtheta_1^{*}
%		+ (1 - \phi_{2n+1} q_2)\vtheta_2^{*}
%		\\
%		&\xrightarrow{n \to \infty } \frac{q_1 \vtheta_1^* + q_2 \vtheta_2^*}{q_1 + q_2}
%	\end{align}
%	We also get $\vtheta^{2n+1} \xrightarrow{n \to \infty }  \frac{q_1 \vtheta_1^* + q_2 \vtheta_2^*}{q_1 + q_2}$. Consistently with Theorem \ref{theo:relaxed_sufficient_conditions}, we have $W=2$. Hence, the federated learning process converges. However, the optimum is unbiased if there exists $\alpha$ such that $q_1 = \alpha p1$ and $q_2 = \alpha p_2$. 
\end{example}

The conditions of Corollary \ref{cor:bias_distributions} and Theorem \ref{theo:relaxed_sufficient_conditions} are equivalent when $W=1$, where we retrieve $\epsilon_W=0$. They are also equivalent when clients have the same data distributions, and we retrieve $\tilde{s}_j = r_j = 1$ at every optimization round, which also implies that $W=1$.

The convergence guarantee proposed in Theorem \ref{theo:relaxed_sufficient_conditions} depends on the window size $W$, and to the maximum amount of optimizations needed for a client to update its work $\tau$. 
We provide sufficient conditions in Corollary \ref{cor:sufficient_learning_rate} for the parameters $W$, and $\tau$, such that an optimization scheme satisfying Algorithm \ref{alg:FL_with_delayed_gradients} converges to the optimum of the optimization problem (\refeq{eq:original_problem_general}).

\begin{corollary}\label{cor:sufficient_learning_rate}
	Let us assume there exists $a \ge 0$ and $b \ge 0$ such that $W = \Ocal(N^a)$, $\tau = \Ocal(N^b)$, and $\eta_l \propto N^{-c}$. The convergence bound of Theorem \ref{theo:relaxed_sufficient_conditions} asymptotically converges to 0 if
	\begin{equation}
		W = o(N), 
		\tau = o(N), 
		\text{ and }
		\max (a, b) < c < 1
	\end{equation}
\end{corollary}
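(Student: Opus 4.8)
The plan is to substitute the assumed asymptotic rates $W = \Ocal(N^a)$, $\tau = \Ocal(N^b)$, $\eta_l \propto N^{-c}$ into the bound $\epsilon = \epsilon_F + \epsilon_K + \epsilon_\alpha + \epsilon_\beta + \epsilon_W$ of Theorem \ref{theo:relaxed_sufficient_conditions} and check term by term that each one tends to $0$ as $N \to \infty$. First I would record the dependence on $N$ of $\tilde\eta = \eta_g\eta_l \propto N^{-c}$ (with $\eta_g$ fixed), and note that the learning-rate ceiling $\eta_l \le \tfrac{1}{48KL}\min(1, 1/(3\rho^2\eta_g(\tau+1)))$ is eventually satisfied: since $\tau = \Ocal(N^b)$ and $c > 0$, the product $\eta_l(\tau+1)$ behaves like $N^{b-c}$, which stays bounded precisely because $c > b$, so for $N$ large enough the prescribed $\eta_l \propto N^{-c}$ is admissible. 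This is the one place where the hypothesis $b < c$ (not merely needed for decay of the error terms) is used to make the theorem applicable at all.

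Next I would handle the five error contributions. The term $\epsilon_F = \tfrac{1}{\tilde\eta K N}\norm{\vtheta^0 - \bar\vtheta}^2 \propto N^{c-1}$, which $\to 0$ iff $c < 1$. The term $\epsilon_W = \Ocal(\eta_g\eta_l(W-1)K) \propto N^{a-c}$, which $\to 0$ iff $a < c$. The term $\epsilon_K = \Ocal(\eta_l^2(K-1)^2[R(\{\Lcal^n\}) + \Sigma_1])$ is $\Ocal(N^{-2c})$, which $\to 0$ since $c > 0$ (guaranteed by $\max(a,b) < c$ with $a,b \ge 0$). For $\epsilon_\alpha$ and $\epsilon_\beta$, the governing factor is $\tilde\eta + \tilde\eta^2 K^2\tau^2 \propto N^{-c} + N^{-2c+2b} = N^{-c} + N^{2(b-c)}$; both exponents are negative because $c > 0$ and $c > b$, so $\epsilon_\alpha, \epsilon_\beta \to 0$ (the bracketed factors $R(\{\Lcal^n\}) + \max q_i(n)\Sigma$ and $R(\{\Lcal^n\}) + \Sigma$ are $\Ocal(1)$ in $N$, and $R(\{\Lcal^n\}) \to 0$ when $\tilde s_j = r_j$, but boundedness suffices here). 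Collecting the binding constraints gives $c < 1$ (from $\epsilon_F$), $a < c$ (from $\epsilon_W$), $b < c$ (from $\epsilon_\alpha, \epsilon_\beta$ and from admissibility of $\eta_l$), i.e. exactly $\max(a,b) < c < 1$; the side conditions $W = o(N)$ and $\tau = o(N)$ are then automatic since $a < c < 1$ and $b < c < 1$.

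I do not expect a genuine obstacle here — the corollary is essentially a bookkeeping exercise in matching exponents of $N$. The only subtlety worth stating carefully is the double role of the inequality $c > b$: it is needed both so that the term $\tilde\eta^2 K^2\tau^2$ decays and so that the hypothesis $\eta_l \le 1/(48KL \cdot 3\rho^2\eta_g(\tau+1))$ of Theorem \ref{theo:convergence_convex} is not violated for large $N$ by the choice $\eta_l \propto N^{-c}$. I would make that remark explicit and otherwise present the argument as the term-by-term verification sketched above.
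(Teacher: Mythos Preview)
Your proposal is correct and follows essentially the same approach as the paper: substitute the assumed scalings into each term of the bound and read off the exponent inequalities. The paper's proof is terser (it simply lists the quantities $\eta_l W$, $1/(\eta_l N)$, $\tau\eta_l$, $\eta_l$ that must vanish and extracts the corresponding conditions on $a,b,c$), and your added remark on the learning-rate admissibility constraint is a careful detail the paper omits.
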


\begin{proof}
	The bound of Theorem \ref{theo:relaxed_sufficient_conditions} converges to 0 if the following quantities also do: $\eta_l W$, $\frac{1}{\eta_l N}$, $\tau \eta_l$, $\eta_l$. We get the following conditions on $a$, $b$, and $c$: $ - c + a< 0$, $ c- 1< 0$, $ b - c< 0$, $ - c < 0$, which completes the proof.
\end{proof}

By construction and definition of $q_i$, Assumption \ref{ass:window} is always satisfied with $W=N$. However, Corollary \ref{cor:sufficient_learning_rate} shows that when $W=N$, no learning rate $\eta_l$ can be chosen such that the learning process converges to $\vtheta^*$. 
Also, Corollary \ref{cor:sufficient_learning_rate} shows that Assumption \ref{ass:answering_time} can be relaxed. Indeed, Assumption \ref{ass:answering_time} implies that $\tau = \Ocal (1)$ and Corollary \ref{cor:sufficient_learning_rate} shows that $\tau = o(N)$ is sufficient.
We show in Section \ref{sec:applications} that all the considered optimization schemes satisfy $\tau = \Ocal (1)$ and $W = \Ocal (1)$, and also depend of the clients hardware capabilities and amount of participating clients $M$.
\section{Applications} \label{sec:applications}

In this section, we show that the formalism of Section \ref{sec:background} can be applied to a wide-range of optimization schemes, demonstrating the validity of the conclusions of Corollary \ref{cor:bias_distributions} and Theorem \ref{theo:relaxed_sufficient_conditions} (Section \ref{sec:convergence}). 
When clients have identical data distributions, the sufficient conditions of Corollary \ref{cor:bias_distributions} are always satisfied (Section \ref{sec:convergence}). 
In the heterogeneous case, these conditions can also (theoretically) be satisfied. It suffices that every client has a non-null participation probability, i.e. $\mathbb{P}( T_i^n \le \Delta t^n) >0$ such that there exists an appropriate $d_i(n)$ satisfying $\tilde{q}_i(n) = p_i$.
Yet, in practice clients generally may not even know their update time distribution $\mathbb{P}( T_i^n)$ making the computation of $d_i(n)$ intractable. In what follows, we thus focus on Theorem \ref{theo:relaxed_sufficient_conditions} to obtain the close-form of $\epsilon$, which only requires from the server to know the clients time $\tau_i$.

Theorem \ref{theo:relaxed_sufficient_conditions} provides a close-form for the convergence bound $\epsilon$ of an optimization scheme in function of the amount of server aggregation rounds $N$.
We first introduce in Section \ref{subsec:specifications} our considerations for the clients hardware and data to instead express $\epsilon$ in function of the training time $T$.
The quantity $\epsilon$ also depends on the optimization scheme time policy $\Delta t^n$ through $\alpha$, $\beta$ and $\tau$, and on the clients data heterogeneity through $R(\{\Lcal^n\})$ and $W$. We provide their close-form for synchronous \textsc{FedAvg} (Section \ref{subsec:synchronous}), asynchronous \textsc{FedAvg} (Section \ref{subsec:async_FL}), 
% where a new global model is created whenever a client updates its work to the server, 
and \textsc{FedFix} (Section \ref{subsec:FedFix}), a novel asynchronous optimization scheme motivated by Section \ref{subsec:relaxed_sufficient}.
% where aggregations are made at fixed clock times using all the available clients contribution received during the fixed time window.
Finally, in Section \ref{subsec:generalization}, we show that the conclusions drawn for synchronous/asynchronous \textsc{FedAvg} and \textsc{FedFix} can also be extended to other distributed optimization schemes with delayed gradients.
Of course, similar bounds can seamlessly be derived for centralized learning and client sampling, which we differ to Appendix \ref{app:sec:applying} to focus on asynchronous FL in this section.

\subsection{Heterogeneity of clients hardware and data distributions}\label{subsec:specifications}

\textbf{Clients importance.}
%We provide $\epsilon$ for centralized learning, synchronous FL, asynchronous FL and hybrid FL in function of the amount of server aggregations $N$ and the amount of time $T$ elapsed since the beginning of the learning process.
We restrict our investigation to the case where clients have identical aggregation weights during the learning process, i.e. $d_i(n) = d_i$.  
We also consider identical client importance $p_i = 1/M$.
We can therefore define the averaged optimum residual $\Sigma$ defined as the average of the clients SGD evaluated on the global optimum, i.e.
\begin{equation}
	\Sigma
	\coloneqq
	\frac{1}{M} \sum_{ i = 1 }^M \EE{\vxi_i}{\norm{\nabla \Lcal_i(\vtheta^*, \vxi_i)}^2}
	.
\end{equation}
When clients have identical data distributions, $\Sigma$ can be simplified as $\Sigma = \EE{\vxi}{\norm{\nabla \Lcal(\vtheta^*, \vxi)}^2}$, and $\Sigma =0$ when clients perform GD. We note that in the DL and FL literature $\Sigma$ is often simplified by assuming bounded variance of the stochastic gradients, i.e. $\Sigma \le \sigma^2$, where $\sigma^2$ is the bounded variance of any client SG.

%We compare the theoretical bounds we obtain for these state-of-the-art optimization schemes with the state-of-the-art bounds obtained for each of these learning procedures. 
%Lastly, we consider the federated learning case where clients have heterogeneous data and perform $K>1$ SGD step. Our theory still works when assuming that clients have identical data, e.g. the server allocates data samples to clients, or perform $K=1$ SGD step, e.g. distributed learning, but we let the specific derivation to the reader.

\textbf{Clients computation time.} 
In the rest of this work, we consider that clients guarantee reliable computation and communication, although with heterogeneous hardware capabilities, i.e. $\exists \tau_i \in \mathbb{R},\text{ s.t. }T_i = \tau_i$.
Without loss of generality, we assume that clients are ordered by increasing $\tau_i$, i.e. $\tau_i \le \tau_{i+1}$, where the unit of $\tau_i$ is such that $\tau_i$ is an integer. 
In what follows, we provide the close form of $d_i$ for all the considered optimization schemes. This derivation still holds for applications where clients have unreliable hardware capabilities that can be modeled as an exponential distribution, i.e. $T_i \sim \exp(\tau_i^{-1})$ which gives $\E{T_i} = \tau_i$.

\textbf{Clients data distributions.}
Unless stated otherwise, we will consider the FL setting where each client has its unique data distribution. 
Therefore, clients have heterogeneous hardware and non-iid data distributions. 
The obtained results can be simplified for the DL setting where a dataset is made available to $M$ processors.
In this special case, clients have iid data distributions ($\Zcal_i = \Zcal_1$)
, and identical computation times ($\tau_i = \tau_1$, $W = M$, and $\tau = M-1$).

\textbf{Learning rates.} 
For sake of clarity, we ignore the server learning rate when expressing the convergence bounds $\epsilon$, i.e. $\eta_g = 1$. 
Also, we consider a local learning rate $\eta_l$ inversely proportional to the serial amount of SGD included in the global model, i.e. $\eta_l \propto 1 / \sqrt{KN}$, consistently with the rest of the distributed optimization literature.

We propose Table \ref{tab:summary_comparison} to summarize the close form or bounds of the different parameters used in Section \ref{sec:convergence}.

\begin{table}
	\begin{center}
		\begin{tabular}{ | c || c | c | c| }
			\hline
			& Sync. \textsc{FedAvg} 
			& Async. \textsc{FedAvg} 
			& \textsc{FedFix}\\
			\hline
			$d_i$   
			& $= p_i$ 
			& $= \left[\sum_{ i = 1 }^M \frac{1}{\tau_i}\right] \tau_i p_i$
			& $= \ceil{ \tau_i /\Delta t } p_i$
			\\
			$N$ 
			&   $T/\tau_M $
			& $\sum_{i=1}^M T/\tau_i $
            & $ T/\Delta t $
			\\
			$\Delta t$ 
			& = $\max T_i^n$ 
			& = $\min T_i^n$
			& = $\Delta t$
			\\
			$\alpha$ 
			& $1$ 
			& 0
			& 1
			\\
			$\beta$
			& 0 
			& $\max d_i \le \tau_m / \tau_0$
			& 0
			\\
			$\tau$ 
			& 0 
			& $\Omega(M)$, $\Ocal(M \tau_M / \tau_0)$
			& 0, $\floor{\tau_m / \tau_0}$
			\\
			$W$ 
			& 1 
			& $\Omega(M)$, $\Ocal(M (\tau_M)^M)$
			& 1, $M \ceil{\tau_m / \tau_0}^M$
			\\
			$R(\{\Lcal^n\})$ 
			& $= 0$
			& $= \frac{1}{M} \sum_{ i = 1 }^M \left[\Lcal_i(\vtheta^*) - \Lcal_i(\vtheta_i^*)\right]$
			& $\le \frac{1}{M} \sum_{ i = 1 }^M \left[\Lcal_i(\vtheta^*) - \Lcal_i(\vtheta_i^*)\right]$
			\\
			\hline
		\end{tabular}
	\end{center}

\caption{The different variables used to account for the importance of clients or data distributions at every optimization round and during the full FL process. For $\tau$ and $W$, we give two values which correspond to their respective lower and upper bound.}
\label{tab:summary_comparison}
\end{table}

\subsection{\textsc{FedAvg}, Synchronous Federated Learning}\label{subsec:synchronous}

As described for \textsc{FedAvg} in Section \ref{subsec:agg_schem}, at every optimization round, the server sends to the clients the current global model to perform $K$ SGD steps on their own data before returning the resulting model to the server. Once every client performs its local work, the new global model is created as the weighted average of the clients contribution. 
The time required for an optimization step is therefore the one of the slowest client ($\Delta t^n = \max_i(T_i^n)$), and every client is considered ($\mathbb{P}( T_i^n \le \Delta t^n) = 1$). Hence, $\alpha= 1$, $\beta = 0$, and setting $d_i = p_i$ is sufficient to satisfy the conditions of Corollary \ref{cor:bias_distributions} (and thus the ones of Theorem \ref{theo:relaxed_sufficient_conditions}) ensuring that FL converges to its optimum \citep{FedNova}. The term $\epsilon$ then reduces to
\begin{align}
\epsilon_{\textsc{FedAvg}}
&=\frac{1}{\sqrt{K N} } \norm{\vtheta^0 -\vtheta^*}^2 
+ \Ocal \left(\frac{K-1}{N} \Sigma \right)
+ \Ocal\left(\frac{1}{\sqrt{K N} } \frac{1}{M} \Sigma 
\right)
\label{eq:epsilon_sync_FL}
.
\end{align}
The second element of equation (\refeq{eq:epsilon_sync_FL}) accounts for the clients update disparity through their amount of local work $K$ between two server aggregations, and is proportional to the SG variance $\Sigma$.
The third element benefits of the distributed computation by being proportional to $1/M$. Equation (\refeq{eq:epsilon_sync_FL}) is consistent with literature on convex distributed optimization with \textsc{FedAvg} including \cite{FedNova, Khaled2020}.

\subsection{Asynchronous \textsc{FedAvg}}
\label{subsec:async_FL}

With \textsc{FedAvg}, every client waits for the slowest one to perform its local work, and cannot contribute to the learning process during this waiting time. To remove this bottleneck, with asynchronous \textsc{FedAvg}, the server creates a new global model whenever it receives a client contribution before sending it back to this client. 
For in depth discussion of Asynchronous \textsc{FedAvg}, please refer to \cite{AFLsurvey}. 

With asynchronous \textsc{FedAvg},
clients always compute their local work but each on a different global model, giving $\Delta t^n = \min_i T_i^n$, $\alpha = 0$, and $\beta = \max_i d_i$. In addition, while the slowest client updates its local work, other clients performs a fix amount of updates (up to $\ceil{\tau_M/\tau_i}$). By scaling this amount of updates by the amount of clients sending updates to the server, we have 
\begin{equation}
    \tau = \Ocal\left(\frac{\tau_M}{\tau_0}(M-1)\right)
    .
\end{equation} 

We define $lcm(\{x_i\})$ the function returning the least common multiplier of the set of integers $\{x_i\}$. Hence, after every $\nu \coloneqq lcm(\{\tau_i\})$ time, each client has performed $\nu /\tau_i$ optimization rounds and the cycle of clients update repeats itself. Thus, the smallest window $W$ satisfies
\begin{equation}
	W
	= \sum_{ i = 1 }^M \nu / \tau_i
	.
\end{equation}
By construction, $\nu \ge \tau_M$ and thus $W = \Omega(M)$, with $W=M$ when clients have homogeneous hardware ($\tau_M = \tau_0$). In the worse case, every $\tau_i$ is a prime number, and we have $\nu / \tau_i \le \left(\tau_M\right)^{M-1}$, which gives $W = \Ocal(M \left( \tau_M\right)^{M-1})$. 
% When clients have homogeneous hardware, we have $W = \Ocal(M)$, while when their hardware is heterogeneous, $W$ is exponentially proportional to $M$. 
% This explains why convergence for asynchronous FL applications gets more challenging as the amount of clients $M$ increases, since $W$ captures the hardware heterogeneity among clients.
In a cycle of $W$ optimization rounds, every client participates $\nu /\tau_i$ times to the creation of a new global model. 
Therefore, we have $q_i(n) = d_i$ for the $\nu / \tau_i$ participation of client $i$, and $q_i(n) = 0$ otherwise.
Hence, the sufficient conditions of Theorem \ref{theo:relaxed_sufficient_conditions} are satisfied when 
\begin{equation}
\label{eq:aggreg_weights_async}
q_i
= \frac{1}{W}\sum_{ n = kW }^{(k+1)W - 1 }q_i(n) 
= \frac{1}{\sum_{ i = 1 }^M \nu / \tau_i}\frac{ \nu}{\tau_i }
d_i
= p_i
\Rightarrow
d_i
= \left[\sum_{ i = 1 }^M \frac{1}{\tau_i}\right] \tau_i p_i
.
\end{equation}
The client weight calculated in equation (\refeq{eq:aggreg_weights_async}) is constant and only depends on the client importance $p_i$ (set and thus known by the server), and on the clients computation time $\tau_i$ (eventually estimated by the server after some clients updates). The condition on $d_i$ can be further simplified by accounting for the server learning rate $\eta_g$. Coupling equation (\refeq{eq:aggreg_SCAFFOLD_general}) with equation (\refeq{eq:aggreg_weights_async}) gives $\eta_g d_i \propto \tau_i p_i$, which is sufficient to guarantee the convergence of asynchronous FL to its optimum. 
Finally, by bounding $\tau_i$, we also have $\beta = \max_i d_i \le \tau_M/ \tau_0$, bounded the hardware computation time heterogeneity.
%Therefore, knowing all the clients expected computation times $\tau_i$ is not needed to infer $d_i$. 

The disparity between the optimized objectives $R(\{\Lcal^n\})$ at different optimization rounds also slows down the learning process. Indeed, at every optimization round, only a single client can participate with probability $1$. As such, we have $\Lcal^n = d_i \Lcal_i$ which, thanks to the assumption $p_i = 1/M$, yields 
\begin{equation}
	R( \{\Lcal^n\})
	= \frac{1}{M} \sum_{ i = 1 }^M \left[\Lcal_i(\vtheta^*) - \Lcal_i(\vtheta_i^*)\right]
	.
\end{equation}
Finally, we simplify the close-form of $\epsilon$ (Theorem \ref{theo:relaxed_sufficient_conditions}) for asynchronous \textsc{FedAvg} to get
\begin{align}
\epsilon_{Async}
&= 
\frac{1}{\sqrt{KN}} \norm{\vtheta^0 -\vtheta^*}^2
+ \Ocal \left(\frac{K-1}{N} \Sigma \right)
% + \Ocal\left(\frac{1}{\sqrt{KN}}
% \left[R(\{\Lcal^n\})
% + \frac{\tau_M}{\tau_0}\Sigma\right]\right)
+ \Ocal\left(\frac{\tau_M}{\tau_0} \frac{1}{\sqrt{KN}} \left[ R(\{\Lcal^n\}) + \Sigma \right]\right)
\nonumber\\
&
% + \Ocal \left( \frac{K}{N} M^2 \left(\frac{\tau_M}{\tau_0}\right)^2
% \left[ \Sigma + R(\{\Lcal^n\}) \right]\right)
% &
+ \Ocal\left( \left(\frac{\tau_M}{\tau_0}\right)^3 \frac{K}{N} M^2 \left[ R(\{\Lcal^n\}) + \Sigma \right]\right)
+ \Ocal\left(  \frac{1}{\sqrt{KN}} (W-1)\right)
\label{eq:epsilon_async}
.
\end{align}
With equation (\refeq{eq:epsilon_async}), we can compare synchronous and asynchronous \textsc{FedAvg}.
The first and second asymptotic terms are identical for the two learning algorithms, while the third asymptotic term is scaled by the hardware characteristics $\tau_M/\tau_0$ instead of $1/M$ in $\textsc{FedAvg}$, with the addition of a non null residual $R(\{\Lcal^n\})$ for asynchronous \textsc{FedAvg}. 
However, the fourth and fifth term are unique to asynchronous \textsc{FedAvg}, and explains why its convergence gets more challenging as the amount of clients $M$ increases.
The impact of the hardware heterogeneity is also identified through the importance of $\tau_M/ \tau_0$ in the third term. With no surprise, for a given optimization round, synchronous \textsc{FedAvg} outperforms its asynchronous counterpart. However, in $T$ time, the server performs 
\begin{equation}
    N = \sum_{i=1}^M T / \tau_i
\end{equation}
aggregations with asynchronous \textsc{FedAvg} against $T/ \tau_M$ for synchronous \textsc{FedAvg}. With asynchronous \textsc{FedAvg}, the server thus performs at least $M$ times more aggregations than with synchronous \textsc{FedAvg}. As a result, the first two terms of equation (\ref{eq:epsilon_async}), which are proportional to how good the initial model is $\norm{\vtheta_0 - \vtheta^*}$, decrease faster with asynchronous \textsc{FedAvg} at the detriment of an higher convergence residual coming for the two last terms.

\textbf{Comparison with asynchronous DL and \textsc{FedAvg} literature.} The convergence rates obtained in the convex distributed optimization literature relies on additional assumptions to ours, with which we retrieve their proposed convergence rate. 
To the best of our knowledge, only \cite{SlowLearnersAreFast} considers non-iid data distributions for the clients. When assuming $W = \Ocal(\tau)$ and $\eta_l \propto 1/ \sqrt{\tau N}$, we retrieve a convergence rate$\sqrt{\tau/ N}$. 

We also match convergence rates for literature with iid client data distributions and $K=1$. With $M = \Ocal(\sqrt{N})$, then 
we have $\Ocal(1/\sqrt{N})$ \citep{DistributedDelayed, AsyncParallelSGDforNonconvex}.
When $\eta_l = \Ocal( 1 /\tau \sqrt{KN})$, we retrieve $\tau/N + 1/\sqrt{N}$ \citep{ErrorFeedbackFramework, StichCriticalParameters}.

\subsection{\textsc{FedFix}}
\label{subsec:FedFix}

% The server waiting time for synchronous and asynchronous FL is stochastic and $\Delta t^n$ depends on the clients remaining update time $T_i^n$. 
% With \textsc{FedFix}, we propose to consider the creation of the $n^{th}$ global model at a given clock time $T^n$ decided by the server, which makes the quantity $\Delta t^n = T^{n+1} - T^n$ independent from the clients remaining update time $T_i^n$. 
% For sake of convenience, we further assume that the time between optimization rounds is identical, i.e. $\Delta t^n = \Delta t$, but the following results can be derived for other fixed time policies $\{\Delta t^n\}$. 

The analysis of asynchronous \textsc{FedAvg} (Section \ref{subsec:async_FL}) and its comparison with synchronous \textsc{FedAvg} (Section \ref{subsec:synchronous}), shows that asynchronous \textsc{FedAvg} is not scalable to large cohort of clients. We thus propose \textsc{FedFix} combining the strong points of synchronous and asynchronous \textsc{FedAvg}, where the server creates the new global model at a fixed time $t^n$ with the contributions received since $t^{n-1}$. Therefore, the server does not wait for every client, contrarily to synchronous \textsc{FedAvg}, and considers more than one client per aggregation to have more stable aggregations, contrarily to asynchronous \textsc{FedAvg}. We provide in Figure \ref{fig:illustration} an illustration of \textsc{FedFix} with two clients.

With \textsc{FedFix}, an iteration time $\Delta t^n = t^{n+1} - t^n$ is decided by the server and is independent from the clients remaining update time $T_i^n$. 
For sake of convenience, we further assume that the time between optimization rounds is identical, i.e. $\Delta t^n = \Delta t$, but the following results can be derived for other fixed time policies $\{\Delta t^n\}$. 
% The server waiting time for synchronous and asynchronous FL is stochastic and $\Delta t^n$ depends on the clients remaining update time $T_i^n$. 
% With \textsc{FedFix}, we propose to consider the creation of the $n^{th}$ global model at a given clock time $T^n$ decided by the server, which makes the quantity $\Delta t^n = T^{n+1} - T^n$ independent from the clients remaining update time $T_i^n$. 
% For sake of convenience, we further assume that the time between optimization rounds is identical, i.e. $\Delta t^n = \Delta t$, but the following results can be derived for other fixed time policies $\{\Delta t^n\}$. 
Therefore, $T_i^n$ and $T_j^n$ are independent, and so are $\omega_i$ and $\omega_j$, which gives $\alpha = 1$ and $\beta = 0$.

Every client sends an update to the server in $N_i' = \ceil{T_i / \Delta t }$ optimization steps. 
% To prevent aggregations with no client contributions, we assume $\Delta t \ge \tau_1$, and the update delay $\tau$ satisfies $\tau = \max_i\ceil{ T_i / \Delta t } - 1 = \max_i \floor{T_i / \Delta t} = \Ocal(1)$. 
Contrarily to asynchronous \textsc{FedAvg}, we thus have $\tau = \ceil{\tau_m /\Delta t} = \Ocal(1)$, which is independent from the amount of participating clients $M$. 
In this case,  the smallest window $W$ satisfies $W = lcm(\{N_i'\})$, and clients update $W/ N_i'$ times their work to the server during the window $W$. Therefore, satisfying the conditions of Theorem \ref{theo:relaxed_sufficient_conditions} requires
\begin{equation}
d_i
= \ceil{ \frac{\tau_i }{\Delta t} } p_i
.
\label{eq:aggreg_weights_hybrid}
\end{equation}
With equation (\refeq{eq:aggreg_weights_hybrid}), we can notice the relationship between \textsc{FedFix} and synchronous or asynchronous \textsc{FedAvg}. When $\Delta t \ge \tau_i$, client $i$ participates to every optimization round and is thus considered synchronously, which gives $d_i = p_i$. When $\Delta_t \ge \tau_M$, we retrieve synchronous FL and $d_i = p_i$ for every client. On the contrary, for asynchronous FL, when $\Delta t \ll \tau_i$, we obtain $\ceil{\tau_i / \Delta t}\approx \tau_i /\Delta t$ and we retrieve $\eta_g d_i = \eta_g \left[ \tau_i / \Delta t \right] p_i \propto \tau_i p_i$.

%\begin{property}
%	We denote by $X_A$ and $X_H$ the variable $X$ for respectively asynchronous and hybrid FL.
%	Let us consider $\tau_1 \le  H \le \tau_M$ where $H$ is either bigger or a divider of $\tau_i$, then we have
%	\begin{equation}
%	\tau_H \le \tau_A - (M-2)
%	\text{ and }
%	W_H \le \frac{1}{M} W_A
%	.
%	\end{equation}
%\end{property}
%
%\begin{proof}
%	\begin{equation}
%	\tau_H 
%	=  \floor{\tau_M / H} 
%	\le \ceil{ \tau_M / \tau_1 }
%	\le \sum_{ i = 1 }^{M-1} \ceil{ \tau_M / \tau_i } - \sum_{ i = 2 }^{M-1} \ceil{ \tau_M / \tau_i }
%	= \tau_A - (M - 2)
%	\end{equation} 
%	$H$ is either bigger or a divider of $\tau_i$. Therefore, we have $lcm (\{N_i'\}) \le lcm(\tau_i)/ H$ with equality if and only if $H \le \tau_1$. As a result, we have
%	\begin{equation}
%	W_H 
%	\le \frac{1}{\sum_{ i = 1 }^M \frac{H}{\tau_i}} W_A
%	\le \frac{1}{M \min_i \{H / \tau_i\} } W_A
%	= \frac{1}{M} \frac{\tau_M}{H} W_A
%	\le \frac{1}{M}W_A
%	,
%	\end{equation}
%	where the last inequality comes from $H \le \tau_M$.	
%\end{proof}

Regarding the disparity between the local objectives $R\{\Lcal^n\}$, we know that a client participates to an optimization round with $q_i(n) =d_i$. We thus have $\Lcal^n = \sum_{ i \in S_n } d_i \Lcal_i$, where $S_n$ is the set of the participating clients at optimization step $n$. Considering that $\Lcal^n (\bar{\vtheta}^n)\ge \sum_{ i \in S_n } d_i \Lcal_i(\vtheta_i^*)$, the close form of \textsc{FedFix} is bounded by the one of of asynchronous \textsc{FedAvg}, i.e.
\begin{equation}
	R(\{\Lcal^n\})
	\le  \frac{1}{M} \sum_{ i = 1 }^M \left[\Lcal_i(\vtheta^*) - \Lcal_i(\vtheta_i^*)\right]
	.
\end{equation}
Finally, we simplify the close-form of $\epsilon$ (Theorem \ref{theo:relaxed_sufficient_conditions}) for \textsc{FedFix} to get
% \begin{align}
% \epsilon_{\textsc{FedFix}}
% &= \Ocal\left( \frac{1}{\sqrt{KN}} 
% \left[ \norm{\vtheta^0 -\vtheta^*}^2 
% + R(\{\Lcal^n\}) +\frac{1}{M}\ceil{ \frac{\tau_M }{\Delta_t} } \Sigma 
% \right]
% \right)
% + \Ocal \left(\frac{K-1}{N} \Sigma \right)
% \nonumber\\
% &
% % + \Ocal \left(\tilde{\eta}^2 K^2  \tau^2 R(\{\Lcal^n\}) \right) 
% % + \Ocal \left(\tilde{\eta} R(\{\Lcal^n\}) \right)
% + \Ocal \left( \frac{K}{N} \ceil{\tau_m /\Delta t}^2 
% \left[ R(\{\Lcal^n\}) + \ceil{\tau_m /\Delta t} \frac{1}{M}\Sigma \right] \right)
% + \Ocal\left(  \frac{1}{\sqrt{KN}} (W-1)\right) 
% \label{eq:epsilon_FedFix}
% .
% \end{align}
\begin{align}
	\epsilon_{\textsc{FedFix}}
	&= \frac{1}{ \sqrt{KN}}\E{ \norm{\vtheta^{0} - \vx}^2 }
	+ \Ocal \left(\frac{K-1}{N} \left[ R(\{\Lcal^n\}) + \Sigma \right] \right) 
% 	+ \Ocal \left( \frac{1}{ \sqrt{KN}} \left[ R(\{\Lcal^n\}) + \ceil{\tau_m /\Delta t} \frac{1}{M} \Sigma \right] \right)
	\nonumber\\
	& 
	+ \Ocal \left( \left[ \frac{1}{ \sqrt{KN}} +  \frac{K}{ N} \ceil{\frac{\tau_m}{ \Delta t}}^2 \right] \left[ R(\{\Lcal^n\}) + \ceil{\frac{\tau_m}{ \Delta t}} \frac{1}{M} \Sigma \right] \right)
% 	\nonumber\\
	+ \Ocal\left(  \frac{1}{\sqrt{KN}} (W-1)\right) 
	.\label{eq:epsilon_FedFix}
\end{align}
The first two elements of equation (\refeq{eq:epsilon_FedFix}) are identical for \textsc{FedFix}, synchronous and asynchronous \textsc{FedAvg}. However, thanks to lower values for the different variables (cf Table \ref{tab:summary_comparison}), the last two asymptotic terms of the convergence bound are smaller for \textsc{FedFix} than for asynchronous \textsc{FedAvg}, equation (\refeq{eq:epsilon_FedFix}). Similarly, these two terms are larger with \textsc{FedFix} than with synchronous \textsc{FedAvg}.
The hardware heterogeneity and the amount of participating clients still impacts the convergence bound through $\ceil{\tau_M/ \Delta t}$ and $W$, but can be mitigated with proper selection of $\Delta t$. 
Therefore, after $N$ optimization rounds, synchronous \textsc{FedAvg} outperforms \textsc{FedFix} which outperforms in turn  asynchronous \textsc{FedAvg}. However, in $T$ time, the server performs 
$ N = T / \Delta t$
aggregations with \textsc{FedFix} against $T/ \tau_M$ for synchronous \textsc{FedAvg}. With asynchronous \textsc{FedAvg}, the server thus performs at least $\tau_M/\Delta t$ times more aggregations than with synchronous \textsc{FedAvg}. Overall, $\Delta t$ can be considered as the level of asynchronicity given to Algorithm \ref{alg:FL_with_delayed_gradients}, with \textsc{FedAvg} when $\Delta t  = \tau_M$ and asynchronous \textsc{FedAvg} when $\Delta t \ge \tau_M$.

In the DL case, clients have identical computation time ($\tau_1 = \tau_m$), and we retrieve the convergence bound of synchronous \textsc{FedAvg}.

In addition, we can increase the waiting time for the clients update, since the learning process converges and gets closer to the optimum of optimization problem (\ref{eq:original_problem_general}), to reach a behavior similar to the one of synchronous FL. Indeed, for Theorem \ref{theo:relaxed_sufficient_conditions} to hold, we only need the same optimization time rounds $\Delta t$ over a window $W$ 

\subsection{Generalization}
\label{subsec:generalization}

Coupled with the theoretical method developed in \cite{FedNova}, the proof of Theorem \ref{theo:convergence_convex} can account for FL regularization methods \citep{FedProx, FedDane, FedDyn}, other SGD solvers \citep{Adam, AdaGrad,pmlr-v89-li19c, OnTheLienarSpeedUp, Yu_Yang_Zhu_2019, haddapour2019trading}, and/or gradient compression/quantization \citep{FedPaq, QSparse, Atomo, koloskova2020decentralized}. 

We also note that Theorem \ref{theo:relaxed_sufficient_conditions} can be applied to other distributed optimization schemes using different waiting time policy $\Delta t^n$. With \textsc{FedBuff} \citep{FedBuff}, the server waits for $m$ client updates to create the new global model. The server then communicates to these clients the new global model, while the other clients keep performing local work on the global model they received.

In this section, the sufficient conditions of Theorem \ref{theo:relaxed_sufficient_conditions} regarding the expected aggregation weights $q_i(n)$ were applied to obtain proper aggregation weight $d_i$. We keep identical clients local learning rate $\eta_l$ and amount of local work $K$. 
We could instead get the close-form of a client specific learning rate $\eta_l(i)$ or amount of local work $K(i)$ using the gradient formalization of \cite{FedNova}.

\section{Experiments}\label{sec:experiments}

In this section, we experimentally demonstrate the theoretical claims of Section \ref{sec:convergence} and \ref{sec:applications}.
We first introduce the information needed to understand how the experiments are run in Section \ref{subsec:setting}. Finally, in Section \ref{subsec:exp_results}, we provide our experiments and their interpretation. 

\subsection{Experimental Setting}\label{subsec:setting}

We introduce in this subsection the dataset and the predictive models used for federated optimization, the hardware scenarios proposed to simulate hardware heterogeneity, the clients aggregation weights strategies, and how the different hyperparameters are set.

\textbf{Optimization Problems.}
We consider learning a predictive model for optimization problem (\ref{eq:original_problem_general}) where clients have identical importance ($p_i = 1/M$) based on the following datasets with their associated learning scenarios. 
\begin{itemize}
	
%	\item \textbf{Synthetic}. 
%	Every client has 100 data features generated with $X_i \sim U(0, 1)$, $\mu_i \sim U(0, 1)$, and predictive value $y_i = X_i \mu_i$. With asynchronous federated learning, we learn the optimal model parameters $\vtheta$ minimizing the federated mean squared error.
	
	\item \textbf{MNIST} \citep{MNIST} and \textbf{MNIST-shard}. MNIST is a dataset of 28x28 pixel grayscale images of handwritten single digits between 0 and 9 composed of 60 000 training and 10 000 testing samples split equally among the clients. 
	We use a logistic regression to predict the images class. Clients are randomly allocated digits to match their number of samples. 
	With MNIST-shard, we split instead data samples among clients using a Dirichlet distribution of parameter 0.1, i.e. $Dir(0.1)$.
	Therefore, with MNIST and MNIST-shard, we evaluate our theory on a convex optimization problem.
	
%	\item \textbf{MNIST-shard}. MNIST is split equally among clients with a Dirichlet distribution of parameter 0.1, i.e. $Dir(0.1)$.
%	Therefore, with MNIST-shard, we evaluate our theory on a convex optimization where clients have non-iid data.
	
%	\item \textbf{FEMNIST} \citep{Leaf}. FEMNIST is a federated version of MNIST where a client has data samples written by the same person. We train a logistic regression to have a convex optimization where clients have different data distributions. 
%%	By default, clients have different amount of data which makes FEMNIST unbalanced.
	
	\item \textbf{CIFAR-10} \citep{CIFAR-10}. The dataset consists of 10 classes of 32x32 images with three RGB channels.  There are 50000 training and 10000 testing examples. The model architecture was taken from \citep{FedAvg} which consists of two convolutional layers and a linear transformation layer to produce logits. 
	Clients get the same amount of samples but their percentage for each class vary and is determined with a Dirichlet distribution of parameter 0.1, i.e. $Dir(0.1)$ \citep{FL_and_CIFAR_dir}. 
	
	\item \textbf{Shakespeare} \citep{Leaf}. We study a LSTM model for next character prediction on the dataset of \textit{The Complete Works of William Shakespeare}. The model architecture is composed of a two-layer LSTM classifier containing 100 hidden units with an 8 dimensional embedding layer taken from \citep{FedProx}. The model takes as an input a sequence of 80 characters, embeds each of the characters into a learned 8-dimensional space and outputs one character per training sample after 2 LSTM layers and a fully connected one.
\end{itemize}

\textbf{Hardware Scenarios.}
In the following experimental scenarios, clients computation time are obtained according to the time policy $FX$. We consider that clients have fixed update times that can be up to X\% slower than the fastest client. Clients computation time are uniformly distributed from the upper to the lower bound.
Clients have thus identical hardware with $F0$. To simulate heterogeneous clients hardware, we consider the time scenario $F80$.

\textbf{Clients Aggregation Weights.}
To compare asynchronous FL with and without the close-form of $d_i$ provided in Section \ref{sec:applications}, we introduce \textsc{Identical} where $d_i =1$ for every client regardless of the time scenario $FX$, and \textsc{Time-based} where $d_i$ satisfies equation (\ref{eq:aggreg_weights_async}) derived in Section \ref{sec:applications}.

\textbf{Hyperparameters.}
Unless specified otherwise, we consider a global learning rate $\eta_g = 1$. We finetune the local learning rate $\eta_l$ with values ranging from $10^{-5}$ to $1$. We consider a batch size $B=64$ for every dataset. 
We report mean and standard deviation on 5 random seeds. 
Every comparison of \textsc{Identical} with \textsc{Time-Based} is done using the same local learning rate. We give an advantage to \textsc{Identical} by finetuning the learning rate on this clients aggregation weight scenario.

\subsection{Experimental Results}\label{subsec:exp_results}

We experimentally show that asynchronous FL has better performances with \textsc{Time-based} than with \textsc{Identical}, and thus we demonstrate the correctness of Theorem \ref{theo:relaxed_sufficient_conditions} with Figure \ref{fig:improvement_conv_async} in Section \ref{subsubsec:convergence_async}. We however show in Figure \ref{fig:amount_of_SGD} that \textsc{Time-based} is less stable than \textsc{Identical} to the change in amount of local work $K$. Finally, we compare synchronous \textsc{FedAvg} and asynchronous \textsc{FedAvg} in Figure \ref{fig:all_opt_scheme}. 

\subsubsection{Impact of the Clients Aggregation Weights on Asynchronous \textsc{FedAvg}}\label{subsubsec:convergence_async}

\begin{figure}
	\includegraphics[width=\linewidth]{./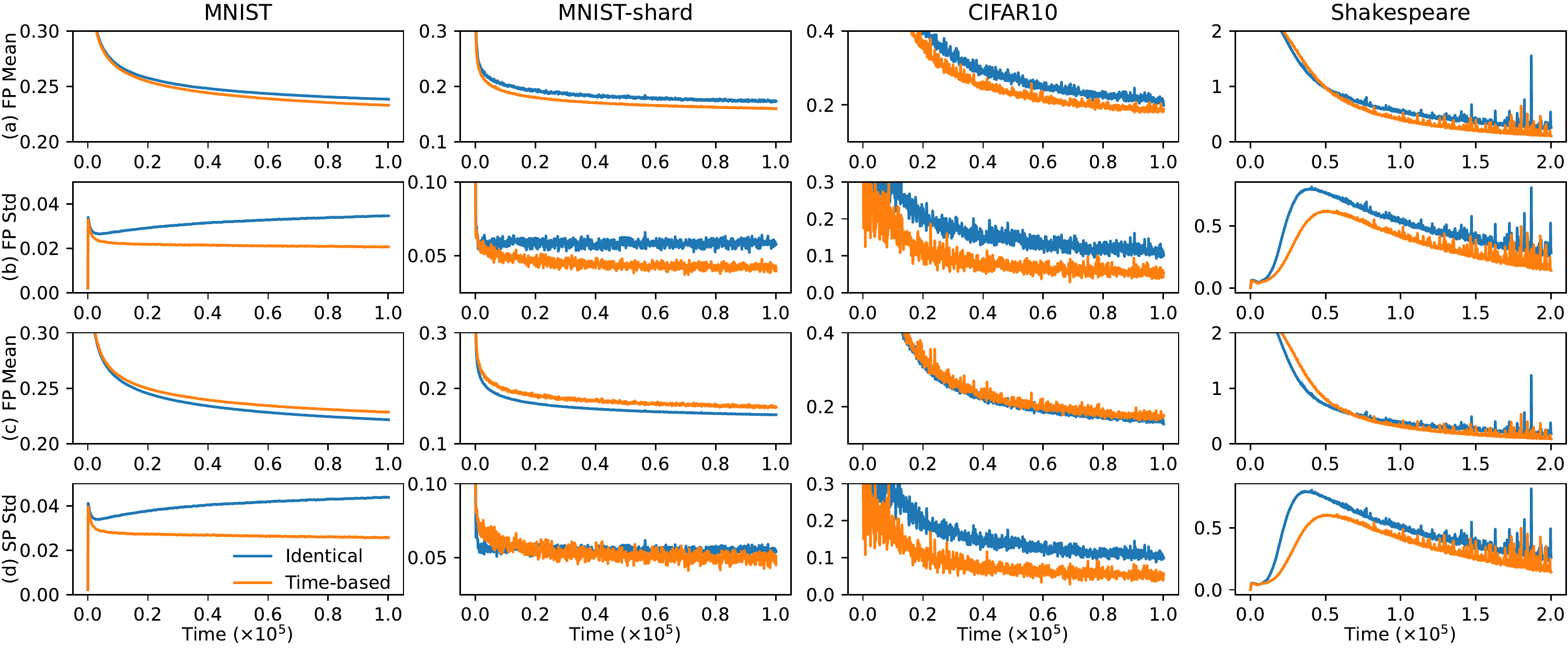}
	\caption{We consider the loss evolution over time of federated problem (\ref{eq:original_problem_general}) (FP) and surrogate problem (\ref{eq:surrogate_problem}) (SP) for MNIST, MNIST-shard, CIFAR10, and Shakespeare; and the respective standard deviation of the loss over clients in (b) and (d).
	We consider $M=10$ for a time scenario $F80$ with $K=1$. 
	}
	\label{fig:improvement_conv_async}
\end{figure}

Figure \ref{fig:improvement_conv_async}(a) experimentally shows the interest of coupling asynchronous FL with \textsc{Time-based} instead of \textsc{Identical} for different applications (MNIST, MNIST-shard, CIFAR10, and Shakespeare). The learnt model with \textsc{Time-based} has better minima on the federated problem (\ref{eq:original_problem_general}). 
In addition, Figure \ref{fig:improvement_conv_async}(b) shows that losses across clients are more homogeneous with \textsc{Time-based}, resulting in generally lower standard deviations. 

Focusing on MNIST and MNIST-shard, we see the impact of data heterogeneity on the learnt model performances. With \textsc{Identical}, asynchronous FL converges to a suboptimum point and the differences between the learnt model losses is twice as large for MNIST-shard than for MNIST, Figure \ref{fig:improvement_conv_async}(a). 
Figure \ref{fig:improvement_conv_async}(b) shows a similar result concerning the clients loss heterogeneity.
Therefore, data heterogeneity degrades the suboptimum loss and cannot be ignored in asynchronous FL applications. Indeed, \textsc{Identical} and \textsc{Time-based} curves are significantly different even for the simplest application on MNIST, where the dataset is uniformly distributed across $M=10$ clients.
Hence, the assumption of identical data distributions should generally not be made and the aggregation scheme \textsc{Time-based} should be used instead for any asynchronous FL (or DL). 

With Figure \ref{fig:improvement_conv_async}(c), we can also appreciate the performances of the learning procedure on the surrogate problem (\ref{eq:surrogate_problem}) based on the clients computation times $T_i$. Due to clients hardware heterogeneity, in the scenario $F80$, clients communicate with the server up to 5 more times than the slowest one. \textsc{Time-based} balances this amount of updates disparity across clients. As a result, \textsc{Identical} has better performances than \textsc{Time-based} on the surrogate problem (\ref{eq:surrogate_problem}) for MNIST, MNIST-shard, and CIFAR10, while for Shakespeare, \textsc{Time-based} shows better performances. We attribute this fact to the depth of the predictive model enabling overfitting. As such, \textsc{Time-based} outperforms \textsc{Identical} on the federated problem (\ref{eq:original_problem_general}), Figure \ref{fig:improvement_conv_async}(a), while preventing catastrophic forgetting and thus leading to better losses on fast clients, Figure \ref{fig:improvement_conv_async}(c). Finally, Figure \ref{fig:improvement_conv_async}(d) shows that in addition, the weighted standard deviation of the surrogate loss is always worse for \textsc{Identical}.

\subsubsection{Impact of the amount of local work on asynchronous FL convergence}
\label{subsubsec:async_nSGD}

\begin{figure}
	\centering
	\includegraphics[width=0.5 \linewidth]{./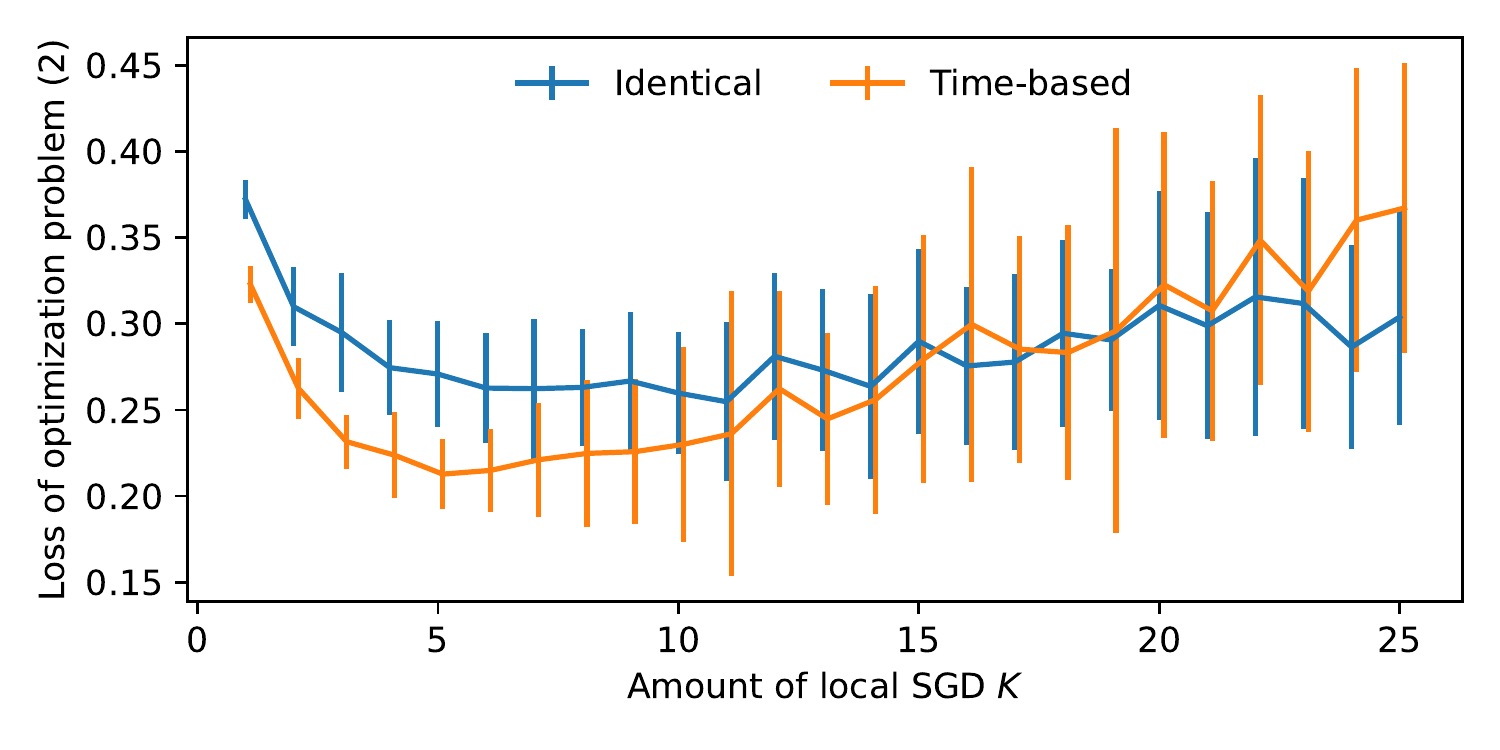}
	\caption{Evolution of the loss of optimization problem (\ref{eq:original_problem_general}) for CIFAR10 with $M=10$, time scenario $F80$, $\eta_g = 1$, $\eta_l = 0.005$, and varying amount of local work $K$ ranging from 1 to 25.}
	\label{fig:amount_of_SGD}
\end{figure}

With Figure \ref{fig:amount_of_SGD}, we consider the impact of the amount of local work on the convergence speed of Asynchronous FL with CIFAR10, time scenario $F80$, and $M=10$ clients. For every simulation, we consider $\eta_l = 0.0005$, the optimal local learning rate for $K=1$ with \textsc{Identical}. The server aggregates the clients contribution over $T = 25000$ units of time, and we report in Figure \ref{fig:amount_of_SGD} mean and standard deviation over the 5\% last server optimization rounds of the loss of the optimization problem (\ref{eq:original_problem_general}).

Figure \ref{fig:amount_of_SGD} shows that increasing the amount of local work $K$ first decreases the loss of optimization problem (\ref{eq:original_problem_general}) evaluated on the expected learnt model before increasing it. This point justifies asking to clients to perform $K>1$ SGDs but requires proper finetuning of the amount of local work $K$. In particular, we notice that the variance strictly increases with $K$, which shows that the learning procedure becomes less stable.

These behaviors of asynchronous FL are due to the disparity between clients contributions induced by clients data heterogeneity, and can only be mitigated with a smaller global learning rate $\eta_g$, local learning rate $\eta_l$, and amount of local work $K$. 
Figure \ref{fig:amount_of_SGD} shows that \textsc{Time-based} is however more sensitive to an increase in amount of local work $K$ than \textsc{Identical}. \textsc{Time-based} is associated with higher variance after $K=8$, and higher mean after $K=15$, while \textsc{Identical} has  very similar mean and standard deviation from $K=8$ to $K=16$. 

This difference in convergence behavior is due to the FL aggregation scheme (\ref{eq:aggreg_SCAFFOLD_general}), and to the difference between the clients aggregation weights $d_i$ for \textsc{Identical} and \textsc{Time-based}. We have indeed $d_i =1$ for every client with \textsc{Identical}, while with \textsc{time-based} fast clients are given lower aggregation weights $d_i<1$, and slow clients higher weights $d_i>1$. 
Therefore, whenever a slow client contributes, the new global model is more perturbed with \textsc{Time-based} than with \textsc{Identical}, which makes \textsc{Time-based} convergence speed more sensitive to a small change in the choice of $K$ and other hyperparameters. 
This point can also be noticed in Figure \ref{fig:improvement_conv_async}(a) where \textsc{Identical} first converges faster than \textsc{Time-based}. Still, \textsc{Identical} converges to a suboptimum. 

\textsc{Identical} outperforms \textsc{Time-based} in Figure \ref{fig:amount_of_SGD} because we consider $\eta_l = 0.0005$.
We note that considering our time budget $T$, doing a grid search for $\eta_l$ would always provide a learnt model with better optimization loss for \textsc{Time-based}.

%\subsubsection*{Importance of the amount of clients}
%
%\begin{figure}
%	\caption{text}
%	\label{fig:amount_of_clients}
%\end{figure}
%
%We investigate the impact of the amount of clients $M$. With Figure \ref{fig:amount_of_clients}, we consider CIFAR10 partitioned in $M \in \{1, 2, 5, 10, 20, 50, 100\}$. 

%\subsubsection*{Robustness}
%
%\begin{figure}
%	\caption{text}
%	\label{fig:robustness_eta_l}
%\end{figure}
%
%We investigate the impact of the local learning rate $\eta_l$ on the convergence speed. Indeed, due to communication constraints, fine-tuning hyperparameters in FL is challenging. We also investigate the impact of $\eta_g$ with Figure \ref{fig:robustness_eta_g}. 
%With Figure \ref{fig:robustness_eta_l}, we consider CIFAR10 with $M=20$. 
%
%\begin{figure}
%	\caption{text}
%	\label{fig:robustness_eta_g}
%\end{figure}

\subsubsection{Partial Asynchronicity with \textsc{FedFix}}\label{subsubsec:partial_asynchronicity}

\begin{figure}
	\centering
	\includegraphics[width=\linewidth]{./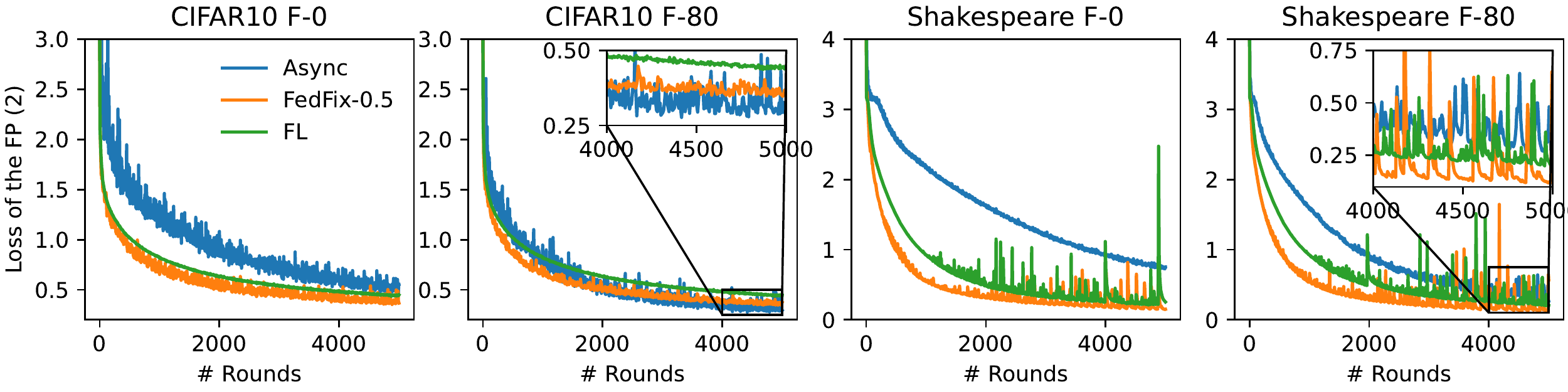}
	\caption{Evolution of federated problem (\ref{eq:original_problem_general}) loss for CIFAR10 and Shakespeare and time scenario $F0$ and $F80$, with $M=20$ (a) and $M=50$ (b). We consider $\eta_g = 1$, $K = 10$, and $\Delta t=0.5$ for \textsc{FedFix}.}
	\label{fig:all_opt_scheme}
\end{figure}

The theory derived in Section \ref{sec:convergence} can be applied to asynchronous FL but also synchronous FL, \textsc{FedAvg}, and other asynchronous FL schemes like \textsc{FedFix} (Section \ref{sec:applications}). 
We show with Figure \ref{fig:all_opt_scheme} that allowing asynchronicity does not necessarily provide faster learning processes, e.g. comparison between synchronous and asynchronous \textsc{FedAvg} above, but \textsc{FedFix} always outperforms \textsc{FedAvg} by balancing convergence speed and stability.

With a small enough learning rate $\eta_l$, asynchronous \textsc{FedAvg} outperforms \textsc{FedFix}, which outperforms synchronous FL (see Figure \ref{app:fig:explore_CIFAR10_20} and \ref{app:fig:explore_Shakespeare_20} in Appendix \ref{app:sec:experiments}). Indeed, in this case, global models change slowly and we can consider that the server receives contributions with no gradient delay. As such, the learning procedure including the most serial contributions in the global model is the fastest. 
% Hence, asynchronous FL outperforms \textsc{FedFix}, which outperforms synchronous FL. 
However, in the other cases, the learning rate $\eta_l$ does not mitigate the discrepancy between clients update, which slows down convergence for asynchronous FL, and can even prevent it.

Identifying the fastest optimization scheme must be done by comparing optimization schemes based on their best local learning rate $\eta_l$ (Figure \ref{fig:all_opt_scheme}). 
Synchronous FL always outperforms asynchronous FL when clients have heterogeneous hardware ($F0$). Even with heterogeneous hardware ($F80$), synchronous FL can outperform asynchronous FL (Shakespeare). Indeed, the server needs to reduce its amount of aggregations to balance convergence speed and convergence stability. We see that \textsc{FedFix-0.5} provides this trade-off and outperforms synchronous FL in every scenario. 

We note that, even for synchronous FL, FL convergence is not monotonous. Indeed, for synchronous FL to have a better convergence speed than asynchronous FL, the server needs to consider a high local learning rate leading to convergence instability. Figure \ref{fig:all_opt_scheme} shows this instability for Shakespeare and $t>4000$, and Figure \ref{app:fig:explore_CIFAR10_20} to \ref{app:fig:explore_Shakespeare_20} in Appendix \ref{app:sec:experiments} provides the evolution of this instability as the learning rate $\eta_l$ increases.

We note that even when clients have homogeneous hardware ($F0$), \textsc{FedFix} outperforms synchronous FL. This can be explained by the close-form of \textsc{FedFix} weights $d_i$, equation (\ref{eq:aggreg_weights_hybrid}), which accounts for server aggregations where no client participates. As a result, \textsc{FedFix-0.5} behaves as asynchronous FL but with an higher server learning rate $\eta_g =2$ which provides faster convergence.

\section{Discussion}

This work introduces equation (\ref{eq:aggreg_SCAFFOLD_general}) which generalizes the expression of \textsc{FedAvg} aggregation scheme by introducing stochastic aggregation weights $\omega_i(n)$ to account for asynchronous client updates. With a simple assumption for clients aggregation weights covariance, Assumption \ref{ass:clients_covariance}, we prove the convergence of FL schemes satisfying equation (\ref{eq:aggreg_SCAFFOLD_general}).
A similar aggregation scheme has been derived in \cite{OnTheImpact} for unbiased client sampling, which this work generalizes. 
In addition, we show that aggregation scheme (\ref{eq:aggreg_SCAFFOLD_general}) and Assumption \ref{ass:clients_covariance} are satisfied by asynchronous FL, \textsc{FedFix}, and \textsc{FedBuff}, Section \ref{sec:applications}. 
Finally, we assume fixed clients update time $T_i$ such that 
we can consider $d_i(n ) = d_i$, and 
give in Section \ref{sec:applications} its close-form 
% for any optimization scheme
% of every client aggregation weight $d_i$ 
to ensure any FL optimization scheme converges to the optimum of problem (\ref{eq:original_problem_general}). Our work remains relevant for applications with $d_i(n) = d_i$ but we let the specific derivations to the reader.
% Finally, we provide in Theorem \ref{theo:relaxed_sufficient_conditions} the close-form of clients aggregation weights $d_i(n)$ in function of its optimization scheme to ensure FL converges to its optimum. 

This work shows theoretically and experimentally that asynchronous \textsc{FedAvg} does not always outperform its synchronous counterpart.
By creating the new global model with the contribution of only one client, asynchronous \textsc{FedAvg} convergence speed is very sensitive to the choice of learning rate and amount of local work $K$. These two hyperparameters need to be fine-tuned to properly balance convergence speed and stability. 
Due to the hardware constraints inherent to the FL setting, fine-tuning is a challenging step for FL and is not necessarily feasible. Therefore, we proposed \textsc{FedFix}, an FL algorithm where the server, after a fixed amount of time, creates the new global model with the contribution of all the participating clients. We prove the convergence of \textsc{FedFix} with our theoretical framework, and experimentally demonstrate its improvement over \textsc{FedAvg} in all the considered scenarios.

% This work proves the convergence of asynchronous FL in the heterogeneous setting, and show that asynchronous \textsc{FedAvg} outperforms its synchronous counterpart in some experimental scenarios. 
% However, by only including the contribution of one client in the new global model, asynchronous FL convergence speed is very sensitive to the choice of learning rate and amount of local work $K$, and these two hyperparameters need to be fine-tuned to properly balance convergence speed and stability. 
% Due to the hardware constraints inherent to the FL setting, fine-tuning is a challenging step for FL and is not necessarily feasible. Therefore, we proposed \textsc{FedFix}, an FL algorithm where the server, after a fixed amount of time, creates the new global model with the contribution of all the participating clients. We prove with our theoretical framework \textsc{FedFix} convergence and experimentally its improvement over \textsc{FedAvg} in every experiment.

%\input{./tex/review_papers.tex}

\appendix

\section{Proof of Theorem \ref{theo:convergence_convex}} 
\label{app:sec:convergence_convex}

We first provide in Section \ref{app:subsec:inequalities} the basic inequalities used in our proofs, and in Section \ref{app:subsec:notation} the basic notations used to provide clearer proofs.

\subsection{Basic Inequalities}\label{app:subsec:inequalities}
We provide the following basic inequalities used in our proofs.

Let us consider $f$ a $L$-Lipschitz smooth and convex function with optimum $\vx^*$. 
For any vector $\vx$ and $\vy$, we have
\begin{equation}
\norm{\nabla f (\vx)}^2
\le 2 L [f(\vx) - f(\vx^*)]
,
\text{ and }
\norm{\nabla f (\vx) - \nabla f (\vy) }^2
\le L^2 \norm{\vx - \vy}^2
\label{app:eq:lipschitz}
.
\end{equation}

Let us consider $g$ a convex function and $d$ vectors $\{\vx_k\}$ each with importance $p_k$ such that $\sum_{ k = 1 }^d p_k = 1$. With Jensen inequality, we have
\begin{equation}
	g(\sum_{ k = 1 }^d p_k \vx_k) 
	\le \sum_{k=1}^d p_k g(\vx_k)
	.
\end{equation}

Let us consider the random variable $X$, we have
\begin{equation}
	\E{\norm{X - \E{X}}^2}
	\le \E{\norm{X}^2}
	.
\end{equation}

\begin{table}
	\caption{Common Notation Summary (addition to Table \ref{tab:aggregation_weights}).}
	\label{tab:parameters}
	\centering
	\begin{tabular}{cc}
		\hline
		Symbol     & Description    \\
		\hline
		$M$ & Number of clients. \\
		$K$ & Number of local SGD.  \\
		$\eta_g, \eta_l$ & Global/Local learning rate.\\
		$\tilde{\eta}$ & Effective learning rate, $\tilde{\eta} = \eta_l \eta_g$.\\
		
		$\vtheta^n$ & Global model at server iteration $n$.  \\
		$\vtheta_i^{n+1}$ & Local update of client $i$ on model $\vtheta^n$.\\
		$\vtheta^*$, $\vtheta_i^*$ & Optimum of the federated problem (\ref{eq:original_problem_general})/client $i$. \\
		$\vtheta^{(n, k)}$, $\vtheta_i^{(n, k)}$ & Global/Local update after $k$ SGD on global model $\vtheta^n$.\\
		$\alpha$ & Covariance parameter.\\
		$\beta$ & Defined in Theorem \ref{theo:convergence_convex}.\\
		% 		$\delta_i, \delta_i^*$ & cf Section \ref{sec:convergence}\\
		% 		$\mu^*$ & Jensen gap parameter, cf Section \ref{sec:convergence}\\
		
		% 		$\EE{n}{\cdot}$ & Expected value conditioned on $\vtheta^n$.\\
		$\Lcal(\cdot), \Lcal_i(\cdot)$ & Federated/local loss function.\\
		% 		$\Lcal_i(\cdot)$ & Local loss function of client $i$.\\
		$g_i(\cdot)$ & SG. We have $\EE{\xi_i}{g_i(\cdot)} = \nabla \Lcal_i(\cdot)$ with Assumption \ref{ass:unbiased}.\\
		$\vxi_i$ & Random batch of samples from client $i$ of size $B$.\\

		$L$ & Lipschitz smoothness parameter, Assumption \ref{ass:smoothness}.\\
		% 		$\mu$ & Strong convexity parameter. \\
		
		$T_i$ & Computation time of client $i$. \\
		$t^n$ & Time at aggregation $n$. \\
		$T_i^n$ & Remaining computation time of client $i$ at time $t^n$. \\
		$\Delta t^n$ & Time elapsed between two server aggregations. \\
		$\rho_i(n)$ & Last index at which a client $i$ received its global model. \\
		% 		& TO COMPLETE?\\
		
		$\rho$ & Highest sum of aggregation weights, i.e. $\rho \coloneqq \max \left(1, q(n)\right)$\\

		\hline
	\end{tabular}
\end{table}

\subsection{Additional Notation}\label{app:subsec:notation}
In Table \ref{tab:aggregation_weights}, we synthesize the different random variables associated to the clients aggregation weights. 
In Table \ref{tab:parameters}, we synthesize the remaining random variables.

We introduce the following notations to provide clear and compact proofs.
Whenever considering a function $f(n, k)$, we define $f(n) \coloneqq 1/K \sum_{ k = 0 }^{K-1} f(n , k)$, and $\bar{f}(N) \coloneqq 1/N \sum_{ n = 0 }^{N-1} f(n)$.
We introduce the following quantities
\begin{align}
D(\vx, n, k)
\coloneqq
\E{ \inner { \sum_{i=1}^M q_i(n) \nabla \Lcal_i (\vtheta_i^{\rho_i(n), k})}{\vtheta^{n,k} - \vx}}
,
&&
Q(n)
\coloneqq \E{\norm{\vtheta^{n+1, 0} - \vtheta^{n ,0}}^2} 
,
\end{align}
\begin{align}
R(n , k) 
\coloneqq \E{\norm{\sum_{ i = 1 }^M \tilde{q}_i(n) g_i(\vtheta_{i}^{\rho_i(n), k})}^2}
,
&&
S(n , k) 
\coloneqq \sum_{ i = 1 }^M \tilde{q}_i(n) \E{\norm{g_i(\vtheta_{i}^{\rho_i(n), k})}^2}
,
\end{align}
\begin{align}
Z(n , k)
= \Lcal^n(\vtheta^{n, k}) - \Lcal^n(\bar{\vtheta}^n)
,
&&
\Delta(n, k) 
\coloneqq \E{\norm{\vtheta^{n, k+ 1} - \vx}^2} - \E{ \norm{\vtheta^{n, k} - \vx}^2 }
,
\end{align} 
\begin{align}
\phi(n, k )
\coloneqq \sum_{ i = 1 }^M \tilde{q}_i(n) \E{\norm{\vtheta_i^{\rho_i(n), k} - \vtheta^{n,k}}^2 }
,
&&
\sigma_1(n) 
\coloneqq \sum_{ i = 1 }^M \tilde{q}_i(n)\E{\norm{ \nabla \Lcal_i(\bar{\vtheta}^n, \vxi_i)}^2}
,
\end{align}
\begin{align}
\sigma_2(n) 
\coloneqq \sum_{ i = 1 }^M \tilde{q}_i^2(n)\E{\norm{ \nabla\Lcal_i(\bar{\vtheta}^n, \vxi_i)}^2}
,
\text{ and }
\Xi(n , k)
= \Lcal^n(\vtheta^{n, k}) - \Lcal^n(\vx)
.
\end{align}
%We also define $\rho \coloneqq \max \left(1, q(n)\right)$. 
Finally, we define $g_i(\vy) = \nabla \Lcal_i(\vy, \vxi_i)$ the SG of client $i$ evaluated on model parameters $\vy$ and batch $\vxi_i$. We will thus write $g_i(\vtheta_{i, k}^{\rho_i(n)})$ instead of $\nabla \Lcal_i(\vtheta_{i, k}^{\rho_i(n)}, \vxi_{i, k}^{\rho_i(n)})$.

%We note that every client loss function satisfies Assumption \ref{ass:smoothness} and is thus $L$ smooth. Therefore, $\Lcal$ and is $L$-smooth and $\Lcal^n$ is $L q(n)$-smooth. Similarly, for the strong convexity parameter $\mu$, Assumption \ref{ass:strong_convexity}.

\subsection{Useful Lemmas}\label{app:subsec:useful_lemmas}
\begin{lemma}\label{lem:decompo_Xi}
	Let us consider $n$ vectors $\vx_i, ..., \vx_n$ and assume Assumption \ref{ass:clients_covariance}. We have
	\begin{equation}
	\EE{S_n}{\norm{\sum_{i=1}^M\omega_i(n)\vx_i}^2} 
	= \sum_{i=1}^M \gamma_i(n) \norm{\vx_i}^2
	+ \alpha \norm{\sum_{i=1}^M q_i(n) \vx_i}^2
	,
	\end{equation}
	where $\gamma_i(n)
	= \EE{S_n}{\omega_i^2(n)} - \alpha q_i^2(n)
	\ge 0$, and $\gamma_i(n) \le \beta q_i(n)$ with $\beta \coloneqq \max\{d_i(n) - \alpha q_i(n)\}$.
\end{lemma}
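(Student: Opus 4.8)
The plan is to expand the squared norm of the weighted sum into diagonal and off-diagonal terms, take expectations termwise, and use Assumption \ref{ass:clients_covariance} to rewrite the off-diagonal contributions. First I would write
\begin{equation}
\norm{\sum_{i=1}^M \omega_i(n)\vx_i}^2
= \sum_{i=1}^M \omega_i^2(n)\norm{\vx_i}^2
+ \sum_{i\neq j}\omega_i(n)\omega_j(n)\inner{\vx_i}{\vx_j},
\end{equation}
and then apply $\EE{S_n}{\cdot}$ to both sides, using linearity of expectation and the fact that the $\vx_i$ are deterministic (the randomness is only in $S_n$, i.e. in the $\omega_i(n)$). This gives
\begin{equation}
\EE{S_n}{\norm{\sum_{i=1}^M \omega_i(n)\vx_i}^2}
= \sum_{i=1}^M \EE{S_n}{\omega_i^2(n)}\norm{\vx_i}^2
+ \sum_{i\neq j}\E{\omega_i(n)\omega_j(n)}\inner{\vx_i}{\vx_j}.
\end{equation}

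Next I would substitute Assumption \ref{ass:clients_covariance}, namely $\E{\omega_i(n)\omega_j(n)} = \alpha q_i(n)q_j(n)$ for $i\neq j$, into the off-diagonal sum, and then complete the square: the off-diagonal part becomes $\alpha\sum_{i\neq j} q_i(n)q_j(n)\inner{\vx_i}{\vx_j} = \alpha\big[\norm{\sum_i q_i(n)\vx_i}^2 - \sum_i q_i^2(n)\norm{\vx_i}^2\big]$. Collecting the diagonal terms then yields the claimed identity with coefficient $\gamma_i(n) = \EE{S_n}{\omega_i^2(n)} - \alpha q_i^2(n)$ on $\norm{\vx_i}^2$ and $\alpha$ on $\norm{\sum_i q_i(n)\vx_i}^2$.

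It remains to verify the two bounds on $\gamma_i(n)$. Since $\omega_i(n) = \mathbb{I}(T_i^n\le\Delta t^n)d_i(n)$ and the indicator is idempotent, $\omega_i^2(n) = \mathbb{I}(T_i^n\le\Delta t^n)d_i(n)^2 = d_i(n)\,\omega_i(n)$, hence $\EE{S_n}{\omega_i^2(n)} = d_i(n)q_i(n)$. Therefore $\gamma_i(n) = d_i(n)q_i(n) - \alpha q_i^2(n) = q_i(n)\big(d_i(n) - \alpha q_i(n)\big) \le \beta q_i(n)$ with $\beta \coloneqq \max\{d_i(n) - \alpha q_i(n)\}$. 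For nonnegativity, either one observes that $\gamma_i(n)$ is the variance-type quantity $\EE{S_n}{\omega_i^2(n)} - \alpha q_i^2(n)$ and that taking $\vx_i = \mathbf{e}_i$ (standard basis vectors) in the identity forces each coefficient to be nonnegative since the left side is a nonnegative quantity; or more directly one uses the earlier computation $\gamma_i(n) = q_i(n)(d_i(n) - \alpha q_i(n))$ together with $d_i(n)\ge q_i(n) = \E{\mathbb{I}(\cdot)}d_i(n)$ — since the indicator has expectation at most $1$ — and $\alpha\in(0,1)$, so $d_i(n) - \alpha q_i(n) \ge q_i(n) - \alpha q_i(n) = (1-\alpha)q_i(n) \ge 0$.

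I do not anticipate a serious obstacle here: the lemma is essentially a bookkeeping identity once Assumption \ref{ass:clients_covariance} is in hand. The only point requiring a little care is the nonnegativity of $\gamma_i(n)$, where one must be explicit that $q_i(n)\le d_i(n)$ (which follows because $q_i(n) = \E{\omega_i(n)} = \mathbb{P}(T_i^n\le\Delta t^n)\,d_i(n)$ with $d_i(n)\ge 0$) and that $\alpha<1$; this is what makes $d_i(n) - \alpha q_i(n) \ge 0$ and hence $\gamma_i(n)\ge 0$.
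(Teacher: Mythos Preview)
Your proof is correct and follows essentially the same route as the paper: expand the squared norm, apply Assumption~\ref{ass:clients_covariance} to the cross terms, and complete the square to extract $\alpha\norm{\sum_i q_i(n)\vx_i}^2$; the paper argues $\gamma_i(n)\ge 0$ via the variance inequality $\EE{S_n}{\omega_i^2(n)}\ge q_i^2(n)$ together with $\alpha\le 1$, while your explicit computation $\EE{S_n}{\omega_i^2(n)}=d_i(n)q_i(n)$ (from idempotence of the indicator) yields both the lower and upper bounds at once. One caution: your ``standard basis vectors'' aside does not by itself force each $\gamma_i(n)\ge 0$ (it only recovers the definition summed over $i$), so rely on your direct argument, which is sound.
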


\begin{proof}
	\begin{align}
	\EE{S_n}{\norm{\sum_{i=1}^M\omega_i(n)\vx_i}^2} 
	& = \sum_{i=1}^M \EE{S_n}{\omega_i^2(n)} \norm{\vx_i}^2
	+ \sum_{i=1}^M\sum_{\substack{j=1\\ j\neq i}}^M\EE{S_n}{\omega_i(n)\omega_j(n)}\inner{\vx_i}{\vx_j}
	\\
	& = \sum_{i=1}^M \EE{S_n}{\omega_i^2(n)} \norm{\vx_i}^2
%	\nonumber\\
	+ \sum_{i=1}^M \sum_{\substack{j=1\\ j\neq i}}^M \alpha q_i(n) q_j(n) \inner{\vx_i}{\vx_j}
	\label{app:eq:AA1}
	.
	\end{align}
	In addition, we have
	\begin{equation}
	\sum_{i=1}^M\sum_{\substack{j=1\\ j\neq i}}^M\inner{q_i(n) \vx_i}{q_j(n) \vx_j} 
	= \norm{\sum_{i=1}^M q_i(n) \vx_i}^2 - \sum_{i=1}^M q_i^2(n) \norm{\vx_i}^2
	\label{app:eq:AA3}
	.
	\end{equation}
	Substituting equation (\refeq{app:eq:AA3}) in equation (\refeq{app:eq:AA1}) completes the first claim.
	
	Considering that $\EE{S_n}{\omega_i^2(n)} = \VAR{\omega_i(n)} + q_i^2(n) \ge q_i^2(n)$ and $\alpha \le 1$, we have $\gamma_i(n) \ge 0$ which completes the second claim.
	
	Finally, the third claim follows directly from the close-form of the clients aggregation weights, equation (\refeq{eq:agg_weights}).

	\textbf{Remark.}
	We can also provide the following lower bound for equation (\refeq{app:eq:AA3}) using Jensen inequality
	\begin{equation}
	\sum_{i=1}^M\sum_{\substack{j=1\\ j\neq i}}^M\inner{q_i(n) \vx_i}{q_j(n) \vx_j} 
	\ge \norm{\sum_{i=1}^M q_i(n) \vx_i}^2 - \frac{\max q_i(n)}{q(n)}\norm{\sum_{i=1}^M q_i(n) \vx_i}^2
	\ge 0
	\label{app:eq:AA4}
	.
	\end{equation}
	Therefore, $\EE{S_n}{\norm{\sum_{i=1}^M\omega_i(n)\vx_i}^2} $ is linearly proportional to $\alpha$.
	
\end{proof}

\begin{lemma}\label{lemma:one_SGD}
	%	We consider clients send at optimization round $n$ their contribution $\vDelta^n$.
	Under Assumption \ref{ass:clients_covariance}, the following equation holds for any vector $\vx$:
	\begin{align}
	\Delta(n)
	&\le - 2 \tilde{\eta} D(\vx, n)
	+ \tilde{\eta}^2 \alpha q^2(n) R(n)
	+ \tilde{\eta}^2 \beta q(n) S(n)
	.
	\end{align}

\end{lemma}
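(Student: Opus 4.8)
The plan is to run the standard one-step descent computation on the virtual sequence and then average over the local SGD index $k$. Recalling from Section \ref{subsec:FL_one_SGD} that $\vtheta^{n,k+1} = \vtheta^{n,k} - \tilde{\eta}\sum_{i=1}^M\omega_i(n)g_i(\vtheta_i^{\rho_i(n),k})$ with $\tilde{\eta} = \eta_g\eta_l$, I would expand
\begin{equation}
\norm{\vtheta^{n,k+1}-\vx}^2 = \norm{\vtheta^{n,k}-\vx}^2 - 2\tilde{\eta}\inner{\sum_{i=1}^M\omega_i(n)g_i(\vtheta_i^{\rho_i(n),k})}{\vtheta^{n,k}-\vx} + \tilde{\eta}^2\norm{\sum_{i=1}^M\omega_i(n)g_i(\vtheta_i^{\rho_i(n),k})}^2,
\end{equation}
take the total expectation and rearrange, so that $\Delta(n,k)$ equals the cross term plus the quadratic term, and then treat these two separately: reduce the cross term to $-2\tilde{\eta}D(\vx,n,k)$ and bound the quadratic term by $\tilde{\eta}^2\alpha q^2(n)R(n,k) + \tilde{\eta}^2\beta q(n)S(n,k)$. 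Averaging the per-$k$ inequality over $k=0,\dots,K-1$ and using the convention $f(n)=\frac{1}{K}\sum_k f(n,k)$ then gives the claim, since $\alpha$, $\beta$ and $q(n)$ are independent of $k$.

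For the quadratic term I would condition on the history through round $n-1$ together with the local batch draws producing the models $\vtheta_i^{\rho_i(n),k}$; given this, each $g_i(\vtheta_i^{\rho_i(n),k})$ is a fixed vector and the remaining randomness is only the participation pattern $S_n$. Lemma \ref{lem:decompo_Xi} — which holds under Assumption \ref{ass:clients_covariance} — then yields $\EE{S_n}{\norm{\sum_i\omega_i(n)g_i(\vtheta_i^{\rho_i(n),k})}^2} = \sum_i\gamma_i(n)\norm{g_i(\vtheta_i^{\rho_i(n),k})}^2 + \alpha\norm{\sum_i q_i(n)g_i(\vtheta_i^{\rho_i(n),k})}^2$ with $0 \le \gamma_i(n) \le \beta q_i(n)$. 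Taking the outer expectation, writing $q_i(n)=q(n)\tilde{q}_i(n)$ and $\sum_i q_i(n)g_i = q(n)\sum_i\tilde{q}_i(n)g_i$, and recognising the definitions of $R(n,k)$ and $S(n,k)$ completes the bound on this term.

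For the cross term I would condition one layer further, down to the $\sigma$-field just before the $k$-th local SGD step of every client, so that Assumption \ref{ass:unbiased} replaces each stochastic gradient $g_i(\vtheta_i^{\rho_i(n),k})$ by the true gradient $\nabla\Lcal_i(\vtheta_i^{\rho_i(n),k})$; it then remains to take the expectation over the stochastic aggregation weights and use $\E{\omega_i(n)} = q_i(n)$ to recover $D(\vx,n,k)$. I expect this last replacement to be the main obstacle: the virtual iterate $\vtheta^{n,k}$ depends on the weights $\omega(n)$ through its very definition, so passing from realised to expected weights is not a one-line linearity argument and needs the conditioning to be ordered so that the scheduling / answering-time randomness driving $\omega(n)$ is cleanly separated from the SGD batch randomness, with an explicit account of what is held fixed at each stage. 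Once these three interleaved sources of randomness (SGD noise, stochastic participation, gradient delays) are organised so that both the unbiasedness step and the Lemma \ref{lem:decompo_Xi} step are legitimate, combining the two bounds and averaging over $k$ finishes the proof.
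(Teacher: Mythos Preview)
Your approach is essentially the paper's: expand $\norm{\vtheta^{n,k+1}-\vx}^2$, apply Lemma~\ref{lem:decompo_Xi} to the quadratic term, reduce the cross term to $D(\vx,n,k)$ by replacing $\omega_i(n)$ with $q_i(n)$ and $g_i$ with $\nabla\Lcal_i$, and then average over $k$. The only difference is that you explicitly flag the dependence of $\vtheta^{n,k}$ on $\omega(n)$ as the delicate step, whereas the paper simply writes the decomposition $\EE{S_n}{\norm{\vtheta^{n,k+1}-\vx}^2} = \norm{\vtheta^{n,k}-\vx}^2 + 2\inner{\EE{S_n}{\vtheta^{n,k+1}-\vtheta^{n,k}}}{\vtheta^{n,k}-\vx} + \EE{S_n}{\norm{\vtheta^{n,k+1}-\vtheta^{n,k}}^2}$ as if $\vtheta^{n,k}$ were $S_n$-independent and does not address the point you raise.
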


\begin{proof}
	We consider $S_n$, the set of participating clients at optimization round $n$, i.e. $S_n = \{n: T_i^n \le \Delta t^n\}$. We have
	\begin{align}
	\EE{S_n}{\norm{\vtheta^{n, k+1} - \vtheta^*}^2} 
	&= \EE{S_n}{\norm{(\vtheta^{n, k+1} - \vtheta^{n ,k}) + (\vtheta^{n, k} - \vtheta^*)}^2}\\
	&= \norm{\vtheta^{n, k} - \vtheta^*}^2 
	+ 2 \inner{\EE{S_n}{\vtheta^{n, k+1} - \vtheta^{n,k}}}{\vtheta^{n ,k} - \vtheta^*} \nonumber\\
	& + \EE{S_n}{\norm{\vtheta^{n, k+1} - \vtheta^{n, k}}^2} 
	\label{app:eq:decomposition}
	.
	\end{align}
	By construction, we have 
	$\vtheta^{n, k+1} - \vtheta^{n , k} 
	=  - \tilde{\eta} \sum_{i=1}^M \omega_i(n) g_i(\vtheta_i^{\rho_i(n) , k})$.
	Taking the expectation over $S_n$, we can simplify the second term of equation (\refeq{app:eq:decomposition}) with 
	$\EE{S_n}{\vtheta^{n, k+1} - \vtheta^{n , k}} 
	= - \tilde{\eta} \sum_{i=1}^M q_i(n) g_i(\vtheta_i^{\rho_i(n) , k})$. 
	Finally, using Lemma \ref{lem:decompo_Xi}, we can bound the third term.
	Therefore, we have
	\begin{align}
	\EE{S_n}{\norm{\vtheta^{n, k+1} - \vtheta^*}^2} 
	&= \norm{\vtheta^{n, k} - \vtheta^*}^2 
	+ 2 \tilde{\eta} \inner{\sum_{i=1}^M q_i(n)  g_i(\vtheta_i^{\rho_i(n) , k})}{\vtheta^n - \vtheta^*} 
	\nonumber\\
	&+ \tilde{\eta}^2 \sum_{i=1}^M \gamma_i(n)  \norm{g_i(\vtheta_i^{\rho_i(n) , k})}^2 
	+ \tilde{\eta}^2 \alpha  \norm{\sum_{i=1}^M  q_i(n) g_i(\vtheta_i^{\rho_i(n) , k})}^2 
	\label{app:eq:A1}
	.
	\end{align}
	Considering $\gamma_i(n) \le \beta q_i(n)$, taking the expected value over the iteration random batches $\vxi^{\rho_i(n), k}$, and finally taking the expected value over the remaining random variables gives
	\begin{align}
	\Delta(n, k)
	&\le - 2 \tilde{\eta} D(\vx, n, k)
	+ \tilde{\eta}^2 \alpha q^2(n) R(n ,k)
	+ \tilde{\eta}^2 \beta q(n) S(n ,k)
	.\label{eq:one_server_iteration}
	\end{align}
	Taking the mean over $K$ completes the proof.
	
\end{proof}
\begin{lemma}\label{lem:phi_nk}
	
	Under Assumption \ref{ass:unbiased} and \ref{ass:smoothness}, and $D \coloneqq 6 \eta_l^2 (K-1)^2 L^2 \le 1/2$, we have
	\begin{align}
	\phi(n)
	&\le  4 q(n) \tau \sum_{s=1}^\tau  Q(n - s) 
	+ 4 D \frac{1}{L} q^{-1}(n) Z(n)
	+ 6 \eta_l^2 (K-1)^2 \sigma_1(n)
	,
	\end{align}
	\begin{align}
	\text{and } S(n)
	\le 12 q(n) L^2 \tau \sum_{ s = 1 }^\tau Q(n - s)
	+ 12 L q^{-1}(n) Z(n)
	+ 6 \sigma_1(n)
	.
	\end{align}
\end{lemma}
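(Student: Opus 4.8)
The plan is to bound the deviation $\phi(n,k)=\sum_i \tilde q_i(n)\E{\norm{\vtheta_i^{\rho_i(n),k}-\vtheta^{n,k}}^2}$ by telescoping the local trajectory back to the synchronization point $\rho_i(n)$ and then comparing the server models $\vtheta^{\rho_i(n)}$ and $\vtheta^{n}$. First I would insert the common anchor $\vtheta^{\rho_i(n)}$: write $\vtheta_i^{\rho_i(n),k}-\vtheta^{n,k}=\big(\vtheta_i^{\rho_i(n),k}-\vtheta^{\rho_i(n)}\big)-\big(\vtheta^{n,k}-\vtheta^{\rho_i(n)}\big)$, and split further the second piece as $(\vtheta^{n,k}-\vtheta^{n})+(\vtheta^{n}-\vtheta^{\rho_i(n)})$, using $\norm{a+b+c}^2\le 3(\norm a^2+\norm b^2+\norm c^2)$ (or a weighted Young's inequality tuned to get the stated constants). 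The term $\vtheta^{n}-\vtheta^{\rho_i(n)}=\sum_{s=\rho_i(n)}^{n-1}(\vtheta^{s+1}-\vtheta^s)$ is a sum of at most $\tau$ increments by Assumption \ref{ass:answering_time}; applying Jensen/Cauchy–Schwarz over the $\le\tau$ summands produces the factor $\tau\sum_{s=1}^\tau Q(n-s)$, and carrying the $\sum_i\tilde q_i(n)$ through (noting $\sum_i q_i(n)=q(n)$) yields the leading $4q(n)\tau\sum_{s=1}^\tau Q(n-s)$.

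Next I would handle the two ``local-drift'' terms $\vtheta_i^{\rho_i(n),k}-\vtheta^{\rho_i(n)}$ and $\vtheta^{n,k}-\vtheta^{n}$. Each is $-\eta_l\sum_{j<k}$ (resp. $-\tilde\eta\sum_j\omega$) of stochastic gradients along $k\le K-1$ inner steps; by Jensen over the $k$ steps and Assumption \ref{ass:smoothness} one bounds $\norm{\nabla\Lcal_i(\vtheta_i^{\rho_i(n),j})}^2$ by $2L^2\norm{\vtheta_i^{\rho_i(n),j}-\bar\vtheta^n}^2$ plus a noise/curvature term controlled by $\sigma_1(n)$ and by $Z(n)=\Lcal^n(\vtheta^{n})-\Lcal^n(\bar\vtheta^n)$ via $\norm{\nabla\Lcal^n(\vtheta^{n,k})}^2\le 2L[\Lcal^n(\vtheta^{n,k})-\Lcal^n(\bar\vtheta^n)]$ from \eqnref{app:eq:lipschitz}. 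This is exactly where the constant $D=6\eta_l^2(K-1)^2L^2$ enters: the $(K-1)^2$ comes from Jensen over the inner steps, the $L^2$ from smoothness, and the numerical $6$ from the repeated $3$-way splits. The condition $D\le 1/2$ lets one absorb the self-referential $\phi(n)$ appearing on the right (a drift term bounds itself up to a factor $\le D$ and one moves it to the left, doubling the remaining constants — hence the $4D\frac1L q^{-1}(n)Z(n)$ and $6\eta_l^2(K-1)^2\sigma_1(n)$). The second inequality for $S(n)=\sum_i\tilde q_i(n)\E{\norm{g_i(\vtheta_i^{\rho_i(n),k})}^2}$ follows from the first: by $\norm{g_i(\cdot)}^2\le 2\norm{\nabla\Lcal_i(\cdot)-\nabla\Lcal_i(\bar\vtheta^n)}^2+2\norm{\nabla\Lcal_i(\bar\vtheta^n,\vxi_i)}^2\le 2L^2\norm{\vtheta_i^{\rho_i(n),k}-\bar\vtheta^n}^2+2\norm{\nabla\Lcal_i(\bar\vtheta^n,\vxi_i)}^2$, then split $\vtheta_i^{\rho_i(n),k}-\bar\vtheta^n$ through $\vtheta^{n,k}$ to reintroduce $\phi(n)$ and $Z(n)$, plugging in the just-proved bound on $\phi(n)$ and tracking constants to get the $12$'s and $6$.

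The main obstacle I anticipate is the bookkeeping of the self-referential drift: $\phi(n)$ depends on the inner iterates $\vtheta_i^{\rho_i(n),j}$ whose drift from $\bar\vtheta^n$ is itself of the form one is trying to bound, so one must set up the splits so that the recursive $\phi$-contribution carries coefficient $\le D\le 1/2$ and can be reabsorbed, while simultaneously keeping the ``old model'' contribution packaged cleanly as $\tau\sum_{s=1}^\tau Q(n-s)$ rather than as a worse double sum. Getting the precise constants ($4$, $4$, $6$ and later $12$, $12$, $6$) right requires choosing the Young's-inequality weights consistently; the structure, though, is just: anchor at $\vtheta^{\rho_i(n)}$, telescope the server gap over $\le\tau$ steps, bound the two inner drifts by smoothness, and reabsorb the recursive term using $D\le 1/2$.
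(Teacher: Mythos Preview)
Your overall skeleton is right: set up a self-referential inequality for $\phi(n)$ with recursive coefficient $\le D$, absorb it using $D\le 1/2$, telescope the server-model gap over at most $\tau$ rounds, and recover the $S(n)$ bound by feeding the $\phi(n)$ bound back in. The paper does exactly this. But the decomposition you propose for $\phi$ is not the one the paper uses, and your route to $Z(n)$ in the $S(n)$ argument has a gap.

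For $\phi$, the paper does not anchor and split three ways. It writes both $\vtheta_i^{\rho_i(n),k}$ and $\vtheta^{n,k}$ as their starting global model minus accumulated gradients, so the difference is directly
\[
\vtheta_i^{\rho_i(n),k}-\vtheta^{n,k}=\bigl(\vtheta^{\rho_i(n)}-\vtheta^n\bigr)-\eta_l\sum_{l<k}\Bigl[g_i(\vtheta_i^{\rho_i(n),l})-\sum_j\tilde{q}_j(n)\,g_j(\vtheta_j^{\rho_j(n),l})\Bigr],
\]
and a two-way Jensen split suffices. After weighting by $\tilde{q}_i(n)$ and summing over $i$, the second piece is a variance about the $\tilde{q}$-mean and is bounded by $S(n,l)$ immediately. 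This sidesteps your separate virtual-drift term $\vtheta^{n,k}-\vtheta^n$, which carries the random weights $\omega_i(n)$ and the factor $\eta_g$; bounding it on its own would pull in $\alpha$, $\beta$ (via Lemma~\ref{lem:decompo_Xi}) and $\eta_g$, none of which appear in the statement. For $S(n)$, your plan bounds $\norm{g_i(\vtheta_i)}^2\le 2L^2\norm{\vtheta_i-\bar{\vtheta}^n}^2+\cdots$ and then splits the \emph{iterate} $\vtheta_i-\bar{\vtheta}^n$ through $\vtheta^{n,k}$, claiming $\norm{\vtheta^{n,k}-\bar{\vtheta}^n}^2$ yields $Z(n,k)$. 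Under convexity alone this fails: smoothness gives only $Z(n,k)\le \tfrac{Lq(n)}{2}\norm{\vtheta^{n,k}-\bar{\vtheta}^n}^2$, the wrong direction, and there is no strong-convexity constant to reverse it. The paper instead splits at the \emph{gradient} level through both $\vtheta^{n,k}$ and $\bar{\vtheta}^n$, so the middle piece after $\tilde{q}_i$-averaging is $\sum_i\tilde{q}_i(n)\E{\norm{\nabla\Lcal_i(\vtheta^{n,k},\vxi)-\nabla\Lcal_i(\bar{\vtheta}^n,\vxi)}^2}$, which \emph{is} bounded by $2L\,q^{-1}(n)Z(n,k)$ via the smooth-convex inequality $\norm{\nabla f(x)}^2\le 2L[f(x)-f(x^*)]$ applied to $\tilde{\Lcal}^n$ (using $\nabla\tilde{\Lcal}^n(\bar{\vtheta}^n)=0$). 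That gradient-level split is what produces the $Z$-terms with the correct sign and the stated constants.
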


\begin{proof}
	Let us decompose the difference $\vtheta_i^{\rho_i(n), k} - \vtheta^{n,k}$ as
	\begin{equation}
		\vtheta_i^{\rho_i(n), k} - \vtheta^{n,k}
		= \left[\vtheta^{\rho_i(n)} - \eta_l\sum_{ l = 0 }^{k-1} g_i(\vtheta_i^{\rho_i(n), l})\right]
		- \left[\vtheta^n - \eta_l\sum_{ l = 0 }^{k-1}\sum_{ i = 1 }^M \tilde{q}_i(n) g_i(\vtheta_i^{\rho_i(n), l})\right]
	.
	\end{equation}
	Using Jensen inequality, we split the difference between the global models and the one between the gradients to get
	\begin{equation}
	\norm{\vtheta_i^{\rho_i(n), k} - \vtheta^{n,k}}^2
	\le  2 \norm{\vtheta^{\rho_i(n)} - \vtheta^n}^2
	+ 2 \eta_l^2 k \sum_{ l = 0 }^{k-1}\norm{ g_i(\vtheta_i^{\rho_i(n), l})
	- \sum_{ i = 1 }^M \tilde{q}_i(n) g_i(\vtheta_i^{\rho_i(n), l})}^2
	.
	\label{app:eq:O1}
	\end{equation}
	Therefore, by taking the expectations of equation (\refeq{app:eq:O1}) and summing over $M$ gives
	\begin{align}
	\phi(n , k)
	&\le  2 \sum_{ i = 1 }^M \tilde{q}_i(n) \E{\norm{\vtheta^{\rho_i(n)} - \vtheta^n}^2}
	\nonumber\\
	& + 2 \eta_l^2 k \sum_{ l = 0 }^{ k - 1} \sum_{ i = 1 }^M \tilde{q}_i(n)\E{\norm{ g_i(\vtheta_i^{\rho_i(n), l})
		- \sum_{ i = 1 }^M \tilde{q}_i(n) g_i(\vtheta_i^{\rho_i(n), l})}^2}
	\\
	&\le  2 \sum_{ i = 1 }^M \tilde{q}_i(n) \E{\norm{\vtheta^{\rho_i(n)} - \vtheta^n}^2} 
	+ 2 \eta_l^2 k \sum_{ l = 0 }^{k-1}\sum_{ i = 1 }^M \tilde{q}_i(n)\E{\norm{ g_i(\vtheta_i^{\rho_i(n), l})}^2}
	\label{app:eq:O2}
	,
	\end{align}
	where we see that $S(n, l)$ appears in the second term of equation (\refeq{app:eq:O2}). We consider now bounding $S(n , k)$, and first note that a stochastic gradient can be bounded as follow
	\begin{align}
	\E{\norm{g_i(\vtheta_i^{\rho_i(n), k})}^2}
	& \le 3 \E{\norm{\nabla \Lcal_i(\vtheta_i^{\rho_i(n),k}, \vxi_{i, k}^{\rho_i(n)}) - \nabla \Lcal_i(\vtheta^{n,k}, \vxi_{i, k}^{\rho_i(n)}) }^2} 
	\nonumber\\
	&+ 3 \E{\norm{\nabla \Lcal_i(\vtheta^{n,k}, \vxi_i) - \nabla \Lcal_i(\bar{\vtheta}^n, \vxi_i) }^2} 
	+ 3 \E{\norm{\nabla \Lcal_i(\bar{\vtheta}^n, \vxi_i) }^2} 
	.
	\label{app:eq:F_0}
	\end{align}
	When summing equation (\refeq{app:eq:F_0}) over $M$, and considering the clients loss functions Lipschitz smoothness, Assumption \ref{ass:smoothness}, we have
	\begin{align}
	S(n, k)
	=	\sum_{ i = 1 }^M \tilde{q}_i(n)\E{\norm{g_i(\vtheta_i^{\rho_i(n), k})}^2}
	\le 3 L^2 \phi(n , k)
	+ 6 L q^{-1}(n)Z(n , k)
	+ 3  \sigma_1(n)
	.
	\label{app:eq:F_11}
	\end{align}
	We also note the following intermediary results 
	\begin{equation}
	\sum_{ k = 0 }^{K-1} k \sum_{ l = 0 }^{k-1}x_l
	\le (K-1)\sum_{ k = 1 }^{K-1} \sum_{ l = 0 }^{k-1}x_l
	\le (K - 1)^2 \sum_{ k = 0 }^{K-2}x_k
	\le (K - 1)^2 \sum_{ k = 0 }^{K-1}x_k
	\label{app:eq:O3}
	.
	\end{equation}
	We substitute equation (\refeq{app:eq:F_11}) in equation (\refeq{app:eq:O2}) such that $D$ appears, take the mean over $K$ to introduce $\phi(n)$ on the two sides of the equation, and use equation (\refeq{app:eq:O3}). We have
	\begin{align}
	\phi(n)
	\le  2 \sum_{ i = 1 }^M \tilde{q}_i(n) \E{\norm{\vtheta^{\rho_i(n)} - \vtheta^n}^2} 
	+ D \phi(n) 
	+ 2 D \frac{1}{L} q^{-1}(n) Z(n)
	+ 6 \eta_l^2 (K-1)^2 \sigma_1(n)
	.
	\label{app:eq:phin}
	\end{align}
	Finally, reminding that $D \le 1/2$, which gives $1 - D \ge 1/2$, and 
	using Assumption \ref{ass:answering_time} to bound $\E{\norm{\vtheta^{\rho_i(n)} - \vtheta^n}^2}$  with Jensen inequality completes the first claim for $\phi(n)$, i.e.
	\begin{equation}
		\E{\norm{\vtheta^{\rho_i(n)} - \vtheta^n}^2}
		\le \tau \sum_{s=1}^\tau \E{\norm{\vtheta^{n - s+1 } - \vtheta^{n - s}}^2}
		= \tau \sum_{s=1}^\tau  Q(n - s)
		.
	\end{equation}
	
	Substituting the close-form of $\phi(n)$ in equation (\refeq{app:eq:F_11}) completes the claim for $S(n , k)$.

\end{proof}

\begin{lemma}\label{lem:Dxn}
	Under Assumption  \ref{ass:strong_convexity} and \ref{ass:unbiased}, we have
	\begin{align}
	-2 D(\vx, n)
	&\le -2 \Xi(n) 
	+ 4 L q(n) \tau \sum_{s=1}^\tau  Q(n - s) 
	+ 4 D Z(n)
	+ 6 \eta_l^2 (K-1)^2 q(n) L \sigma_1(n)
	.
	\end{align}
\end{lemma}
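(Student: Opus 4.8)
The plan is to control the per-local-step inner product $D(\vx, n, k)$ by a descent-type inequality, average it over the $K$ local SGD steps, and then substitute the bound on $\phi(n)$ provided by Lemma~\ref{lem:phi_nk}. First I would expand $D(\vx,n,k)$ by linearity over $i$ and, for each client, split the displacement as $\vtheta^{n,k} - \vx = (\vtheta^{n,k} - \vtheta_i^{\rho_i(n), k}) + (\vtheta_i^{\rho_i(n), k} - \vx)$. On the second piece, convexity of $\Lcal_i$ (Assumption~\ref{ass:strong_convexity}) at the delayed iterate gives $\inner{\nabla\Lcal_i(\vtheta_i^{\rho_i(n), k})}{\vtheta_i^{\rho_i(n), k} - \vx} \ge \Lcal_i(\vtheta_i^{\rho_i(n), k}) - \Lcal_i(\vx)$. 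On the first piece I would use the $L$-smoothness descent inequality in its lower-bound form (Assumption~\ref{ass:smoothness}), $\inner{\nabla\Lcal_i(\vtheta_i^{\rho_i(n), k})}{\vtheta^{n,k} - \vtheta_i^{\rho_i(n), k}} \ge \Lcal_i(\vtheta^{n,k}) - \Lcal_i(\vtheta_i^{\rho_i(n), k}) - \frac{L}{2}\norm{\vtheta^{n,k} - \vtheta_i^{\rho_i(n), k}}^2$. Adding these two bounds, the delayed-iterate loss values $\Lcal_i(\vtheta_i^{\rho_i(n), k})$ cancel, leaving $\inner{\nabla\Lcal_i(\vtheta_i^{\rho_i(n), k})}{\vtheta^{n,k} - \vx} \ge \Lcal_i(\vtheta^{n,k}) - \Lcal_i(\vx) - \frac{L}{2}\norm{\vtheta^{n,k} - \vtheta_i^{\rho_i(n), k}}^2$.

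Next I would multiply by $-2$, weight by $q_i(n)$, sum over $i$, and take expectations. Using $\sum_{i} q_i(n)\Lcal_i = \Lcal^n$ and $\sum_i q_i(n)\E{\norm{\vtheta^{n,k} - \vtheta_i^{\rho_i(n),k}}^2} = q(n)\,\phi(n,k)$ (immediate from the definitions of $\phi$ and $\tilde{q}_i(n) = q_i(n)/q(n)$), this gives $-2D(\vx,n,k) \le -2\Xi(n,k) + L q(n)\,\phi(n,k)$, and averaging over $k = 0,\dots,K-1$ yields $-2D(\vx,n) \le -2\Xi(n) + L q(n)\,\phi(n)$. I would then insert the bound on $\phi(n)$ from Lemma~\ref{lem:phi_nk} and multiply through by $Lq(n)$: with $D = 6\eta_l^2(K-1)^2 L^2$ as in that lemma, the $Z(n)$-term becomes exactly $4D\,Z(n)$, the $\sigma_1$-term becomes $6\eta_l^2(K-1)^2 q(n) L\,\sigma_1(n)$, and the delay term becomes $4 L q(n)\,\tau\sum_{s=1}^\tau Q(n-s)$ (the leftover power of $q(n)$ being controlled by $\rho$ exactly as in Theorem~\ref{theo:convergence_convex}), which is the claimed bound. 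Assumption~\ref{ass:unbiased} enters only through the reduction from stochastic to full gradients already carried out upstream in Lemma~\ref{lemma:one_SGD}.

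The only delicate point — and hardly an obstacle — is orienting the two first-order inequalities correctly: convexity is used for the $\vtheta_i^{\rho_i(n),k}-\vx$ contribution, while the smoothness descent lemma is used in its less common lower-bound direction for the $\vtheta^{n,k}-\vtheta_i^{\rho_i(n),k}$ contribution, so that the delayed-iterate loss values telescope away and only the discrepancy term $\phi$ survives. Everything else is constant bookkeeping and a direct appeal to Lemma~\ref{lem:phi_nk}.
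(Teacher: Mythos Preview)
Your proof is correct and follows essentially the same route as the paper: the paper's proof simply invokes Lemma~12 of \cite{Khaled2020}, which is precisely the convexity-plus-smoothness decomposition you spell out to obtain $-2D(\vx,n)\le -2\Xi(n)+Lq(n)\phi(n)$, and then substitutes Lemma~\ref{lem:phi_nk}. Your remark about the leftover factor of $q(n)$ in the delay term is also apt; it reflects a minor inconsistency in the constants between Lemmas~\ref{lem:phi_nk} and~\ref{lem:Dxn} as stated, and is indeed absorbed into $\rho$ downstream in the proof of Theorem~\ref{theo:convergence_convex}.
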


\begin{proof}
	Follows directly from using Lemma 12 in \cite{Khaled2020} on $D(\vx, n, k)$, taking the mean over $K$, and using Lemma \ref{lem:phi_nk} to bound $\phi(n)$ completes the proof.
	
%	Using Lemma 12 in \cite{Khaled2020}, we have
%	\begin{align}
%	- 2 D(\vx, n, k)
%	&\le -2 \sum_{ i = 1 }^M \tilde{q}_i(n) 
%	\E{ \Lcal_i(\vtheta^{n, k}) - \Lcal_i( \vx ) 
%	}
%	\nonumber\\
%	& +\sum_{ i = 1 }^M \tilde{q}_i(n) 
%	\E{ L \norm{\vtheta_i^{\rho_i(n), k} - \vtheta^{n, k}}^2
%		- \mu \norm{\vtheta^{n,k} - \vx}^2
%	}
%	\\
%	&= -2 \E{ \tilde{\Lcal}^n(\vtheta^{n,k}) - \tilde{\Lcal}^n(\vx)}
%	+ L \phi(n, k)
%	- \mu \E{\norm{\vtheta^{n,k} - \vx}^2}
%	.
%	\end{align}	
%	Summing over $K$ gives
%	\begin{align}
%	- 2 D(\vx, n)
%	&\le -2  \sum_{ k = 0 }^{K-1}\E{ \tilde{\Lcal}^n(\vtheta^{n, k}) - \tilde{\Lcal}^n(\vx)} 
%	+ L \phi(n)
%	- \mu \sum_{ k = 0 }^{K-1} \E{\norm{\vtheta^{n,k} - \vx}^2}
%	.
%	\end{align}
%	Replacing $\phi(n)$ by its close-form (Lemma \ref{lem:phi_nk}) completes the proof.

\end{proof}
\begin{lemma}\label{lem:bounding_R}
	Under Assumption \ref{ass:smoothness} and \ref{ass:unbiased}, and 
	considering $D \le 1/2$, we have
	\begin{align}
	R(n)
	&\le 12 L^2 \tau \sum_{s=1}^\tau  Q(n - s) 
	+ 24 L q^{-1}(n) Z(n)
	+ 3 D \sigma_1(n)
	+ 6  \sigma_2(n) 
	.
	\end{align}
	
\end{lemma}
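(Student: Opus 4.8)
The bound for $R(n)$ should come out of the \emph{same} decomposition already used for $S(n)$ in Lemma~\ref{lem:phi_nk}, the only new ingredient being that, since in $R(n,k)=\E{\norm{\sum_i\tilde q_i(n)\,g_i(\vtheta_i^{\rho_i(n),k})}^2}$ the norm encloses the whole convex combination, the cross terms between distinct clients can be made to cancel and the noise at $\bar{\vtheta}^n$ enters through $\sigma_2(n)$ (quadratically weighted) instead of $\sigma_1(n)$. The plan is to anchor every stochastic gradient at $\bar{\vtheta}^n$ through the three-way split of~(\ref{app:eq:F_0}), $g_i(\vtheta_i^{\rho_i(n),k})=a_i+b_i+c_i$ with $a_i=\nabla\Lcal_i(\vtheta_i^{\rho_i(n),k},\vxi_i)-\nabla\Lcal_i(\vtheta^{n,k},\vxi_i)$, $b_i=\nabla\Lcal_i(\vtheta^{n,k},\vxi_i)-\nabla\Lcal_i(\bar{\vtheta}^n,\vxi_i)$, $c_i=\nabla\Lcal_i(\bar{\vtheta}^n,\vxi_i)$, and bound $R(n,k)\le 3\E{\norm{\sum_i\tilde q_i(n)a_i}^2}+3\E{\norm{\sum_i\tilde q_i(n)b_i}^2}+3\E{\norm{\sum_i\tilde q_i(n)c_i}^2}$.

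The $a$- and $b$-pieces I would treat exactly as in the derivation of~(\ref{app:eq:F_11}): Jensen over the weights $\tilde q_i(n)$ (they sum to $1$), then $L$-smoothness for $a$ and the smooth--convex inequality $\EE{\vxi_i}{\norm{\nabla\Lcal_i(\vx,\vxi_i)-\nabla\Lcal_i(\vy,\vxi_i)}^2}\le 2L[\Lcal_i(\vx)-\Lcal_i(\vy)-\inner{\nabla\Lcal_i(\vy)}{\vx-\vy}]$ for $b$, the linear term being erased by first-order optimality $\sum_i\tilde q_i(n)\nabla\Lcal_i(\bar{\vtheta}^n)=q^{-1}(n)\nabla\Lcal^n(\bar{\vtheta}^n)=\mathbf{0}$; this gives $3L^2\phi(n,k)$ and $6Lq^{-1}(n)Z(n,k)$ respectively. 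The $c$-piece is the one that changes: centering $c_i=[g_i(\bar{\vtheta}^n)-\nabla\Lcal_i(\bar{\vtheta}^n)]+\nabla\Lcal_i(\bar{\vtheta}^n)$, the residuals are mean-zero (Assumption~\ref{ass:unbiased}) and, the fresh batches being independent across clients, conditionally uncorrelated, so $\E{\norm{\sum_i\tilde q_i(n)(g_i(\bar{\vtheta}^n)-\nabla\Lcal_i(\bar{\vtheta}^n))}^2}=\sum_i\tilde q_i^2(n)\,\E{\norm{g_i(\bar{\vtheta}^n)-\nabla\Lcal_i(\bar{\vtheta}^n)}^2}\le\sum_i\tilde q_i^2(n)\,\E{\norm{\nabla\Lcal_i(\bar{\vtheta}^n,\vxi_i)}^2}=\sigma_2(n)$, while $\norm{\sum_i\tilde q_i(n)\nabla\Lcal_i(\bar{\vtheta}^n)}^2=0$ once more; hence $3\E{\norm{\sum_i\tilde q_i(n)c_i}^2}\le 3\sigma_2(n)$ and one arrives at $R(n,k)\le 3L^2\phi(n,k)+6Lq^{-1}(n)Z(n,k)+3\sigma_2(n)$ --- the same bound as on $S(n,k)$ in~(\ref{app:eq:F_11}), save that $\sigma_1(n)$ is replaced by $\sigma_2(n)$, which is precisely the payoff of the independence argument.

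It then remains to average over $k$, insert the bound on $\phi(n)$ from Lemma~\ref{lem:phi_nk} --- whose $\sigma_1$ coefficient is $6\eta_l^2(K-1)^2=D/L^2$, so that $3L^2\phi(n)$ contributes exactly the $3D\sigma_1(n)$ term --- collapse the resulting geometric constants using $D\le 1/2$, and bound $\E{\norm{\vtheta^{\rho_i(n)}-\vtheta^n}^2}$ by $\tau\sum_{s=1}^{\tau}Q(n-s)$ via Assumption~\ref{ass:answering_time} (bounded delay) and Jensen, exactly as at the end of the proof of Lemma~\ref{lem:phi_nk}; the stated estimate follows, the remaining numerical constants being produced by this bookkeeping. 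I expect the only subtle step to be the $c$-piece: one has to verify that conditioning on all iterates leaves the step-$(n,k)$ batches independent across clients, so that the off-diagonal covariances really do vanish and only the $\tilde q_i^2(n)$-weighted diagonal survives --- this is exactly what makes $R(n)$ (which enters Lemma~\ref{lemma:one_SGD} scaled by the covariance parameter $\alpha$) a tighter quantity than $S(n)$; the rest is routine computation.
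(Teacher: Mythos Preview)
Your argument is correct and follows the paper's strategy: a three-way split of the stochastic gradient, Jensen and smoothness for the first two pieces, independence across clients to extract $\sigma_2(n)$ from the third, then Lemma~\ref{lem:phi_nk} to eliminate $\phi(n)$. The only cosmetic difference is that you anchor the noise directly at $\bar{\vtheta}^n$, whereas the paper first centers the stochastic gradient at $\vtheta^{n,k}$ --- their $b(n,k)$ is the SGD noise $\nabla\Lcal_i(\vtheta^{n,k},\vxi_i)-\nabla\Lcal_i(\vtheta^{n,k})$ and their $c(n,k)$ is the full gradient $\nabla\tilde{\Lcal}^n(\vtheta^{n,k})$ --- and only then re-anchors $b(n,k)$ at $\bar{\vtheta}^n$; your route saves one Jensen split and in fact yields the tighter constants $12Lq^{-1}(n)Z(n)+3\sigma_2(n)$, well inside the stated bound.
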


\begin{proof}
	\begin{align}
	R(n, k)
	&\le 3 \E{\norm{\sum_{ i = 1 }^M \tilde{q}_i(n)\left[ g_i(\vtheta_i^{\rho_i(n), k}) - \nabla \Lcal_i(\vtheta^{n, k}, \vxi_{i, k}^{\rho_i(n)}) \right]}^2}
	\nonumber\\
	&+ 3  \E{\norm{\sum_{ i = 1 }^M \tilde{q}_i(n) \left[\nabla \Lcal_i(\vtheta^{n, k}, \vxi_i) - \nabla \Lcal_i(\vtheta^{n, k})\right]}^2}
	\nonumber\\
	& + 3 \E{\norm{\sum_{ i = 1 }^M \tilde{q}_i(n) \nabla \Lcal_i(\vtheta^{n, k})}^2}
	\label{app:eq:S1}
	.
	\end{align}
	We respectively call the three terms of equation (\refeq{app:eq:S1}), $a(n , k)$, $b(n, k)$, and $c(n, k)$.
	Using the local loss functions Lipschitz smoothness, Assumption \ref{ass:smoothness}, and Jensen inequality, we can bound $a(n, k)$ as
	\begin{equation}
	a(n, k)
	\le 3\sum_{ i = 1 }^M \tilde{q}_i(n)\E{\norm{ g_i(\vtheta_i^{\rho_i(n), k}) - \nabla \Lcal_i(\vtheta^{n, k}, \vxi_{i, k}^{\rho_i(n)}) }^2}
	\le 3 L^2 \phi(n, k)
	\label{app:eq:S1_1}
	.
	\end{equation}
	Using the unbiasedness of the gradient estimator, Assumption \ref{ass:unbiased}, and the local loss function  Lipschitz smoothness, Assumption \ref{ass:smoothness}, we can bound $b(n, k)$ as
	\begin{align}
	b(n, k)
	&=  3\sum_{ i = 1 }^M \tilde{q}_i^2(n)\E{\norm{ \nabla\Lcal_i(\vtheta^{n, k}, \vxi_i) - \nabla \Lcal_i(\vtheta^{n, k})}^2}
	\\
	& \le  3\sum_{ i = 1 }^M \tilde{q}^2_i(n)\E{\norm{ \nabla\Lcal_i(\vtheta^{n, k}, \vxi_i) }^2}
	\\
	& \le  6 \sum_{ i = 1 }^M \tilde{q}^2_i(n)
	\left[\E{\norm{ \nabla\Lcal_i(\vtheta^{n, k}, \vxi_i) - \nabla\Lcal_i(\bar{\vtheta}^n, \vxi_i)}^2}
	+ \E{\norm{ \nabla\Lcal_i(\bar{\vtheta}^n, \vxi_i)}^2}
	\right]
	\\
	& \le 12 L \max_i (\tilde{q}_i(n)) \left[\tilde{\Lcal}^n(\vtheta^{n, k}) - \tilde{\Lcal}^n(\bar{\vtheta}^n)\right]
	+ 6  \sum_{ i = 1 }^M \tilde{q}_i^2(n)\E{\norm{ \nabla\Lcal_i(\bar{\vtheta}^n, \vxi_i))}^2}
	.
	\label{app:eq:S1_2}
	\end{align}
	
	Using the Lipschitz smoothness of the local loss functions, Assumption \ref{ass:smoothness} and Jensen inequality, we can bound $c(n, k)$ as
	\begin{align}
	c(n , k)
	\le  3 \E{\norm{\nabla \tilde{\Lcal}^n(\vtheta^{n, k}) - \nabla \tilde{\Lcal}^n(\bar{\vtheta}^n)}^2}
	\le 6 L \left[\tilde{\Lcal}^n(\vtheta^{n, k}) - \tilde{\Lcal}^n(\bar{\vtheta}^n)\right]
	\label{app:eq:S1_3}
	.
	\end{align}
	Substituting equation (\refeq{app:eq:S1_1}), equation (\refeq{app:eq:S1_2}), and equation (\refeq{app:eq:S1_3}) in equation (\refeq{app:eq:S1}),
	considering that 
	$ \max_i (\tilde{q}_i(n)) \le 1$, 
	 and summing over $K$
	 gives
	\begin{align}
	R(n)
	&\le 3  L^2 \phi(n)
	+ 18 L q^{-1}(n) Z(n)
	+ 6  \sigma_2(n) 
	\end{align}
	Using Lemma \ref{lem:phi_nk} to replace $\phi(n)$, and considering that $D \le 1 /2 < 1$ completes the proof.
%	\begin{align}
%	R(n)
%	&\le 12 L^2 \tau \sum_{s=1}^\tau  Q(n - s) 
%	+ 24 L q^{-1}(n) Z(n)
%	+ 3 D \sigma_1(n)
%	+ 6  \sigma_2(n) 
%	\label{app:eq:R2f}
%	\end{align}
	
\end{proof}

\begin{lemma}\label{lem:bound_Q}
	Under Assumption \ref{ass:smoothness} and \ref{ass:unbiased}, 
	considering that $\gamma_i(n) \le \beta q_i(n)$, 
	and considering $12 \rho^2 \left[ \alpha + \beta\right]  \tilde{\eta}^2 K^2 \tau^2 L^2
	\le 1/2$, we have
	\begin{align}
	\bar{Q}(N)
	& \le
	24 \rho \left[2 \alpha + \beta \right] \tilde{\eta}^2 K^2 L \bar{Z}(N)
	+ 6  \rho^2 \left[ \alpha D + 2 \beta \right] \tilde{\eta}^2 K^2 \Sigma_1(N)
	+ 12  \rho^2 \alpha \tilde{\eta}^2 K^2  \Sigma_2(N) 
	.
	\end{align}
	
\end{lemma}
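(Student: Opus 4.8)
The quantity to control is $Q(n) = \E{\norm{\vtheta^{n+1,0} - \vtheta^{n,0}}^2}$, the expected squared size of one server step. By construction $\vtheta^{n+1,0} - \vtheta^{n,0} = -\tilde\eta \sum_{k=0}^{K-1}\sum_{i=1}^M \omega_i(n) g_i(\vtheta_i^{\rho_i(n),k})$, so the plan is: first apply Jensen over the $K$ inner SGD steps to pull the sum out, writing $Q(n) \le \tilde\eta^2 K \sum_{k=0}^{K-1}\E{\norm{\sum_i \omega_i(n) g_i(\vtheta_i^{\rho_i(n),k})}^2}$; then invoke Lemma \ref{lem:decompo_Xi} on each inner term to split it into $\sum_i \gamma_i(n)\norm{g_i(\cdot)}^2 + \alpha\norm{\sum_i q_i(n) g_i(\cdot)}^2$. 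Using $\gamma_i(n)\le \beta q_i(n)$ and the definitions of $S(n,k)$ and $R(n,k)$ (after factoring out $q(n)$ appropriately, bounding $q(n)\le\rho$), this yields $Q(n) \le \tilde\eta^2 K^2 [\alpha\, q^2(n) R(n) + \beta\, q(n) S(n)] \le \tilde\eta^2 K^2 \rho^2[\alpha R(n) + \beta S(n)]$ after averaging and absorbing $q$-factors into $\rho$.

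Next I would substitute the bounds on $R(n)$ from Lemma \ref{lem:bounding_R} and on $S(n)$ from Lemma \ref{lem:phi_nk}. Both of these bounds contain a term of the form $(\text{const})\cdot L^2 \tau \sum_{s=1}^\tau Q(n-s)$, a term $(\text{const})\cdot L q^{-1}(n) Z(n)$, and terms in $\sigma_1(n)$ (and $\sigma_2(n)$ for $R$). So after substitution, $Q(n)$ is bounded by a linear combination of $\tilde\eta^2 K^2\tau^2 L^2 \sum_{s=1}^\tau Q(n-s)$, $\tilde\eta^2 K^2 L\, Z(n)$, $\tilde\eta^2 K^2 \sigma_1(n)$, and $\tilde\eta^2 K^2 \sigma_2(n)$, with explicit constants built out of $\alpha,\beta,\rho$. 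The crucial structural point is that $Q$ appears on both sides via the delayed sum $\sum_{s=1}^\tau Q(n-s)$.

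To close the recursion I would average over $n = 0,\dots,N-1$. Averaging the delayed sum gives $\frac1N\sum_n \tau\sum_{s=1}^\tau Q(n-s) \le \tau^2 \bar Q(N)$ (using $Q(m)=0$ for $m<0$, i.e. $\vtheta^{m}=\vtheta^0$, or simply extending the index range). This produces an inequality of the form $\bar Q(N) \le c_0\,\tilde\eta^2 K^2\tau^2 L^2\, \bar Q(N) + (\text{RHS terms in }\bar Z, \Sigma_1, \Sigma_2)$, where $c_0 = 12\rho^2(\alpha+\beta)$ after collecting constants. The hypothesis $12\rho^2[\alpha+\beta]\tilde\eta^2 K^2\tau^2 L^2 \le 1/2$ is exactly what makes $1 - c_0\tilde\eta^2K^2\tau^2L^2 \ge 1/2$, so dividing through by this factor (i.e. multiplying by at most $2$) absorbs the self-term and yields the stated bound with the claimed constants $24\rho[2\alpha+\beta]$, $6\rho^2[\alpha D + 2\beta]$, and $12\rho^2\alpha$ on $\bar Z(N)$, $\Sigma_1(N)$, $\Sigma_2(N)$ respectively.

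The main obstacle is purely bookkeeping: tracking the exact numerical constants through the three nested substitutions (Lemma \ref{lem:decompo_Xi} into the Jensen bound, then Lemmas \ref{lem:phi_nk} and \ref{lem:bounding_R} into that) while correctly separating the $\alpha$-proportional pieces from the $\beta$-proportional pieces — since $R(n)$ carries the $\alpha$ weight and $S(n)$ the $\beta$ weight, but $R(n)$'s bound itself contains a $3D\sigma_1$ term that must land in the $[\alpha D + 2\beta]$ coefficient while $S(n)$'s $6\sigma_1$ term contributes the $2\beta$ part. I do not expect any genuine analytic difficulty beyond verifying that the smallness condition on $\eta_l$ stated in Theorem \ref{theo:convergence_convex} (via $\eta_l \le 1/(48KL)\min(1, 1/(3\rho^2\eta_g(\tau+1)))$) indeed implies the displayed hypothesis $12\rho^2[\alpha+\beta]\tilde\eta^2 K^2\tau^2 L^2 \le 1/2$, using $\alpha,\beta \in (0,1)$ so that $\alpha+\beta < 2$.
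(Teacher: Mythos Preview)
Your proposal is correct and follows essentially the same route as the paper's proof: decompose $Q(n)$ via Lemma \ref{lem:decompo_Xi} and Jensen into $\alpha q^2(n) R(n) + \beta q(n) S(n)$ (the paper applies Lemma \ref{lem:decompo_Xi} first to the vectors $\sum_k g_i$ and then Jensen over $k$, while you do Jensen first and then Lemma \ref{lem:decompo_Xi} per $k$, but the outcome is identical), substitute the bounds of Lemmas \ref{lem:phi_nk} and \ref{lem:bounding_R}, average over $n$ to turn the delayed sum into $\tau^2 \bar Q(N)$, and absorb the self-term using the stated smallness hypothesis. Your tracking of how the $3D\sigma_1$ piece from $R(n)$ and the $6\sigma_1$ piece from $S(n)$ combine into the $[\alpha D + 2\beta]$ coefficient, and your identification of $c_0 = 12\rho^2(\alpha+\beta)$, match the paper exactly.
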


\begin{proof}
	Considering the proof of Lemma \ref{lemma:one_SGD}, using the fact that $\gamma_i(n) \le \beta q_i(n)$, and Jensen inequality, we have	
	\begin{align}
	Q(n)
	&\le 
	q^2(n) \alpha  \tilde{\eta}^2 \E{\norm{\sum_{ i = 1 }^M \tilde{q}_i(n)\sum_{ k = 0 }^{K-1} g_i(\vtheta_i^{\rho_i(n), k}) }^2}
	+ q(n) \beta  \tilde{\eta}^2 \sum_{ i = 1 }^M \tilde{q}_i(n) \E{\norm{\sum_{ k = 0 }^{K-1} g_i(\vtheta_i^{\rho_i(n), k})}^2}
	\nonumber\\
	&\le 
	q^2(n) \alpha  \tilde{\eta}^2 K^2 R(n)
	+ q(n) \beta  \tilde{\eta}^2 K^2 S(n)
%	=\tilde{\eta}^2  Q(n)
	%	\label{app:eq:Q}
	\end{align}
	Using Lemma \ref{lem:bounding_R} to bound $R(n)$ and Lemma \ref{lem:phi_nk} to bound $S(n)$, we can thus bound $Q(n)$ with the previous global model distances to the optimum $Q(s)$, where $ \max(0, n - \tau) \le s \le n -1$, we thus have 	
	\begin{align}
	\frac{1}{\rho \tilde{\eta}^2 K^2}Q(n)
	& \le 12 \rho  \left[\alpha + \beta\right] \tau L^2 \sum_{ s = 1 }^\tau Q(n - s)
	+ 12 \left[2 \alpha + \beta \right] L Z(n)
	\nonumber\\
	& 
	+ 3 \rho  \left[ \alpha D + 2 \beta \right] \sigma_1(n)
	+ 6 \rho  \alpha \sigma_2(n) 
	.\label{app:eq:J1}
	\end{align}
	We can thus define $A(n)$ and $B(n)$ such that the bound of of equation (\refeq{app:eq:J1}) can be rewritten as in equation (\refeq{app:eq:rewriting}),  with its associated implications when taking the mean over $N$, reordering, and considering that $\tau A(n) \le 1/2$:
	\begin{align}
	Q(n)
	\le A(n) \sum_{ s = 1 }^\tau Q(n -s) 
	+ B(n)
	\Rightarrow
	\bar{Q}(N) 
	= \frac{1}{N}\sum_{ n = 0 }^{N-1} Q(n)
	\le 2 \frac{1}{N}  \sum_{ n = 0 }^{N-1} B(n)
	.
	\label{app:eq:rewriting}
	\end{align}
	Therefore, considering $12 \rho^2 \left[ \alpha + \beta\right]  \tilde{\eta}^2 K^2 \tau^2 L^2
	\le 1/2$ completes the proof.
	
%	\begin{align}
%	\bar{Q}(N)
%	& \le
%	24 L \left[2 \alpha + \beta \right] \rho \tilde{\eta}^2 K^2 \bar{Z}(N)
%	+ 6  \rho^2\tilde{\eta}^2 K^2\left[ \alpha D + 2 \beta \right] \Sigma_1(N)
%	+ 12  \rho^2\tilde{\eta}^2 K^2 \alpha \Sigma_2(N) 
%	\end{align}
	
\end{proof}

\subsection{Proof of Theorem \ref{theo:convergence_convex}}\label{app:subsec:proof_theo}
\begin{proof}
	Using Lemma \ref{lemma:one_SGD}, we have 
	\begin{align}
	\frac{1}{\tilde{\eta}}\Delta(n)
	&\le - 2  D(\vx, n)
	+ \rho^2 \alpha \tilde{\eta}  R(n)
	+  \rho  \beta \tilde{\eta} S(n)
	\end{align}
	Using Lemma \ref{lem:Dxn} to bound $D(\vx, n)$, Lemma \ref{lem:bounding_R} to bound $R(n)$, Lemma \ref{lem:phi_nk} to bound $S(n)$, and $3 \rho  \left[\alpha + \beta\right] \tilde{\eta} L \le 1$, we get
	\begin{align}
	\frac{1}{\tilde{\eta}}\Delta(n)
	&\le -2  \Xi(n)
	 + 8 \rho \tau L \sum_{s=1}^\tau  Q(n - s) 
	+ 4 D Z(n)
	+ 6 \rho \eta_l^2 (K-1)^2 L \sigma_1(n)
%	+ \frac{1}{K \tilde{\eta}} Q(n)
	\nonumber\\
	& 
	+ 12 \left[2 \alpha + \beta \right] \rho \tilde{\eta} L Z(n) 
	+ 3 \rho^2 \tilde{\eta} \left[ \alpha D + 2 \beta \right]  \sigma_1(n)
	+ 6  \rho^2 \alpha \tilde{\eta} \sigma_2(n) 
	\label{app:eq:C_12}
	.
\end{align}
	When considering the following intermediary result
	\begin{equation}
	\sum_{ n = 0 }^{N-1} K \Delta(n) \\
	= \E{\norm{\vtheta^{KN} - \vx}^2} - \norm{\vtheta^{0} - \vx}^2
	\ge - \norm{\vtheta^{0} - \vx}^2
	,
	\end{equation}
	reordering the terms, and taking the mean over $N$, we get
	\begin{align}
	2  \bar{\Xi}(N)
	&\le \frac{1}{ \tilde{\eta} KN }\E{ \norm{\vtheta^{0} - \vx}^2 }
	+ 8 \rho L \tau^2 \bar{Q}(N) 
	+ 4 D \bar{Z}(N)
	+ 6 \rho \eta_l^2 (K-1)^2 L \Sigma_1(N)
	%	+ \frac{1}{K \tilde{\eta}} Q(n)
	\nonumber\\
	& 
	+ 12 \rho  \left[2 \alpha + \beta \right] \tilde{\eta} L \bar{Z}(N) 
	+ 3 \rho^2 \left[ \alpha D + 2 \beta \right] \tilde{\eta} \Sigma_1(N)
	+ 6  \rho^2 \alpha \tilde{\eta} \Sigma_2(N) 
%	\label{app:eq:C_12}
	.
	\end{align}
	Using Lemma \ref{lem:bound_Q} to bound $\bar{Q}(N)$, and with $\nu = 16 \rho L$, we have
	\begin{align}
	2  \bar{\Xi}(N)
	&\le \frac{1}{\tilde{\eta} KN }\E{ \norm{\vtheta^{0} - \vx}^2 }
	+ 4 D \bar{Z}(N)
	+ 6 \rho \eta_l^2 (K-1)^2 L \Sigma_1(N)
	\nonumber\\
	& 
	+ 12 \rho \left[2 \alpha + \beta \right] \left[\tilde{\eta} + \nu \tilde{\eta}^2 K^2 \tau^2 \right] L \bar{Z}(N) 
	+ 3 \rho^2 \left[ \alpha D + 2 \beta \right] \left[\tilde{\eta} + \nu \tilde{\eta}^2 K^2 \tau^2 \right] \Sigma_1(N)
	\nonumber\\
	&+ 6  \rho^2 \alpha \left[\tilde{\eta} + \nu \tilde{\eta}^2 K^2 \tau^2 \right] \Sigma_2(N) 
	.
	\end{align}
	We note that when $ \bar{\Xi}(N) \le 0$, the claim follows directly. Therefore, we consider $ \bar{\Xi}(N) \ge 0$ for the rest of this proof. 
	We first note that 
	\begin{equation}
		\bar{Z}(N)
		= \bar{\Xi}(N)
		+ R(\{\Lcal^n\})
		\label{app:eq:decompo_loss}
		,
	\end{equation}
	and consider $\eta_l$ such that
	\begin{equation}
%		\zeta \coloneqq
		2 
		- 4 D 
		- 12 \rho \left[2 \alpha + \beta \right] \left[\tilde{\eta} + \nu \tilde{\eta}^2 K^2 \tau^2 \right] L
		\ge 1 
		,
	\end{equation}
	which gives
	\begin{align}
	\bar{\Xi}(N)
	&\le \frac{1}{\tilde{\eta} KN}\E{ \norm{\vtheta^{0} - \vx}^2 }
	+ 4 D R(\{\Lcal^n\})
	+ 6 \rho \eta_l^2 (K-1)^2 L \Sigma_1(N)
	\nonumber\\
	& 
	+ 12 \rho \left[2 \alpha + \beta \right] \left[\tilde{\eta} + \nu \tilde{\eta}^2 K^2 \tau^2 \right] L R(\{\Lcal^n\})
	+ 3 \rho^2 \left[ \alpha D + 2 \beta \right] \left[\tilde{\eta} + \nu \tilde{\eta}^2 K^2 \tau^2 \right] \Sigma_1(N)
	\nonumber\\
	&+ 6  \rho^2 \alpha \left[\tilde{\eta} + \nu \tilde{\eta}^2 K^2 \tau^2 \right] \Sigma_2(N) 
	.
	\end{align}
	The 5th term can be simplified with the third one. Indeed, we consider a local learning rate such that $3 \rho^2 \tilde{\eta} L \le 1$, $48 \rho^3 \tilde{\eta}^2 K^2 \tau^2 L^2 \le1$, and we remind that $\alpha \le 1$. We thus have
	\begin{align}
	\bar{\Xi}(N)
	&\le \frac{1}{ \tilde{\eta} KN}\E{ \norm{\vtheta^{0} - \vx}^2 }
	+ \Ocal \left(\eta_l^2 (K-1)^2 \left[ R(\{\Lcal^n\}) + \Sigma_1(N) \right] \right) 
	\nonumber\\
	& 
	+ \Ocal \left( \alpha \left[\tilde{\eta} + \tilde{\eta}^2 K^2 \tau^2 \right] \left[ R(\{\Lcal^n\}) + \Sigma_2(N) \right] \right)
	\nonumber\\
	&+ \Ocal\left(\beta \left[\tilde{\eta} + \tilde{\eta}^2 K^2 \tau^2 \right] \left[ R(\{\Lcal^n\}) + \Sigma_1(N)\right]\right)
	.\label{app:eq:final_theo}
	\end{align}
	
	With 
	\begin{equation}
		\norm{\nabla \Lcal_i(\vtheta, \xi)}^2 
		\le 2 \norm{\nabla \Lcal_i(\vtheta, \xi) - \nabla \Lcal_i(\bar{\vtheta}, \xi)}^2
		+ 2 \norm{\nabla \Lcal_i(\bar{\vtheta}, \xi)}^2
		,
	\end{equation}
	we have 
	\begin{equation}
		\Sigma_2(N) 
		\le \max q_i(n) \Sigma_1(N) 
		\le \max q_i(n) \left[4L R({\Lcal^n}) + 2 \Sigma \right]
		.\label{app:eq:bound_Sigma}
	\end{equation}	
	Finally, substituting equation (\refeq{app:eq:decompo_loss}) and (\refeq{app:eq:bound_Sigma}) in equation (\refeq{app:eq:final_theo}) completes the proof.

\end{proof}

\subsection{Simplifying the constraint on the learning rate}\label{app:subsec:learning_rate}
The constraints on the learning rate can be summarized as 
$D = 6 \eta_l^2 (K-1)^2 L^2 \le 1/2$ (Lemma \ref{lem:phi_nk}),
$12 \rho^2 \left[ \alpha + \beta\right]  \tilde{\eta}^2 K^2 \tau^2 L^2
\le 1/2$ (Lemma \ref{lem:bound_Q}),
$3 \rho  \left[\alpha + \beta\right] \tilde{\eta} L \le 1$ (Theorem \ref{theo:convergence_convex}),
$2 
- 4 D 
- 12 \rho \left[2 \alpha + \beta \right] \left[\tilde{\eta} + \nu \tilde{\eta}^2 K^2 \tau^2 \right] L
\ge 1$ (Theorem \ref{theo:convergence_convex}),
$3 \rho^2 \tilde{\eta} L \le 1$ (Theorem \ref{theo:convergence_convex}), 
and $48 \rho^3 \tilde{\eta}^2 K^2 \tau^2 L^2 \le1$ (Theorem \ref{theo:convergence_convex}).

We note that $\alpha \le 1$, and $\beta \le 1$. We thus propose the following sufficient conditions to satisfy the conditions above
\begin{align}
	48 \eta_l^2 (K-1)^2 L^2 \le 1,
	144 \rho^2 \tilde{\eta} L \le 1,
	\text{ and }
	2304 \rho^3 \tilde{\eta}^2 K^2 \tau^2 L^2 \le 1
	,
\end{align}
which can further be simplified with
\begin{equation}
	\eta_l
	\le \frac{1}{48 K L }
	\min \left(1, \frac{1}{3 \rho^{2} \eta_g (\tau+1)}\right)
	.
\end{equation}
\section{Proof of Theorem \ref{theo:bias_convex}}\label{app:sec:theo_bias}

In this proof, we consider $\tilde{\Lcal}^n = q^{-1}(n) \Lcal^n$.

\subsection{Useful Lemma}
\begin{lemma} \label{lem:async_FL_bias}
	The difference between the gradients of $\Lcal(\vtheta)$ and $\Lcal(\vtheta)$ can be bounded as follow
%	\begin{align}
%	\norm{\nabla \Lcal(\vtheta) - \nabla \Lcal^n (\vtheta)}^2
%	& \le 4L \chi_n^2 [ \Lcal^n(\vtheta) - \Lcal^n(\tilde{\vtheta}^n)] \nonumber\\
%	& + 4L \chi_n^2 [ \Lcal^n(\tilde{\vtheta}^n) -\sum_{j \in W_n} r_j^n \Lcal_j(\vtheta_j^*)] \nonumber\\
%	&+ 4L  \sum_{j \notin W_n} r_j  [ \Lcal_j(\vtheta) - \Lcal_j(\vtheta_j^*)] ,
%%	\label{app:eq:U_1}
%	\end{align}
	\begin{align}
	\norm{\nabla \Lcal(\vtheta) - \nabla \tilde{\Lcal}^n (\vtheta)}^2
	 \le 4L \chi_n^2 [ \tilde{\Lcal}^n(\vtheta) -\sum_{j \in W_n} \tilde{s}_j(n) \Lcal_j(\vtheta_j^*)] 
	+ 4L \sum_{j \notin W_n} r_j  [ \Lcal_j(\vtheta) - \Lcal_j(\vtheta_j^*)] ,
	\end{align}
	where $W_n = \{j : s_j(n) > 0\}$ and $\chi_n^2 = \sum_{j \in W_n } (r_j - \tilde{s}_j(n))^2/\tilde{s}_j(n)$.
\end{lemma}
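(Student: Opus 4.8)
The plan is to expand the gradient discrepancy over the $J$ feature spaces and then treat the \emph{represented} distributions ($j\in W_n$) and the \emph{unrepresented} ones ($j\notin W_n$) separately. Recalling $\Lcal(\vtheta)=\sum_{j=1}^J r_j F_j(\vtheta)$, $\tilde{\Lcal}^n(\vtheta)=\sum_{j=1}^J \tilde{s}_j(n)F_j(\vtheta)$ with $F_j=\Lcal_j$, and that $\tilde{s}_j(n)=0$ precisely when $j\notin W_n$, I would write $\nabla\Lcal(\vtheta)-\nabla\tilde{\Lcal}^n(\vtheta)=\sum_{j\in W_n}(r_j-\tilde{s}_j(n))\nabla F_j(\vtheta)+\sum_{j\notin W_n}r_j\nabla F_j(\vtheta)$ and apply $\norm{\va+\vb}^2\le 2\norm{\va}^2+2\norm{\vb}^2$. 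This reduces the claim to two sub-bounds: $\norm{\sum_{j\in W_n}(r_j-\tilde{s}_j(n))\nabla F_j(\vtheta)}^2\le 2L\chi_n^2[\tilde{\Lcal}^n(\vtheta)-\sum_{j\in W_n}\tilde{s}_j(n)\Lcal_j(\vtheta_j^*)]$ and $\norm{\sum_{j\notin W_n}r_j\nabla F_j(\vtheta)}^2\le 2L\sum_{j\notin W_n}r_j[\Lcal_j(\vtheta)-\Lcal_j(\vtheta_j^*)]$. Throughout I would use that each gap $F_j(\vtheta)-F_j(\vtheta_j^*)\ge 0$ since $\vtheta_j^*$ minimizes the convex $L$-smooth function $F_j$, together with the elementary inequalities (Jensen, and $\norm{\nabla f(\vx)}^2\le 2L[f(\vx)-f(\vx^*)]$) collected in Appendix~\ref{app:subsec:inequalities}.

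The crux is the represented part, and in particular the choice of weights in the Jensen step: the naive choice (weights $\tilde{s}_j(n)$) produces $\sum_{j\in W_n}\frac{(r_j-\tilde s_j(n))^2}{\tilde s_j(n)}\norm{\nabla F_j(\vtheta)}^2$, which after a per-term smoothness bound is \emph{not} controlled by $\chi_n^2\sum_j\tilde s_j(n)[F_j(\vtheta)-F_j(\vtheta_j^*)]$ — it blows up when some $\tilde s_j(n)$ is tiny. Instead I would set $w_j\coloneqq (r_j-\tilde s_j(n))^2/\tilde s_j(n)$, so $\chi_n^2=\sum_{j\in W_n}w_j$, and apply Jensen with the normalized weights $w_j/\chi_n^2$ (the case $\chi_n^2=0$ forces $r_j=\tilde s_j(n)$ on $W_n$, making the term vanish):
\begin{equation}
\norm{\sum_{j\in W_n}(r_j-\tilde s_j(n))\nabla F_j(\vtheta)}^2
=\norm{\sum_{j\in W_n}\frac{w_j}{\chi_n^2}\cdot\frac{\chi_n^2(r_j-\tilde s_j(n))}{w_j}\nabla F_j(\vtheta)}^2
\le \chi_n^2\sum_{j\in W_n}\tilde s_j(n)\norm{\nabla F_j(\vtheta)}^2 ,
\end{equation}
where the last step uses $(r_j-\tilde s_j(n))^2/w_j=\tilde s_j(n)$, i.e. the singular factor $1/\tilde s_j(n)$ is exactly cancelled. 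Then the smoothness bound gives $2L\chi_n^2\sum_{j\in W_n}\tilde s_j(n)[F_j(\vtheta)-F_j(\vtheta_j^*)]$, and since $\sum_{j\in W_n}\tilde s_j(n)F_j(\vtheta)=\tilde{\Lcal}^n(\vtheta)$ this is precisely $2L\chi_n^2[\tilde{\Lcal}^n(\vtheta)-\sum_{j\in W_n}\tilde s_j(n)\Lcal_j(\vtheta_j^*)]$.

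For the unrepresented part I would set $p\coloneqq\sum_{j\notin W_n}r_j\le 1$ (if $p=0$ both sides vanish), apply Jensen with weights $r_j/p$ to get $\norm{\sum_{j\notin W_n}r_j\nabla F_j(\vtheta)}^2\le p\sum_{j\notin W_n}r_j\norm{\nabla F_j(\vtheta)}^2\le\sum_{j\notin W_n}r_j\norm{\nabla F_j(\vtheta)}^2$ using $p\le 1$, and then invoke smoothness once more to obtain $2L\sum_{j\notin W_n}r_j[F_j(\vtheta)-F_j(\vtheta_j^*)]$. Multiplying both sub-bounds by the factor $2$ from the initial split yields the stated inequality. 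The only real obstacle is the weight selection in the middle Jensen step described above; everything else is a routine chain of convexity, $L$-smoothness, and the basic inequalities of Appendix~\ref{app:subsec:inequalities}.
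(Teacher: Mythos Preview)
Your proof is correct and follows essentially the same approach as the paper: the same decomposition into represented ($j\in W_n$) and unrepresented ($j\notin W_n$) parts, the same $\norm{\va+\vb}^2\le 2\norm{\va}^2+2\norm{\vb}^2$ split, and then the same per-part bounds followed by the smoothness inequality $\norm{\nabla F_j(\vtheta)}^2\le 2L[F_j(\vtheta)-F_j(\vtheta_j^*)]$. The only cosmetic difference is that the paper phrases the key step as Cauchy--Schwarz (writing the represented sum as $\sum_j \frac{r_j-\tilde s_j(n)}{\sqrt{\tilde s_j(n)}}\cdot\sqrt{\tilde s_j(n)}\,\nabla F_j$) while you phrase it as Jensen with weights $w_j/\chi_n^2$; since Jensen with those particular weights \emph{is} Cauchy--Schwarz, the two arguments are identical in content.
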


\begin{proof}
	We have $\sum_{ j = 1 }^J s_j(n) = \sum_{ i = 1 }^M q_i(n) = q(n)$.
	Hence, by definition of $\Lcal(\vtheta)$ and $\Lcal^n(\vtheta)$, we have
	\begin{align}
		\nabla \Lcal(\vtheta) - \nabla \tilde{\Lcal}^n(\vtheta)
		& = \sum_{j=1}^{J} (r_j - \tilde{s}_j(n) ) \nabla \Lcal_j(\vtheta)\\
		& = \sum_{j \in W_n} \frac{r_j - \tilde{s}_j(n)}{\sqrt{\tilde{s}_j(n)}} \sqrt{\tilde{s}_j(n)}\nabla \Lcal_j(\vtheta)
		+ \sum_{j \notin W_n} r_j \nabla \Lcal_j(\vtheta)
		.
	\end{align}
	Applying Jensen and Cauchy-Schwartz inequality gives
	\begin{align}
		\norm{\nabla \Lcal(\vtheta) - \nabla \tilde{\Lcal}^n (\vtheta)}^2
		& \le 2 \norm{ \sum_{j \in W_n} \frac{r_j - \tilde{s}_j(n)}{\sqrt{\tilde{s}_j(n)}} \sqrt{\tilde{s}_j(n)} \nabla \Lcal_j(\vtheta) }^2 
		+ 2 \norm{ \sum_{j \notin W_n} r_j \nabla \Lcal_j(\vtheta)}^2 
		\\
		& \le 2 \left[\sum_{j \in W_n} \frac{(r_j - \tilde{s}_j(n))^2}{\tilde{s}_j(n)}  \right] 
		\sum_{j = 1}^J \tilde{s}_j(n) \norm{ \nabla \Lcal_j(\vtheta) }^2 
		\nonumber\\
		& + 2  \left[\sum_{j \notin W_n}  r_j\right] \sum_{j \notin W_n} r_j  \norm{ \nabla \Lcal_j(\vtheta) }^2 
	\end{align}
	Considering the Lipschitz smoothness of the clients loss function, and $\sum_{ j \notin W_n} r_j \le  1$ completes the proof.
	
%	\begin{align}
%	\norm{\nabla \Lcal(\vtheta) - \frac{1}{q(n)}\nabla \Lcal^n (\vtheta)}^2
%	& \le 4L \chi_n^2 \frac{1}{q(n)} [ \Lcal^n(\vtheta) - \Lcal^n(\bar{\vtheta}^n) + \Lcal^n(\bar{\vtheta}^n) -\sum_{j \in W_n} \tilde{s}_j(n) \Lcal_j(\vtheta_j^*)] \nonumber\\
%	&+ 4L \sum_{j \notin W_n} r_j  [ \Lcal_j(\vtheta) - \Lcal_j(\vtheta_j^*)] ,
%	\end{align}
	
\end{proof}

\subsection{Proof of Theorem \ref{theo:bias_convex}}
\begin{proof}
	Using Jensen inequality and Lemma \ref{lem:async_FL_bias} gives
	\begin{align}
	\norm{\nabla \Lcal (\vtheta)}^2
	& \le 2 \norm{\nabla \Lcal ( \vtheta) - \frac{1}{q(n)}\nabla \Lcal^n( \vtheta)}^2 
	+ 2 \norm{\frac{1}{q(n)} \nabla \Lcal^n (\vtheta)}^2\\
	& \le 4L \left[\chi_n^2 \frac{1}{q(n)} + \frac{1}{q^2(n)} \right] [ \Lcal^n(\vtheta) - \Lcal^n(\bar{\vtheta}^n)] \nonumber\\
	& + \chi_n^2 \frac{1}{q(n)}4L [ \Lcal^n(\bar{\vtheta}^n) -\sum_{j \in W_n} s_j(n) \Lcal_j(\vtheta_j^*)] \nonumber\\
	&+ 4L  \sum_{j \notin W_n} r_j  [ \Lcal_j(\vtheta) - \Lcal_j(\vtheta_j^*)]
	\end{align}
	We take the maximum of $\chi_n^2$ and $q(n)$, the mean over the $KN$ serial SGD steps, and use Theorem \ref{theo:convergence_convex} to complete the proof
	.
%	, i.e.
%	\begin{align}
%	\frac{1}{N}\sum_{n=0}^{N-1} \norm{\nabla \Lcal (\vtheta^n)}^2
%	& \le 4L \max_n\left[\chi_n^2 \frac{1}{s(n)}+ \frac{1}{s^2(n)}\right] \left[ R(\{\Lcal^n\}) + \epsilon'\right]   
%	\nonumber\\
%	& + 4L \frac{1}{N}\sum_{n=0}^{N-1}\chi_n^2 \frac{1}{s(n)}[\Lcal^n(\tilde{\vtheta}^n) -\sum_{j \in W_n} {q}_j^n \Lcal_j(\vtheta_j^*)])
%	\nonumber\\
%	& + 4L \frac{1}{N}\sum_{n=0}^{N-1}\sum_{j \notin W_n} r_j [\Lcal_j(\vtheta^n) - \Lcal_j(\vtheta_j^*)]) 
%	,
%	\end{align}	

\end{proof}

\section{Applying Theorem \ref{theo:relaxed_sufficient_conditions}}
\label{app:sec:applying}

This section extends Section \ref{sec:applications}, where we apply Theorem \ref{theo:relaxed_sufficient_conditions} to centralized learning (Section \ref{subsec:centralized_learning}) and synchronous \textsc{FedAvg} with unbiased and biased client sampling (Section \ref{subsec:client_sampling_unbiased} and \ref{subsec:client_sampling_biased} respectively).

\subsection{Centralized Learning}\label{subsec:centralized_learning}

In this setting, one client, i.e. $M = 1$, learns a predictive model on its own data. In this case, we always have $\tilde{q}_1(n) =1$, and the resulting optimization problem is always proportional to $\Lcal = \Lcal_1$ which thus gives $R(\{\Lcal^n\}) \le R(\Lcal) = 0$. 
%The server/client only has to wait for its participation to compute the next global model. 
There is no gradient delay ($\tau =1$), while the clients always participate at each optimization round ($\alpha = 1$ and $\beta=0$), while the global learning rate is redundant with the local learning rate ($\eta_g =1$). The server performs $KN$ SGD steps. All these considered elements give
\begin{equation}
\label{eq:epsilon_centra}
\epsilon
= \Ocal\left(\frac{\norm{\vtheta^0 - \vtheta^*}^2}{\eta_l K N}\right)
+ \Ocal \left(\eta_l \EE{\vxi}{\norm{\nabla \Lcal(\vx, \vxi)}^2} \right)
.
\end{equation}
With equation (\refeq{eq:epsilon_centra}), we retrieve standard convergence guarantees for centralized ML derived in \cite{SGD_review}.

\subsection{Unbiased client sampling ($q_i(n) = p_i$)}\label{subsec:client_sampling_unbiased}
% \subsubsection*{Partial Client Participation}
We define by $S_n$ the set of sampled clients performing their local work at optimization step $n$. Setting $\Delta t^n = \max_{i \in S_n } T_i$, with $T_i = \infty$ for the clients that are not sampled, and thus not in $S_n$, gives $\mathbb{P}( T_i \le \Delta t^n) = \mathbb{P}( i \in S_n )$. $S_n$ is independent from the clients hardware capabilities and is decided by the server. This allows to pre-compute $\mathbb{P}( T_i \le \Delta t^n)$ and to allocate to each client the aggregation weight $d_i$ such that $q_i = p_i$. 

Standard unbiased client sampling schemes include sampling $m$ clients uniformly without replacement \citep{OnTheConvergence} or sampling $m$ clients according to a Multinomial distribution \citep{FedProx}.
\cite{OnTheImpact} shows that both Uniform and MD sampling satisfy Assumption \ref{ass:clients_covariance}. 
In particular, in those setting, the term $\alpha\le 1$ is proportional to $m$, the amount of sampled clients, while $1 \ge \beta>0$ is inversely proportional to $m$. 
We get
\begin{align}
\epsilon
&= \Ocal\left(\frac{1}{\eta_g \eta_l K N}\right)
+ \Ocal \left(\eta_g \eta_l \alpha \frac{1}{M}\Sigma \right)
+ \Ocal \left(\eta_l^2 (K-1)^2 \Sigma \right)
+ \Ocal \left(\eta_g \eta_l \beta \Sigma \right)
\label{eq:epsilon_sync_unbiased}
.
\end{align}
The second term, proportional to $\alpha/M$, is reduced at the expense of the introduction of a fourth term proportional to $\beta$.
In turn, it still provides faster optimization rounds with $\Delta t^n = \max_{i \in S_n } T_i$ and $N = \Ocal \left(T/ \E{\max_{i \in S_n } T_i}\right)$. 
FedAvg with client sampling generalizes FedAvg with full client participation ($\alpha = 1$ and $\beta=0$).

\subsection{Biased client sampling ($q_i(n) \neq p_i$)} \label{subsec:client_sampling_biased}
The condition $q_i(n) = p_i$ imposes the design of new client sampling based on the clients data heterogeneity. Nevertheless, we show convergence of biased client samplings where $m$ clients are selected according to a deterministic criterion,  e.g. when selecting the $m$ clients with the highest loss \citep{PowerOfChoice}, or when selecting the $m$ clients with the most available computation resources \citep{sampling_mobile_edge}. 
In this case, $\mathbb{P}( i \in S_n ) = 0/1$, with 1 if a client satisfies the criterion and 0 otherwise. In this case, no weighting scheme can make an optimization round unbiased.
We also have $\mathbb{P}( \{i, j\} \in S_n ) = \mathbb{P}( i \in S_n ) \mathbb{P}( j \in S_n )$, which gives $\alpha = 1$ with $\beta=0$. 
Without modification, this client sampling cannot satisfy the relaxed sufficient conditions of Theorem \ref{theo:relaxed_sufficient_conditions} and thus converges to a suboptimum point. 
This drawback can be mitigated by allocating a part of time in the window $W$ to sample clients according to the criterion, and the rest of the window to consider clients such that $q_i = p_i$ is satisfied over $W$ optimization rounds. 
%Balancing the amount of time a client is sampled over $W$ is another deterministic criteria. 
%We therefore propose here a combination of two deterministic criteria for FL convergence.
By denoting $\epsilon_{\textsc{FedAvg}}$ the convergence guarantees (\refeq{eq:epsilon_sync_FL}), we have
\begin{equation}
\epsilon
= \epsilon_{\textsc{FedAvg}}
+ \Ocal \left(\eta_g\eta_l (W - 1) K\right)
\label{eq:epsilon_sync_bias}
.
\end{equation}
We note that equation (\refeq{eq:epsilon_sync_bias}) provides a looser bound than equation (\refeq{eq:epsilon_sync_FL}) in term of optimization rounds $N$.
Still, this bound is informative and shows that, with minor changes, biased clients sampling based on a deterministic criterion can be proven to converge to the FL optimum.

\section{Additional Experiments}\label{app:sec:experiments}

\begin{figure}
	\centering
	\includegraphics[width=\linewidth]{./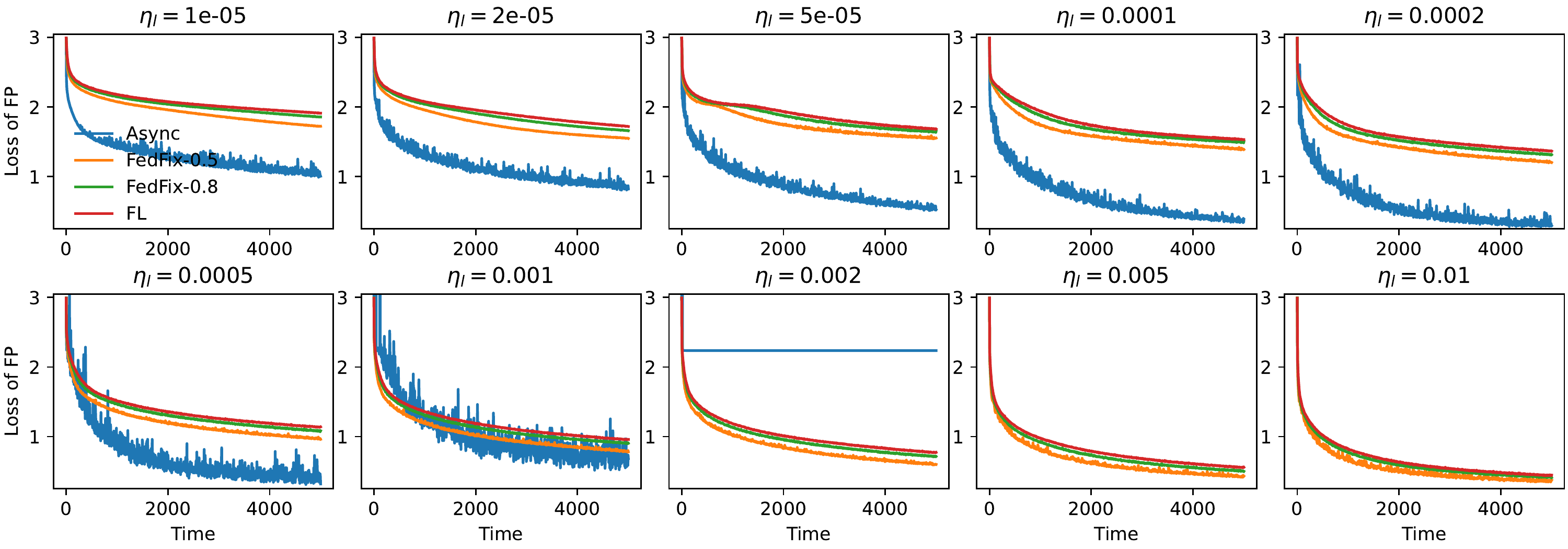}
	\caption{Evolution of federated problem (\ref{eq:original_problem_general}) loss for CIFAR10 and time scenario $F80$ with $M=20$ and $K=10$.}
	\label{app:fig:explore_CIFAR10_20}
\end{figure}

\begin{figure}
	\centering
	\includegraphics[width=\linewidth]{./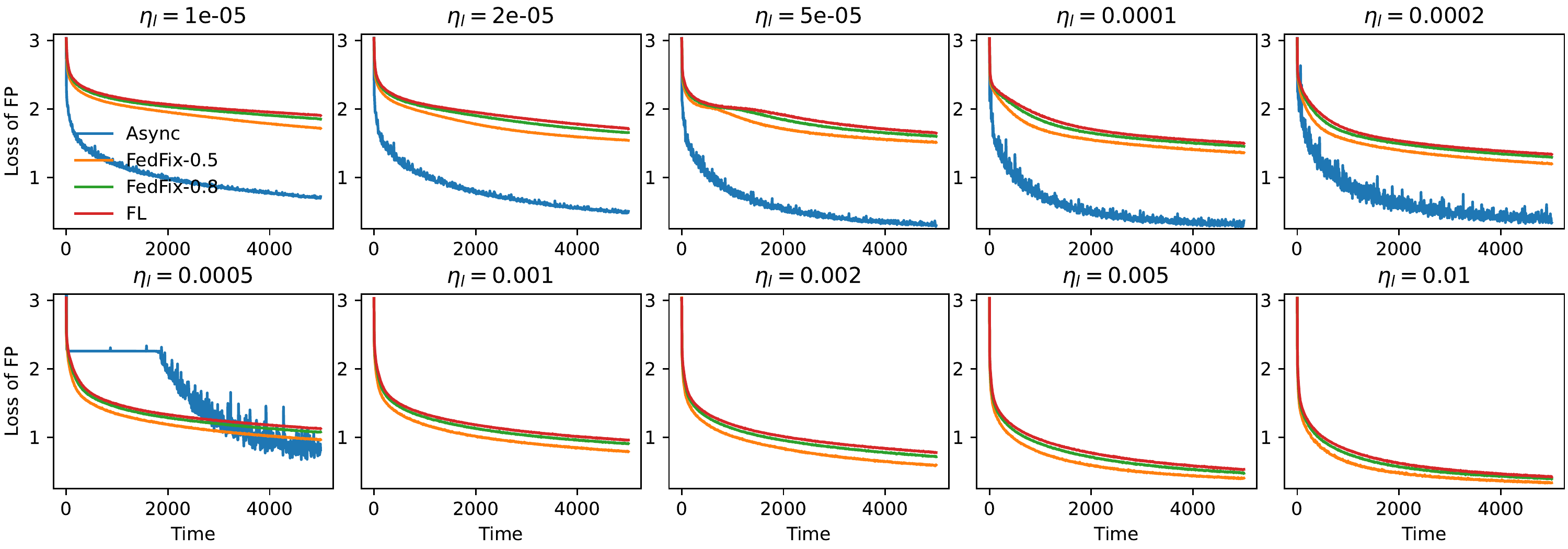}
	\caption{Evolution of federated problem (\ref{eq:original_problem_general}) loss for CIFAR10 and time scenario $F80$ with $M=50$ and $K = 10$.}
	\label{app:fig:explore_CIFAR10_50}
\end{figure}

\begin{figure}
	\centering
	\includegraphics[width=\linewidth]{./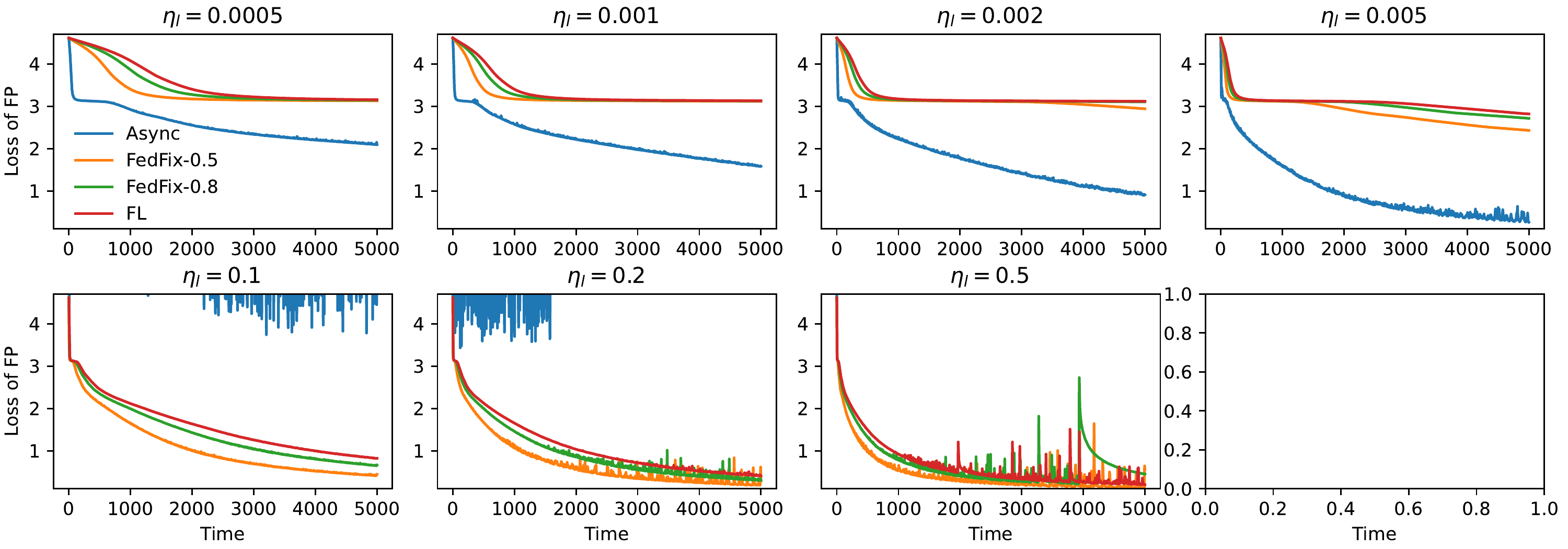}
	\caption{Evolution of federated problem (\ref{eq:original_problem_general}) loss for Shakespeare and time scenario $F80$ with $M=20$ and $K = 10$.}
	\label{app:fig:explore_Shakespeare_20}
\end{figure}

%\input{tex/decomposition_theorem}
%\input{tex/proof_based_Wang}

%\input{./tex/app_convergence_convexity.tex}

%\input{./tex/quadratic_example.tex}
%\input{./tex/old.tex}
%\input{./tex/app_convergence_without_convexity.tex}

% \newpage
% \input{./tex/review_papers.tex}

\subsection*{Acknowledgments}

This work has been supported by the French government, through the 3IA C\^ote d'Azur Investments in the Future project managed by the National Research Agency (ANR) with the reference number ANR-19-P3IA-0002, and by the ANR JCJC project Fed-BioMed 19-CE45-0006-01. The project was also supported by Accenture.
The authors are grateful to the OPAL infrastructure from Université C\^ote d'Azur for providing resources and support.
\vskip 0.2in
\bibliographystyle{apalike}
\bibliography{biblio}

\end{document}